\theoremstyle:=definition,remark,plain\do{%
     \expandafter\g@addto@macro\csname th@\theoremstyle\endcsname{%
        \addtolength\thm@preskip\parskip
     }%
   }
\newtheorem{theorem}{Theorem}[section]
\newtheorem{lemma}[theorem]{Lemma}
\newtheorem{example}[theorem]{Example}
\newtheorem*{remark}{Remark}
\newtheorem{definition}{Definition}[section]
\newtheorem{claim}[theorem]{Claim}
\newtheorem{fact}[theorem]{Fact}
\newcommand{\eps}{\varepsilon}
\newcommand{\A}{\mathcal{A}}
\newcommand{\M}{\mathcal{M}}
\newcommand{\X}{\mathcal{X}}
\newcommand{\cX}{\mathcal{X}}
\newcommand{\cD}{\mathcal{D}}
\newcommand{\cV}{\mathcal{V}}
\newcommand{\Y}{\mathcal{Y}}
\newcommand{\Rej}{\mathsf{RejSamp}}
\renewcommand{\P}[2]{\mathbb{P}_{#1}\left[ #2 \right]}
\newcommand{\E}[2]{\mathbb{E}_{#1}\left[ #2 \right]}
\newcommand{\ind}{\perp\!\!\!\!\perp}
\newcommand{\Lap}[1]{\ensuremath{\mathsf{Lap}}\left( #1 \right)}
\newcommand{\Ber}[1]{\ensuremath{\mathsf{Ber}}\left( #1 \right)}
\newcommand{\RR}[1]{\ensuremath{\mathsf{RR}}\left( #1 \right)}
\newcommand{\mpj}[0]{\mathcal{MPJ}}
\newcommand{\tra}{\Pi}
\newcommand{\trar}{\pi}
\newcommand{\kl}[2]{D_{KL}\left(#1 \mid \mid #2\right)}
\newcommand{\tv}[2]{\Vert #1 - #2 \Vert_{TV}}
\newcommand{\sqhel}[2]{h^2\left(#1,#2\right)}
\newcommand{\hel}[2]{h\left(#1,#2\right)}
\newcommand{\expt}{\ensuremath{\mathsf{Expt}}}
\newcommand{\fexpt}{\ensuremath{\mathsf{FollowExpt}}}
\newcommand{\bexpt}{\ensuremath{\mathsf{BayesExpt}}}
\newcommand{\red}{\ensuremath{\mathsf{Reduction}}}
\newcommand{\F}{\mathcal{F}}
\def \epsilon {\varepsilon} 
\newif\ifcomment
\newcommand{\mj}[1]{\textcolor{magenta}{[MJ: #1]}}
\newcommand{\jm}[1]{\textcolor{cyan}{[JM: #1]}}
\newcommand{\sn}[1]{\textcolor{olive}{[SN: #1]}}
\newcommand{\ar}[1]{\textcolor{violet}{[AR: #1]}}
\newcommand{\mj}[1]{}
\newcommand{\jm}[1]{}
\newcommand{\sn}[1]{}
\newcommand{\ar}[1]{}
\title{The Role of Interactivity in Local Differential Privacy}
\author{Matthew Joseph \thanks{University of Pennsylvania, Computer and Information Science. \href{mailto:majos@cis.upenn.edu}{\texttt{majos@cis.upenn.edu}}} \and Jieming Mao \thanks{University of Pennsylvania, Warren Center. \href{mailto:jiemingm@seas.upenn.edu}{\texttt{jiemingm@seas.upenn.edu}}} \and Seth Neel \thanks{University of Pennsylvania, Wharton Statistics. \href{mailto:sethneel@wharton.upenn.edu}{\texttt{sethneel@wharton.upenn.edu}}} \and Aaron Roth \thanks{University of Pennsylvania, Computer and Information Science. \href{mailto:aaroth@cis.upenn.edu}{\texttt{aaroth@cis.upenn.edu}}}}
\begin{document}

\maketitle
\begin{abstract}
We study the power of interactivity in local differential privacy. First, we focus on the difference between \emph{fully interactive} and \emph{sequentially interactive} protocols. Sequentially interactive protocols may query users adaptively in sequence, but they cannot return to previously queried users. The vast majority of existing lower bounds for local differential privacy apply only to sequentially interactive protocols, and before this paper it was not known whether fully interactive protocols were more powerful.

We resolve this question. First, we classify locally private protocols by their \emph{compositionality}, the multiplicative factor $k \geq 1$ by which the sum of a protocol's single-round privacy parameters exceeds its overall privacy guarantee. We then show how to efficiently transform any fully interactive $k$-compositional protocol into an equivalent sequentially interactive protocol with an $O(k)$ blowup in sample complexity. Next, we show that our reduction is tight by exhibiting a family of problems such that for any $k$, there is a fully interactive $k$-compositional protocol which solves the problem, while no sequentially interactive protocol can solve the problem without at least an $\tilde \Omega(k)$ factor more examples.

We then turn our attention to hypothesis testing problems. We show that for a large class of compound hypothesis testing problems --- which include all simple hypothesis testing problems as a special case --- a simple noninteractive test is optimal among the class of all (possibly fully interactive) tests.
\end{abstract}

\thispagestyle{empty} \setcounter{page}{0}
\clearpage

\section{Introduction}
In the last several years, differential privacy in the \emph{local} model has seen wide adoption in industry, including at Google \cite{EPK14, BEMMR+17}, Apple \cite{A17}, and Microsoft \cite{DKY17}. The choice of adopting the \emph{local} model of differential privacy --- in which privacy protections are added at each individual's device, before data aggregation --- instead of the more powerful \emph{central} model of differential privacy --- in which a trusted intermediary is allowed to first aggregate data before adding privacy protections --- is driven by practical concerns. Local differential privacy frees the data analyst from many of the responsibilities that come with the stewardship of private data, including liability for security breaches, and the legal responsibility to respond to subpoenas for private data, amongst others. However, the local model of differential privacy comes with its own practical difficulties. The most well known of these is the need to have access to a larger number of users than would be necessary in the central model. Another serious obstacle --- the one we study in this paper --- is the need for \emph{interactivity}.

There are two reasons why interactive protocols --- which query users adaptively, as a function of the answers to previous queries --- pose practical difficulties. The first is that communication with user devices is slow: the communication in noninteractive protocols can be fully parallelized, but for interactive protocols, the number of rounds of interactivity becomes a running-time bottleneck. The second is that user devices can go offline or otherwise become unreachable --- and so it may not be possible to return to a previously queried user and pose a new query. The first difficulty motivates the study of noninteractive protocols. The second difficulty motives the study of \emph{sequentially interactive} protocols \cite{DJW13} --- which may pose adaptively chosen queries --- but must not pose more than one query to any user (and so in particular never need to return to a previously queried user).

It has been known since \cite{KLNRS11} that there can be an exponential gap in the sample complexity between noninteractive and interactive protocols in the local model of differential privacy, and that this gap can manifest itself even in natural problems like convex optimization \cite{STU17, U18}. However, it was not known whether the full power of the local model could be realized with only \emph{sequentially interactive} protocols. Almost all known lower bound techniques applied only to either noninteractive or sequentially interactive protocols, but there were no known fully interactive protocols that could circumvent lower bounds for sequential interactivity.

\subsection{Our Results}

We present two kinds of results, relating to the power of sequentially adaptive protocols and non-adaptive protocols respectively. Throughout, we consider protocols operating on datasets that are drawn i.i.d. from some unknown distribution $\cD$, and focus on the \emph{sample complexity} of these protocols: how many users (each corresponding to a sample from $\cD$) are needed in order to solve some problem, defined in terms of $\cD$.
\paragraph{Sequential Interactivity}{We classify locally private protocols in terms of their \emph{compositionality}. Informally, a protocol is $k$-compositional if the privacy costs $\{\epsilon^i_j\}_{j=1}^r$ of the local randomizers executed by any user $i$ over the course of the protocol sum to at most $k\epsilon$, where $\epsilon$ is the overall privacy cost of the protocol: $\sum_j \epsilon^i_j \leq k\epsilon$. When $k = 1$, we say that the protocol is compositional. Compositional protocols capture most of the algorithms studied in the published literature, and in particular, any protocol whose privacy guarantee is proven using the composition theorem for $\epsilon$-differential privacy\footnote{Not every protocol is $1$-compositional: exceptions include RAPPOR~\cite{EPK14} and the evolving data protocol of Joseph et al. \cite{JRUW18}.}.
\begin{enumerate}
\item \textbf{Upper Bounds}: For any (potentially fully interactive) compositional protocol $M$, we give a generic and efficient reduction that compiles it into a sequentially interactive protocol $M'$, with only a constant factor blow-up in privacy guarantees and sample complexity, while preserving (exactly) the distribution on transcripts generated. This in particular implies that up to constant factors, sequentially adaptive compositional protocols are as powerful as fully adaptive compositional protocols.  More generally, our reduction compiles an arbitrary $k$-compositional protocol $M$ into a sequentially interactive protocol $M'$ with the same transcript distribution, and a blowup in sample complexity of $O(k)$.
\item \textbf{Lower Bounds}: We show that our upper bound is tight by proving a separation between the power of sequentially and fully interactive protocols in the local model. In particular, we define a family of problems (Multi-Party Pointer Jumping) such that for any $k$, there is a fully interactive $k$-compositional protocol which can solve the problem given sample complexity $n = n(k)$, but such that no sequentially interactive protocol with the same privacy guarantees can solve the problem with sample complexity $\tilde o(k\cdot n)$. Thus, the sample complexity blowup of our reduction cannot be improved in general.
\end{enumerate}}

\paragraph{Noninteractivity}{
We then turn our attention to the power of noninteractive protocols. We consider a large class of compound hypothesis testing problems --- those such that both the null hypothesis $H_0$ and the alternative hypothesis $H_1$ are closed under mixtures. For every problem in this class, we show that the optimal locally private hypothesis test is noninteractive. We do this by demonstrating the existence of a simple hypothesis test for such problems. We then prove that this test's sample complexity is optimal even among the set of all fully interactive tests by extending information theoretical lower bound techniques developed by~\citet{BGMNW16} and first applied to local privacy by~\citet{JKMW18} and~\citet{DR19} to the fully interactive setting.
}

\subsection{Related Work}
The local model of differential privacy was introduced by ~\citet{DMNS06} and further formalized by \citet{KLNRS11}, who also gave the first separation between noninteractive locally private protocols and interactive locally private protocols. They did so by constructing a problem, Masked Parity, that requires exponentially larger sample complexity without interaction than with interaction.~\citet{DF18} later expanded this result to a different, larger class of problems.~\citet{STU17} proved a similar separation between noninteractive and interactive locally private convex optimization protocols that use neighborhood-based oracles.

Recent work by Acharya et al.~\cite{ACFT19, ACT19} gives a qualitatively different separation between the private-coin and public-coin models of noninteractive local privacy. Informally, the public-coin model allows for an additional ``half step'' of interaction over the private-coin model in the form of coordinated local randomizer choices  across users. In this paper, we use the public-coin model of noninteractivity.

Duchi et al.~\cite{DJW13} introduced the notion of sequential interactivity for local privacy. They also provided the first general techniques for proving lower bounds for sequentially interactive locally private protocols by bounding the KL-divergence between the output distributions of $\eps$-locally private protocols with different input distributions as a function of $\eps$ and the total variation distance between these input distributions. Bassily and Smith~\cite{BS15} and Bun et al.~\cite{BBNS19} later generalized this result to $(\eps,\delta)$-locally private protocols, and Duchi et al.~\cite{DJW18} obtained an analogue of Assouad's method for proving lower bounds for sequentially interactive locally private protocols.

More recently, Duchi and Rogers~\cite{DR19} showed how to combine the above analogue of Assouad's method with techniques from information complexity~\cite{GMN14, BGMNW16} to prove lower bounds for estimation problems that apply to a restricted class of \emph{fully} interactive locally private protocols. A corollary of their lower bounds is that several known \emph{noninteractive} algorithms are optimal minimax estimators within the class they consider. However, their results do not imply any separation between sequential and full interaction. Moreover, our reduction implies that every (arbitrarily interactive) compositional locally private algorithm can be reduced to a sequentially interactive protocol with only constant blowup in sample complexity, and as a result all known lower bounds for sequentially interactive protocols also hold for arbitrary compositional protocols.

\citet{CKMSU18} study simple hypothesis testing under the centralized model of differential privacy, and Theorem 1 of~\citet{DJW13} implies a tight lower bound for sequentially interactive locally private simple hypothesis testing. We extend this lower bound to the fully interactive setting and match it with a noninteractive upper bound for a more general class of compound testing problems that includes simple hypothesis testing as a special case. 

Finally, recent subsequent work~\cite{JMR20} gives a stronger exponential sample complexity separation separation between the sequentially and fully interactive models. It does so through a general connection between communication complexity and sequentially interactive sample complexity. Applying this connection to a communication problem similar to the ``multi-party pointer jumping'' described in Section~\ref{sec:lb} completes the result.
\section{Preliminaries}
\label{sec:prelims}
We begin with the definition of approximate differential privacy. Given data domain $\cX$, two data sets $S, S' \in \X^n$ are \emph{neighbors} (denoted $S \sim S'$) if they differ in at most one coordinate: i.e. if there exists an index $i$ such that for all $j \neq i$, $S_j = S'_j$. A differentially private algorithm must have similar output distributions on all pairs of neighboring datasets.

\begin{definition}[\cite{DMNS06}]
	Let $\eps,\delta \geq 0$. A randomized algorithm $\M:\X^n\rightarrow \mathcal{O}$ is \emph{$(\eps,\delta)$-differentially private} if for every pair of neighboring data sets $S \sim S' \in \X^n$, and every event $\Omega \subseteq \mathcal{O}$ $$\P{\M}{\A(S) \in \Omega} \leq \exp(\eps)\P{\M}{\A(S') \in \Omega} + \delta.$$ When $\delta = 0$, we say that $\M$ satisfies (pure) $\eps$-differential privacy.
\end{definition}

Differential privacy has two nice properties. First, it composes neatly: the composition of algorithms $\M_1, \ldots, \M_n$ that are respectively $(\eps_1, \delta_1), \ldots, (\eps_n, \delta_n)$-differentially private is $(\sum_i \eps _i, \sum_i \delta_i)$-differentially private. For pure differential privacy, this is tight in general. Second, differential privacy is resilient to post-processing: given an $(\eps,\delta)$-differentially private $\M$ and any function $f$, $f \circ \M$ is still $(\eps,\delta)$-differentially private (see Appendix~\ref{subsec:dp_props} for details). For brevity, we often abbreviate ``differential privacy'' as ``privacy''.

As defined, the constraint of differential privacy is on the \emph{output} of an algorithm $\M$, not on its internal workings. Hence, it implicitly assumes a trusted data curator, who has access to the entire raw dataset. This is sometimes referred to as differential privacy in the central model. In contrast, this paper focuses on the more restrictive \emph{local} model~\cite{DMNS06} of differential privacy. In the local model, the private computation is an interaction between $n$ users, each of whom hold exactly one dataset record, and is coordinated by a protocol $\A$. We assume throughout this paper that each user's datum is drawn i.i.d. from some unknown distribution: $x_i \sim_{iid} \cD$\footnote{Roughly speaking, this corresponds to a setting in which users are ``symmetric'' and in which nothing differentiates them a priori. All of our results generalize to the setting in which there are different ``types'' of users, known to the protocol up front.}. Informally, at each round $t$ of the interaction, a protocol $\A$ observes the transcript of interactions so far, selects a user, and assigns the user a randomizer. The user then applies the randomizer to their datum, using fresh randomness for each application, and publishes the output. In turn, the protocol observes the updated transcript, selects a new user-randomizer pair, and the process continues. We define these terms precisely below.

\begin{definition}
	An \emph{$(\eps,\delta)$-randomizer} $R \colon X \to Y$ is an $(\eps,\delta)$-differentially private function taking a single data point as input.
\end{definition}

A simple, canonical, and useful randomizer is \emph{randomized response}~\cite{W65, DMNS06}.

\begin{example}[Randomized Response]
\label{ex:rr}
	Given data universe $\cX = [k]$ and datum $x_i \in \cX$, $\eps$-randomizer $\RR{x_i, \eps}$ outputs $x_i$ with probability $\tfrac{e^\eps}{e^\eps + k-1}$ and otherwise outputs a uniformly random element of $\cX - \{x_i\}$.
\end{example}

Next, we formally define transcripts and protocols.

\begin{definition}
	A \emph{transcript} $\trar$ is a vector consisting of 5-tuples $(i^t, R_t, \eps_t, \delta_t, y_t)$ --- encoding the user chosen, randomizer assigned, randomizer privacy parameters, and randomized output produced --- for each round $t$. $\trar_{<t}$ denotes the transcript prefix before round $t$. Letting $S_\pi$ denote the collection of all transcripts and $S_R$ the collection of all randomizers, a \emph{protocol} is a function $\A \colon S_\pi \to \left([n] \times S_R \times \mathbb{R}_{\geq 0} \times \mathbb{R}_{\geq 0}\right) \cup \{\perp\}$ mapping transcripts to users, randomizers, and randomizer privacy parameters ($\perp$ is a special character indicating a protocol halt).
\end{definition}

The transcript that results from running a locally private computation will often be post-processed to compute some useful function of the data. However, the privacy guarantee must hold even if the entire transcript is observed. Hence, in this paper we abstract away the task that the computation is intended to solve, and view the output of a locally private computation as simply the transcript it generates.

To clarify the role of interaction in these private computations -- especially when analyzing reductions between computations with different kinds of interactivity -- it is often useful to speak separately of protocols and \emph{experiments}. While the protocol $\A$ is a function mapping transcripts to users and randomizers, the experiment is the interactive process that maps a protocol and collection of users drawn from a distribution $\cD$ to a finished transcript. In the simplest case, \fexpt~(Algorithm~\ref{alg:fexpt}), the experiment exactly follows the outputs of its protocol.

\begin{algorithm}
\caption{}\label{alg:fexpt}
\begin{algorithmic}[1]
\Procedure{$\fexpt$}{$\A, \cD, n$}
\State Draw $n$ users $\{x_i\} \sim \cD^{n}$
\State Initialize transcript $\trar_0 \gets \emptyset$
\For{$t = 1, 2, \ldots $}
	\If{$\A(\trar_{<t}) = \perp$}
		\State Output transcript $\trar_{<t}$
	\Else
		\State $(i^t, R_t, \eps_t, \delta_t) \gets \A(\trar_{<t})$
		\State User $i^t$ publishes $y_t \sim R_t(x_{i^t}, \eps_t, \delta_t)$
	\EndIf
\EndFor
\EndProcedure
\end{algorithmic}
\end{algorithm}

However, experiments may in general heed, modify, or ignore the outputs of their input protocol. We delineate the privacy characteristics of experiment-protocol pairs and protocols in isolation below. Here and throughout, the dataset is not viewed as an input to an experiment, but is drawn from $\cD$ by the experiment-protocol pair. Drawing a fresh user $\sim \cD$ corresponds to adding an additional data point, and so the sample complexity of an experiment-protocol pair is the number of draws from $\cD$ over the run of the algorithm. For the simple algorithm $\fexpt(\A)$ defined above, the sample complexity is always $n$. Finally we remark that although the distribution $\cD$ and the sample complexity $n$ are inputs to the experiment, for brevity we typically omit them and focus on the protocol $\A$; e.g. writing $\expt(\A)$ rather than $\expt(\A, \cD, n)$.
\begin{definition}
	Experiment-protocol pair $\expt(\A)$ satisfies \emph{$(\eps,\delta)$-local differential privacy} (LDP) if it is $(\eps,\delta)$-differentially private in its transcript outputs. A protocol $\A$ satisfies $(\eps,\delta)$-local differential privacy (LDP) if experiment-protocol pair $\fexpt (\A)$ is $(\eps,\delta)$-locally differentially private.
\end{definition}

Experiment-protocol pairs can be, by increasing order of generality, \emph{noninteractive}, \emph{sequentially interactive}, and \emph{fully interactive}.

\begin{definition}
	An experiment-protocol pair $\expt(\A)$ is \emph{noninteractive} if, at each round $t$, as random variables, $(i^t, R_t, \eps_t, \delta_t) \ind \Pi_{<t} \mid t$.
\end{definition}

In other words, noninteractivity forces nonadaptivity, and all user-randomizer assignments are made before the experiment begins. In contrast, in sequentially interactive experiment-protocol pairs, users may be queried adaptively, but only once.

\begin{definition}
	An experiment-protocol pair $\expt(\A)$ is \emph{sequentially interactive} if, at each round $t$, $i^t \neq i^{t-1}, \ldots, i^1$.
\end{definition}

Finally, in in fully interactive experiments, the experiment-protocol may make user-randomizer assignments adaptively, and each user may receive arbitrarily many randomizer assignments. Along the same lines, we say a protocol $\A$ is noninteractive (respectively sequentially and fully interactive) if $\fexpt(\A)$ is a noninteractive (respectively sequentially and fully interactive) experiment-protocol pair. This experiment-protocol formalism will be useful in constructing the full-to-sequential reduction in Section~\ref{sec:simulation}; elsewhere, we typically elide the distinction and simply reason about $\fexpt(\A)$ as ``protocol $\A$''. For any locally private protocol, we refer to the number of users $n$ that it queries as its \emph{sample complexity}. For fully interactive protocols, the total number of rounds --- which we denote by $T$ --- may greatly exceed $n$. In contrast, for both non-interactive and sequentially interactive protocols, the number of rounds $T \leq n$.

At each round $t$ of a fully interactive $\epsilon$-locally private protocol, we know that $\epsilon_t \leq \epsilon$. For many protocols, we can say more about how the $\epsilon_t$ parameters relate to $\epsilon$:

\begin{definition}
	Consider an $\epsilon$-locally private protocol $\A$. Let $\{\epsilon_t\}_{t=1}^T$ denote the minimal privacy parameters of the local randomizers $R_t$ selected at round $t$ considered as random variables. We say the protocol $\A$ is \emph{$k$-compositionally private} if for all $i \in [n]$, with probability $1$ over the randomness of the transcript, $$\sum_{t \colon i_t = i}\epsilon_t \leq k\epsilon.$$ If $k = 1$, a protocol is simply \emph{compositional private}.
\end{definition}
\begin{remark}
In fact, all of our results hold without modification even under the weaker condition of \emph{average} $k$-compositionality. For a protocol $\A$ with sample complexity $n$, $\A$ is $k$-compositional on average if
$$\sum_{t} \epsilon_t \leq k\epsilon n.$$ For brevity, we often shorthand ``$k$-compositionally private'' as simply ``$k$-compositional''.
\end{remark}

Informally, a compositionally private protocol is one in which the privacy parameters for each user ``just add up.'' Almost every locally private protocol studied in the literature (and in particular, every protocol whose privacy analysis follows from the composition theorem for pure differential privacy) is compositionally private\footnote{This simple compositionality applies even if $\{\eps_t\}_{t=1}^T$ are chosen adaptively in each round (see Theorem 3.6 in~\citet{RRUV16}).}. They are so ubiquitous that it is tempting to guess that all $(\epsilon,0)$-locally private protocols are compositional. However, this is false: for every $k$ and $\epsilon$, there are $\epsilon$-locally private protocols that fail to be $k$-compositionally private. The following example shows that by taking advantage of special structure in the data domain and choice of randomizers it is possible to achieve $(\eps,0)$-local privacy, even as the sum of the round-by-round privacy parameters greatly exceeds $\epsilon$.

\begin{example}[Informal]
\label{ex:comp}
	Let the data universe $\cX$ consist of the canonical basis vectors $e_1, \ldots, e_d \in \{0,1\}^{d}$, and let each $x_1, \ldots, x_n$ be an arbitrary element of $\cX$. Consider the $d$ round protocol where, for each round $j \in [d]$, every user $i$ with $x_i = e_j$ outputs a sample from $\RR{1, \eps}$, and the remaining users output a sample from $\Ber{0.5}$. As $\mathsf{RR}(\cdot,\eps)$ is an $\eps$-local randomizer which each user employs only once, and remaining outputs are data-independent, this protocol is $\eps$-locally private.  But the protocol fails to be  $k$-compositionally private for $k < d/2$.
\end{example}

The preceding example demonstrates that the careful choice of local randomizers based on the data universe structure can strongly violate compositional privacy. Seen another way, when multiple queries are asked of the same user, there are situations in which the correlation in privatized responses induced by being run on the same data element can lead to arbitrarily sub-compositional privacy costs. The main result of our paper is that the additional power of a fully interactive protocol, on top of sequential interactivity, is characterized by its compositionality. 
\section{From Full to Sequential Interactivity} \label{sec:simulation}
We show that any $(\eps, 0)$-locally private compositional protocol is ``equivalent" to a sequentially interactive protocol with sample complexity that is larger by only a small constant factor. By equivalent, we mean that for any $(\eps, 0)$-locally private compositional protocol, we can exhibit a sequentially interactive $(3\eps, 0)$-locally differentially private protocol with only a constant factor larger sample complexity that induces exactly the same distribution on transcripts. Thus for any task for which the original protocol was useful, the sequentially interactive protocol is just as useful\footnote{Formally, for any loss function defined over a data distribution $\cD$ and a transcript $\tra$, when data points $x_i$ are drawn i.i.d. from $\cD$, the two protocols induce exactly the same distribution over transcripts, and hence the same distribution over losses. Once one restricts attention to locally private protocols with privacy parameter $\epsilon \leq 1$ that take as input points drawn i.i.d. from a distribution $\cD$, it is without loss of generality to measure the success or failure of a protocol with respect to the underlying distribution $\cD$, rather than with respect to the sample. This is because such protocols are $\approx \epsilon/\sqrt{n}$ differentially private when viewed in the central model of differential privacy (in which the input may be permuted before used in the protocol) \cite{EFMRTT18,BBGN19}, and hence the distribution on transcripts would be almost unchanged even if the entire dataset was \emph{resampled} i.i.d. from $\cD$. \cite{CLNRW16,NRW18}. Thus, for such protocols, the transcript distribution is governed by the data distribution $\cD$, but not (significantly) by the sample.}.

More generally, we give a generic reduction under which any $(\eps, 0)$-private $k$-compositional protocol can be compiled into a sequentially interactive protocol with an $e^{\epsilon}k$-factor increase in sample complexity.

Our proof is constructive; given an arbitrary $k$-compositional $(\eps, 0)$-locally differentially private protocol we show how to simulate it using a sequentially interactive protocol that induces the same joint distribution on transcripts.  The ``simulation'' is driven by three main ideas:
\begin{enumerate}
\item \textbf{Bayesian Resampling}: The dataset used in a locally differentially private protocol is static once the protocol begins. However, we consider the following thought experiment: each user's datum is \emph{resampled} from the posterior distribution on their datum, conditioned on the transcript thus far, before every round in which they are given a local randomizer. We observe that the mechanism from this thought experiment induces exactly the same joint distribution on datasets and transcripts upon completion of the mechanism. Thus, for the remainder of the argument, we can seek to simulate this ``Bayesian Resampling'' version of the mechanism.
\item \textbf{Private Rejection Sampling}:  Because of the local differential privacy guarantee, at any step of the algorithm, the posterior on a user's datum conditioned on the private transcript generated so far must be close to their prior. Thus, it is possible to sample from this posterior distribution by first sampling from the prior, and then applying a rejection sampling step that is both a) likely to succeed, and b) differentially private. Sampling from the prior simply corresponds to querying a new user. At first glance, applying rejection sampling as needed seems to require information that the users will not have available, because they do not know the underlying data distribution $\cD$. But an application of Bayes rule, together with a data independent rescaling can be used to re-write the required rejection probability using only quantities that each user can compute from her own data point and the transcript. A similar use of rejection sampling appears in the simulation of locally private algorithms by statistical query algorithms given by~\citet{KLNRS11}.

\item \textbf{Data Independent Decomposition of Local Randomizers}: The two ideas above suffice to transform a fully interactive mechanism into a sequentially interactive mechanism, with a blowup in sample complexity from $n$ to $T$ (because in the sequentially interactive protocol that results from rejection sampling, each user applies only one local randomizer instead of an average of $T/n$). However, we generalize a recent result of \cite{BBGN19} to show that any $\epsilon_i$-private local randomizer can be described as a mixture between a \emph{data independent} distribution and an $(\eps, 0)$-private local randomizer for any $\epsilon > \epsilon_i$, where the weight on the data independent distribution is roughly (for small constant $\epsilon$) $1 - \epsilon_i/\epsilon$. Thus we can simulate each local randomizer while only needing to query a new user with probability $\epsilon_i/\epsilon$. As a result, for any compositional mechanism, 1 user in the sequential setting suffices (in expectation) to simulate the entire transcript of a single user in the fully interactive setting. More generally, if the mechanism is $k$-compositional, then $k$ users are required in expectation to carry out the simulation. The realized sample complexity concentrates sharply around its expectation.
\end{enumerate}

\subsection{Step 1: A Bayesian Thought Experiment}
The first step of our construction is to observe that for any locally private protocol $\A$, $\bexpt(\A)$ induces exactly the same distribution over transcripts as $\fexpt(\A)$. The difference is that in $\bexpt(\A)$, between each interaction with a given user $i$, their datum $x_i$ is \emph{resampled} from the posterior distribution on user $i$'s data conditioned on the portion of the transcript generated thus far. We prove in Lemma~\ref{lem:bayes} that the two experiments produce exactly the same transcript distribution. Once we establish this, our goal will be to simulate the transcript distribution induced by $\bexpt(\A)$.

\begin{algorithm}
\caption{}\label{alg:fipbayes}
\begin{algorithmic}[1]
\Procedure{$\bexpt$}{$\A, \cD, n$ }
\State Initialize transcript $\pi_0 = \emptyset$
\For{$t = 1,2, \ldots$}
	\If{$\A(\trar_{<t}) = \perp$}
		\State Output transcript $\trar_{<t}$
	\Else
		\State $(i^{t}, R_t, \eps_t,\delta_t) \gets \A(\trar_{<t})$
		\State Redraw $x_{i^t} \sim Q_{i,t}$ \Comment{$Q_{i,t}$ is the posterior on $x_{i^t}$ given $\trar_{ < t}$}
		\State User $i^{t}$ publishes $y_t \sim R_t(x_{i^t})$		
	\EndIf
\EndFor
\EndProcedure
\end{algorithmic}
\end{algorithm}

Note that when $i^{t}$ is selected for the first time $Q_{i,t} = \cD$, and so the sample complexity (e.g. number of draws from $\cD$) of $\bexpt(\A)$ is bounded by $n$.

\begin{lemma}
\label{lem:bayes}
	For any protocol $\A$, Let $\Pi^f$ be the transcript random variable that is output by $\fexpt(\A)$ and let $\Pi^b$ be the transcript output by $\bexpt(\A)$. Then $$ \Pi^f \stackrel{d}{=} \Pi^b$$ where $\stackrel{d}{=}$ denotes equality of distributions.
\end{lemma}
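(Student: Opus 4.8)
\textbf{Overall approach.} The plan is to couple the two experiments round-by-round and argue by induction on the round index $t$ that the joint distribution of $(\Pi_{<t}, x_{i^t})$ — the transcript prefix together with the *current* value of the datum held by the user selected at round $t$ — is the same in $\fexpt(\A)$ and $\bexpt(\A)$. Since in both experiments $y_t$ is produced by applying the same randomizer $R_t$ (determined by $\A(\Pi_{<t})$, hence a deterministic function of the prefix) to $x_{i^t}$, equality of the joint law of $(\Pi_{<t}, x_{i^t})$ immediately propagates to equality of the law of $\Pi_{<t+1} = (\Pi_{<t}, (i^t, R_t, \eps_t, \delta_t, y_t))$. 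Taking $t$ up to the (possibly random, but transcript-determined) halting round then gives $\Pi^f \stackrel{d}{=} \Pi^b$.

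\textbf{Key steps.} First I would set up notation: let $Q_{i,t}$ denote the posterior law of user $i$'s datum given $\Pi_{<t}$, under the data-generating process where $\{x_j\}\sim\cD^n$ i.i.d.\ and the transcript is generated by $\fexpt(\A)$; note this is well-defined and is exactly the distribution used in line~8 of $\bexpt$. Second, base case: at the first round, $\Pi_{<1}=\emptyset$, $i^1$ is deterministic, and in both experiments $x_{i^1}\sim\cD = Q_{i,1}$, so the joint laws agree. Third, the inductive step: assume the law of $\Pi_{<t}$ agrees across the two experiments. Condition on a fixed prefix value $\pi_{<t}$; then $(i^t,R_t,\eps_t,\delta_t)=\A(\pi_{<t})$ is the same in both. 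In $\fexpt$, the conditional law of $x_{i^t}$ given $\Pi_{<t}=\pi_{<t}$ is \emph{by definition} $Q_{i^t,t}(\,\cdot\mid\pi_{<t})$. In $\bexpt$, line~8 \emph{explicitly resamples} $x_{i^t}$ from $Q_{i^t,t}(\,\cdot\mid\pi_{<t})$, independently of everything else given the prefix — so the conditional law of $x_{i^t}$ is the same $Q_{i^t,t}$. Here I must also check that the resampling in $\bexpt$ does not corrupt the inductive hypothesis's use of the same posterior $Q$: the point is that in $\fexpt$ the variable $x_{i^t}$ literally has conditional law $Q_{i^t,t}$ given the prefix (this is the definition of $Q$), so the resampling step in $\bexpt$ is a no-op in distribution — it replaces a draw from $Q_{i^t,t}$ with a fresh independent draw from the same $Q_{i^t,t}$. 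Fourth, conclude: conditionally on $\Pi_{<t}=\pi_{<t}$, the pair $(R_t, x_{i^t})$ has the same law in both experiments, hence so does $y_t \sim R_t(x_{i^t})$ (fresh randomness in $R_t$), hence so does $\Pi_{<t+1}$; integrating over $\pi_{<t}$ using the inductive hypothesis gives equality of the law of $\Pi_{<t+1}$. The halting condition $\A(\Pi_{<t})=\perp$ depends only on the prefix, so the stopped transcripts have the same law.

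\textbf{Main obstacle.} The subtle point — and the one I would be careful to write out precisely — is the consistency of the posterior $Q_{i,t}$ \emph{across rounds} in $\bexpt$: because users are resampled multiple times, one might worry that after a resampling at some round $s<t$, the ``true'' conditional law of $x_{i^t}$ given $\Pi_{<t}$ in $\bexpt$ differs from the $Q_{i^t,t}$ computed from the $\fexpt$ process. The resolution is exactly the inductive claim: if the joint law of $(\Pi_{<t}, \{x_j\}_{j})$ agrees in the two experiments up through round $t$, then the relevant conditional is the same object, so there is no inconsistency — but one should state the induction on the \emph{joint} law of transcript-and-current-data (not just the transcript) to make this airtight, and observe that resampling $x_{i^t}$ from its own conditional given the transcript preserves that joint law (a standard fact: redrawing a coordinate from its conditional distribution given a sigma-field leaves the joint distribution unchanged). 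I would also remark, as the excerpt notes, that since $Q_{i,t}=\cD$ whenever $i$ is queried for the first time, the sample-complexity bookkeeping is unaffected, so $\bexpt(\A)$ still draws only $n$ samples.
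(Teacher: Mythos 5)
Your proposal is correct and follows essentially the same route as the paper's proof: induction on rounds, factoring the joint law of the transcript into the prefix times the conditional, and observing that $\bexpt$'s resampling of $x_{i^t}$ from the posterior $Q_{i,t}$ is a no-op in distribution because that posterior is by definition the conditional law of $x_{i^t}$ given the prefix under $\fexpt$. Your suggestion to carry the joint law of the transcript together with the current data through the induction is a reasonable tightening of the consistency point that the paper's argument handles more tersely, but it is not a different approach.
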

\begin{proof}
	We show this by (strong) induction on rounds in the transcript. The base case $t=1$ is immediate: for any index $i^{1}$ selected by $\bexpt(\A)$, the posterior distribution $Q_{i,1} $ is the same as the prior $\cD$.
	
	Now suppose it is true up to time $t+1$, i.e. $\Pi^f_{< t+1} \stackrel{d}{=} \Pi^b_{< t+1}$. Then since the joint distributions $\Pi_{ < t +2}$ factor as $(i^{t+1}, R_{t+1}, \epsilon_{t+1}, \delta_{t+1}, Y_{t+1}|\Pi_{ < t +1}) \cdot \Pi_{ < t +1}$, it suffices to show that the conditional distributions on $i^{t+1}, R_{t+1}, \epsilon_{t+1}, \delta_{t+1}, Y_{t+1}|\Pi_{ < t +1}$ coincide. Moreover, the conditional distribution on $i^{t+1}, R_{t+1}, \epsilon_{t+1}, \delta_{t+1}|\Pi_{ < t +1}$ is given by $\A(\Pi_{<{t+1}})$ under both algorithms, and so it remains only to show that $Y_{t+1}|i^{t+1}, R_{t+1}, \epsilon_{t+1}, \delta_{t+1},\Pi_{ < t +1}$ is the same distribution under both algorithms.
Under $\fexpt(\A)$,
	$$Y_{t+1}|i^{t+1}, R_{t+1}, \epsilon_{t+1}, \delta_{t+1},\Pi_{ < t +1} \sim R_{t+1}(x_{i^{t+1}}, \epsilon_{t+1}, \delta_{t+1}|\Pi_{ < t +1}) \stackrel{d}{=} R_{t+1}(u, \epsilon_{t+1}, \delta_{t+1}),$$
	where $u \stackrel{d}{=} x_{i^{t+1}}|\Pi_{< t+1} \stackrel{d}{=} Q_{i, t+1}$ by definition, and we use the fact that after conditioning on $\Pi_{<t+1}$, $x_{i^{t+1}}$ is independent of $\eps_{t+1}$ and $\delta_{t+1}$. Redrawing $u \sim Q_{i, t+1}$ does not change the marginal distribution of $R_{t+1}(u, \epsilon_{t+1}, \delta_{t+1})$, which is exactly the distribution under $\bexpt(\A)$, as desired.
\end{proof}

\subsection{Step 2: Sequential Simulation of Algorithm \ref{alg:fipbayes} via Rejection Sampling}
\label{subsec:rej}
We now show how to replace step $8$ in Algorithm \ref{alg:fipbayes} by selecting a new datapoint (drawn from $\cD$) at every round and using rejection sampling to simulate a draw from $Q_{i,t}$. The result is a sequentially interactive mechanism that preserves the transcript distribution of Algorithm \ref{alg:fipbayes} (and, by Lemma \ref{lem:bayes}, of Algorithm \ref{alg:fexpt}), albeit one with a potentially very large increase in sample complexity (from $n$ to $T$). The rejection sampling step increases the privacy cost of the protocol by at most a factor of $2$.

We first review why it is non-obvious that rejection sampling can be performed in this setting. We want to sample from the target distribution $Q_{i,t}$, the posterior $x_i^{t}|\pi_{< t}$, using samples from the proposal distribution $\cD$. Let $p_\pi$ denote the density function of $Q_{i,t}$ and let $p$ denote the density function of $\cD$. In rejection sampling, we would typically sample $u \sim \cD$, and with probability $\propto \frac{p_\pi(u)}{p(u)}$ we would accept $u$ as a sample drawn from $Q_{i,t}$, or else redraw another $u$ and continue.

This is not immediately possible in our setting, since the individuals (who must perform the rejection sampling computation) do not know the prior density $p$ and hence do not know the posterior $p_\pi$. As a result, they cannot compute either the numerator or denominator of the expression for the acceptance probability. We solve this problem by using the fact that we are simulating a posterior with a prior distribution, and formulate the rejection sampling probability ratio as a quantity depending only on a user's private data point and the transcript. Users may then compute this quantity themselves.

To define our transformed rejection sampler we set up some new notation: given a user $i$ and round $t$, let $\pi_{< t, i}$ denote the subset of the realized transcript up to time $t$ that corresponds to user $i$'s data, i.e. $\pi_{< t, i} = \{(i^{t'}, R_{t'}, \eps_{t'}, \delta_{t'}, y_{t'}): t' < t, i^{t'}= i\}$. Let $\P{x_i}{\pi_{< t, i}}$ denote the conditional probability of the messages corresponding to user $i$ given the choices of privacy parameters and randomizers up to time $t$: $$\P{}{\pi_{< t, i}} = \prod_{t' \colon i^{t'} = i}\P{R_{t'}}{R_{t'}(x_i, \epsilon_{t'}, \delta_{t'}) = y_{t'}}.$$ Using this notation, we define our rejection sampling procedure $\Rej$ in Algorithm~\ref{alg:RS}.

\begin{algorithm}
\caption{Rejection Sampling}\label{alg:RS}
\begin{algorithmic}[1]
\Procedure{$\Rej$}{$i, \pi_{ < t}, \eps, \eps_t, R_t(\cdot), \cD$} \Comment{Publishing $\Pi_{ < t}$ is $(\eps, 0)$-private}
\State Initialize indicator $\mathsf{accept} \gets 0$
\While{$\mathsf{accept} = 0$}
	\State Draw a new user $x \sim \cD$
	\State User $x$ computes $p_x \gets \frac{\P{x}{\pi_{< t, i}}}{\max_{x^*}\P{x^*}{\pi_{< t, i}}}$
	\State User $x$ publishes $\mathsf{accept} \sim \Ber{p_x/2}$
	\If{$\mathsf{accept} = 1$}
		\State User $x$ outputs $Y_t' \sim R_t(x, \eps_t)$
	\EndIf
\EndWhile
\EndProcedure
\end{algorithmic}
\end{algorithm}

We now prove that $\Rej$ is private and does not need to sample many users.

\begin{lemma}
\label{lem:rs}
	Let $Y_t  \stackrel{d}{=} R_t(x'),$ where $x' \sim Q_{i,t}$ and let $Y_t'$ be defined by the rejection sampling algorithm $\Rej$ above. Let the sample complexity $N$ be the total number of new users $x$ drawn in step 4 of $\Rej$. Then $\Rej$ is $(\eps + \eps_t, 0)$-locally private, $Y_t \stackrel{d}{=} Y_t'$, and $\mathbb{E}[N] \leq 2e^{\eps}$.
\end{lemma}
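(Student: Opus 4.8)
The plan is to reduce all three conclusions --- privacy, correctness ($Y_t \stackrel{d}{=} Y_t'$), and the bound $\mathbb{E}[N] \le 2e^\eps$ --- to a single structural fact: for the realized transcript prefix $\pi_{<t}$ and every $x \in \cX$, the acceptance weight $p_x = \P{x}{\pi_{< t, i}}/\max_{x^*}\P{x^*}{\pi_{< t, i}}$ satisfies $e^{-\eps} \le p_x \le 1$. The upper bound is immediate from the definition of the maximum. For the lower bound I would use --- crucially --- that the \emph{whole} protocol $\A$ is $(\eps,0)$-locally private, not merely that each $R_{t'}$ is $\eps_{t'}$-private: writing the probability that $\fexpt(\A)$ produces the prefix $\pi_{<t}$ on a dataset $S$ as a product $\prod_{i'}\P{x_{i'}}{\pi_{< t, i'}}$ over users (the user, randomizer, and parameter choices at each round being deterministic functions of the prefix), and applying $(\eps,0)$-differential privacy to two datasets that differ only in coordinate $i$, every factor except the $i$-th cancels, leaving $\P{x}{\pi_{< t, i}} \le e^\eps \P{x'}{\pi_{< t, i}}$ for all $x, x' \in \cX$; taking $x'$ to be the maximizer gives $p_x \ge e^{-\eps}$. (Without invoking the global guarantee, the crude product of per-round bounds would only give $p_x \ge e^{-k\eps}$, which is exactly the gap between a $2e^\eps$- and a $2e^{k\eps}$-factor blowup.)

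Given this fact, correctness is the textbook rejection-sampling argument. By Bayes' rule --- marginalizing over the other users' data, which are i.i.d.\ $\cD$ --- the posterior density is $p_\pi(x) \propto p(x)\cdot\P{x}{\pi_{< t, i}}$, since conditioned on the prefix only user $i$'s own messages $\pi_{< t, i}$ depend on $x_i$. So proposing $x \sim \cD$ and accepting with probability $p_x/2 \propto \P{x}{\pi_{< t, i}}$ yields, conditioned on acceptance, a sample distributed exactly as $Q_{i,t}$; the constants $\max_{x^*}\P{x^*}{\pi_{< t, i}}$ and $\tfrac12$ merely rescale all acceptance probabilities by the same factor and so do not change the conditional law. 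Hence $Y_t' \sim R_t(x,\eps_t)$ with $x \sim Q_{i,t}$, which is the definition of $Y_t$. For the sample complexity, $N$ is geometric with success probability $\E{x \sim \cD}{p_x/2}$, and since $\E{x\sim\cD}{\P{x}{\pi_{< t, i}}} = \P{}{\pi_{< t, i}} \ge e^{-\eps}\max_{x^*}\P{x^*}{\pi_{< t, i}}$ by the structural fact, this success probability is at least $e^{-\eps}/2$, giving $\mathbb{E}[N] \le 2e^\eps$.

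For privacy I would examine the single message a freshly drawn user $x$ contributes to the transcript, conditioned on being reached: either the bit $\mathsf{accept} = 0$, with probability $1 - p_x/2$, or the bit $\mathsf{accept} = 1$ together with an output $Y_t' = y$, with probability $\tfrac{p_x}{2}\P{R_t}{R_t(x,\eps_t) = y}$. Comparing two data values: the $\mathsf{accept}=1$ outcome has likelihood ratio $\tfrac{p_x}{p_{x'}}\cdot\tfrac{\P{R_t}{R_t(x) = y}}{\P{R_t}{R_t(x') = y}} \le e^\eps \cdot e^{\eps_t}$ by the structural fact and the $\eps_t$-privacy of $R_t$; the $\mathsf{accept}=0$ outcome has ratio $\tfrac{1 - p_x/2}{1 - p_{x'}/2} \le \tfrac{1 - e^{-\eps}/2}{1/2} = 2 - e^{-\eps} \le e^\eps$, using $p_x \ge e^{-\eps}$, $p_{x'} \le 1$, and $e^\eps + e^{-\eps} \ge 2$. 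So each fresh user's contribution is $(\eps + \eps_t, 0)$-differentially private in their datum, and since $\Rej$ queries each fresh user exactly once, $\Rej$ is $(\eps + \eps_t, 0)$-locally private. This is also where the factor $\tfrac12$ earns its keep: it keeps $1 - p_x/2 \ge \tfrac12$ bounded away from $0$, which the $\mathsf{accept}=0$ ratio needs, whereas with raw acceptance probability $p_x$ the denominator $1 - p_{x'}$ would vanish for the maximizing $x'$.

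I expect the main obstacle to be the first step: recognizing that it is $(\eps,0)$-local privacy of the full protocol --- rather than the per-round parameters $\eps_{t'}$ --- that must be invoked, and carefully justifying the product factorization of the prefix probability so that the cancellation is valid (in particular, noting that for the realized transcript all the relevant factors are strictly positive). Once the bound $p_x \in [e^{-\eps},1]$ is in hand, everything downstream is routine.
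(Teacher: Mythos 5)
Your proposal is correct and follows essentially the same route as the paper: the lower bound $p_x \ge e^{-\eps}$ is derived from the $(\eps,0)$-privacy of the published transcript prefix (the paper phrases this as post-processing of the $(\eps,0)$-private $\pi_{<t}$, which your factorization-and-cancellation argument just makes explicit), correctness is the same Bayes'-rule rejection-sampling computation, and privacy is the same bound on the accept/reject bit composed with the $\eps_t$-private randomizer. The only differences are presentational.
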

\begin{proof}[Proof of Lemma~\ref{lem:rs}]
\begin{claim}
	$\Rej$ is $(\eps+ \eps_t)$-locally private.
\end{claim}
 We first show that publishing a draw from $\Ber{p_x/2}$ is $(\eps, 0)$-locally private. By assumption publishing $\pi_{< t}$, and hence publishing $\pi_{< t, i}$ (by post-processing), $(\eps, 0)$-private. Hence for any $x \in \cX$ $$\P{}{1 \mid x} = p_x/2 = \frac{\P{x}{\pi_{< t, i}}}{2\max_{x^*}\P{x^*}{\pi_{< t, i}}} \in [1/(2e^{\eps}),1/2].$$ Therefore for any $x, x'$, $\P{}{1 \mid x} \leq e^{\eps} \P{}{1 \mid x'}$. Similarly,$$\P{}{0 \mid x} = (1 - p_x/2) \in [1/2, (2e^{\eps} - 1)/2e^{\eps}]$$ and by $1+x \leq e^x$, we get $1-\eps \leq e^{-\eps}$, so $1 + \eps \geq  2 - e^{-\eps}$, and $e^{\eps}/2 \geq (2^{\eps} - 1)/(2e^{\eps})$. Thus for any $x, x'$, $\P{}{0 \mid x} \leq e^{\eps} \P{}{0 \mid x'}$.

Releasing $R_t(x, \epsilon_t)$ is $\eps_t$-locally private, so by composition the whole process is $(\eps + \eps_t)$-locally private.

\begin{claim}
$Y_t \stackrel{d}{=} Y_t'$
\end{claim}
It suffices to show that $x |\{\mathsf{accept} = 1\} \stackrel{d}{=} Q_{i,t}$. Fix any $x_0 \in \X$. Then by Bayes' rule
\begin{align*}
	\P{}{x = x_0 \mid \mathsf{accept} = 1} =&\; \P{}{\mathsf{accept} = 1 \mid x = x_0} \cdot \frac{\P{}{x 	= x_0}}{\P{}{\mathsf{accept} = 1}} \\
	=& \; \frac{\P{x_0}{\pi_{< t, i}}}{\max_{x^*}\P{x^*}{\pi_{< t, i}}} \cdot \frac{\P{}{x = x_0}}	{\sum_{x'} \P{}{x = x'}\frac{\P{x'}{\pi_{< t, i}}}{\max_{x^*}\P{x^*}{\pi_{< t, i}}}}  \\
	=& \; \frac{\P{x_0}{\pi_{< t, i}}\P{}{x = x_0}}	{\sum_{x'} \P{}{x = x'}\P{x'}{\pi_{< t, i}}} \\
	=&\; \frac{\P{x_0}{\pi_{< t, i}}\P{}{x = x_0}}	{\P{}{\pi_{< t, i}}} \\
	=& \; \P{}{x = x_0 | \pi_{ < t, i}} \stackrel{d}{=} Q_{i,t},
\end{align*}
as desired. Finally, since $p_x/2 \geq \frac{1}{2e^{\eps}}$, the expected number of samples until $\mathsf{accept} = 1$ is $\leq 2e^{\eps}$.
\end{proof}

\subsection{Step 3: Data Independent Decomposition of Local Randomizers}
\label{subsec:decomp}
The preceding sections enable us to simulate a fully interactive $k$-compositional $(\eps,0)$-locally private protocol with a sequentially interactive $(2\eps,0)$-locally private protocol. However, our solution so far may require sampling a new user for each query in the original protocol. Since a fully interactive protocol's query complexity may greatly exceed its sample complexity, this is undesirable. To address this problem, we \emph{decompose} each local randomizer in a way that substantially reduces the number of queries that actually require samples.

Let $R: \X \to \Y$ be an $\epsilon'$ local randomizer, fix an arbitrary element $x_0 \in \X$, and let $x$ be a private input to $R$. Then Lemma 5.2 in \citet{BBGN19} shows that we can write $R(x)$ as a mixture $\gamma w + (1-\gamma)d_x$, where $w$ is a data-independent distribution, $d_x$ is a data-dependent distribution, and $\gamma \geq e^{-\epsilon'}$. This suggests that decomposition --- by answering a proportion of queries from data-independent distributions --- can reduce the sample complexity of our solution. Unfortunately, the data dependent distribution need not be differentially private (in fact, it often corresponds to a point mass on the private data point), so the privacy of the overall mechanism crucially relies on not releasing \emph{which} of the two mixture distributions the output was sampled from.

We first generalize this result, showing that for any $\epsilon \geq \epsilon'$, we can write $R(x)$ as $(1-\gamma) w + \gamma \tilde{R}(x)$ where $\tilde{R}$ is a $2\epsilon$-differentially private local randomizer, and $\gamma = \frac{e^{-\epsilon'}-1}{e^{-\eps}-1}$ (Lemma~\ref{lem:decomp}). The upshot of this generalization is that even if we make public which part of the mixture distribution was used, the resulting privacy loss is still bounded by $2\epsilon$. Larger values of $\epsilon$ increase our chance of sampling from a data-independent distribution when simulating a local randomizer, while increasing the privacy cost incurred by a user in the event that we sample from the  data-dependent mixture component. This tradeoff will be crucial for us in the proof of our main result.

\begin{lemma}[Data Independent Decomposition]
\label{lem:decomp}
	Let $R: \X \to \Y$ be an $\epsilon'$-differentially private local randomizer and let $\eps \geq \eps'$. Then there exists a mapping $\tilde{R}$ and fixed data-independent distribution $\mu$ such that $\tilde{R}(\cdot)$ is a $2\eps-$differentially private local randomizer and $$R(x) \stackrel{d}{=} \gamma \tilde{R}(x) + (1-\gamma)\mu, $$
where $\gamma = \frac{e^{-\epsilon'}-1}{e^{-\eps}-1}$.

\end{lemma}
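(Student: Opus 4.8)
The plan is to construct $\tilde R$ and $\mu$ explicitly by "peeling off" a data‑independent slab of mass $1-\gamma$ from every output distribution $R(x)$, using the output distribution at a fixed reference point as both the data‑independent component and a convenient dominating measure. Concretely, I would fix an arbitrary $x_0 \in \X$, set $\mu := R(x_0)$, and for each $x$ let $f_x := \frac{dR(x)}{d\mu}$. Since $R$ is an $\epsilon'$-private local randomizer, $\epsilon'$-privacy already forces $R(x) \ll R(x_0)$ with $e^{-\epsilon'} \le f_x \le e^{\epsilon'}$ holding $\mu$-almost everywhere (this also sidesteps any measurability concern about $\X$). I then define $\tilde R(x)$ through $\frac{d\tilde R(x)}{d\mu} := \tilde f_x := \frac{f_x - (1-\gamma)}{\gamma}$.

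The first routine step is to check that $\tilde R(x)$ is a genuine probability distribution and that the mixture identity holds. Nonnegativity of $\tilde f_x$ follows from $f_x \ge e^{-\epsilon'}$ together with the elementary inequality $1-\gamma \le e^{-\epsilon'}$, which after substituting $1-\gamma = \frac{e^{-\epsilon'}-e^{-\epsilon}}{1-e^{-\epsilon}}$ reduces to $e^{-\epsilon'} \le 1$; that $\int \tilde f_x\, d\mu = 1$ is immediate from $\int f_x\, d\mu = 1$. The claimed identity $R(x) \stackrel{d}{=} \gamma \tilde R(x) + (1-\gamma)\mu$ then holds because $\gamma \tilde f_x + (1-\gamma) = f_x$ $\mu$-a.e.

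The substantive step is bounding the privacy of $\tilde R$. For any $x, x'$ and $\mu$-a.e.\ $y$, $\frac{\tilde f_x(y)}{\tilde f_{x'}(y)} = \frac{f_x(y)-(1-\gamma)}{f_{x'}(y)-(1-\gamma)} \le \frac{e^{\epsilon'}-(1-\gamma)}{e^{-\epsilon'}-(1-\gamma)}$, using $f_x \le e^{\epsilon'}$, $f_{x'}\ge e^{-\epsilon'}$, and the fact (again from $1-\gamma < e^{-\epsilon'}$) that the denominator is strictly positive. It then remains to verify the purely algebraic claim that, for the specified $\gamma$ and any $0 \le \epsilon' \le \epsilon$, $\frac{e^{\epsilon'}-(1-\gamma)}{e^{-\epsilon'}-(1-\gamma)} \le e^{2\epsilon}$. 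I would do this by plugging in the value of $1-\gamma$, simplifying the left side in closed form to $e^{\epsilon+\epsilon'} + e^{\epsilon} - e^{\epsilon'}$, and observing that this is increasing in $\epsilon'$ on $[0,\epsilon]$ with value exactly $e^{2\epsilon}$ at $\epsilon' = \epsilon$.

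I expect the main (though modest) obstacle to be precisely this last algebraic verification, together with the "right" choice of slab weight $1-\gamma$ that makes everything balance: the weight must be small enough that $\tilde f_x \ge 0$ (forcing $1-\gamma \le e^{-\epsilon'}$) yet large enough that the residual density ratio collapses to at most $e^{2\epsilon}$, and the stated $\gamma = \frac{e^{-\epsilon'}-1}{e^{-\epsilon}-1}$ is exactly the value achieving this. A secondary point worth a remark is the degenerate case $\epsilon'=0$, where $R$ is already data-independent, $\gamma = 0$, and the statement is satisfied trivially (take $\tilde R$ data-independent); this also connects the $\epsilon\to\infty$ limit of our decomposition back to Lemma 5.2 of~\citet{BBGN19}, where $1-\gamma \to e^{-\epsilon'}$ and $\tilde R$ is allowed to be arbitrary.
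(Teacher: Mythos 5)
Your proposal is correct and is essentially the paper's own argument: the same reference point $x_0$, the same data-independent component $\mu = R(x_0)$, the same $\gamma$, and the same residual randomizer $\tilde R(x)$ with density $\frac{f_x-(1-\gamma)}{\gamma}$ relative to $\mu$. The only (cosmetic) difference is in the final privacy check — the paper bounds $\tilde r(x)/r(x_0)$ within $[e^{-\epsilon},e^{\epsilon}]$ and takes the quotient of the two bounds, while you evaluate the worst-case ratio in closed form as $e^{\epsilon+\epsilon'}+e^{\epsilon}-e^{\epsilon'} \le e^{2\epsilon}$; both computations are valid.
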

\begin{proof}
Let $0 < \epsilon' \leq  \epsilon$, fix any $x_0 \in \X$, let $\gamma = \frac{e^{-\epsilon'}-1}{e^{-\epsilon}-1}$, and let $r(x)$ denote the density function of the local randomizer $R$ with input $x$ implicitly evaluated at some arbitrary point in the range, which we suppress. Since $\epsilon \geq \epsilon' > 0$, $\gamma \in (0, 1]$ is a valid mixture probability. Thus we can write $$r(x) = (r(x) - (1-\gamma)r(x_0)) + (1-\gamma)r(x_0)$$ and, rewriting the first term, $$r(x) - (1-\gamma)r(x_0) = \gamma (r(x_0) + \frac{1}{\gamma}(r(x)-r(x_0))) = \gamma \tilde{r}(x).$$ $\tilde r$ defines a new mapping $\tilde{R}(\cdot)$ by mapping $x$ to the random variable $\tilde{R}(x)$ with density function $\tilde{r}(x) = (r(x_0) + \frac{1}{\gamma}(r(x)-r(x_0)))$. Thus, it suffices to show that the mapping $\tilde{R}(x)$ is a $2\epsilon$-private local randomizer.

We first show that for any $x$, $\tilde{r}(x)$ is a well-defined density function. Since $R$ is an $\epsilon'$-private local randomizer, $r(x)-r(x_0) \geq (e^{-\epsilon'}-1)r(x_0)$, and so $$r(x_0) + \frac{1}{\gamma}(r(x)-r(x_0)) \geq r(x_0)\left(1 + \frac{e^{-\epsilon'}-1}{\gamma}\right) = r(x_0)e^{-\epsilon}.$$ This establishes that $\tilde{r}(x)$ is non-negative. Then since $$\int_{\Omega}\tilde{r}(x) = \int_\Omega r(x_0) + \frac{1}{\gamma}\int_\Omega(r(x)-r(x_0)) = 1 +  \frac{1}{\gamma}(1-1) = 1,$$ $\tilde{r}(x)$ defines a valid density function for any $x$.

To see that $\tilde r$ is also a $2\epsilon$-private local randomizer, fix any outcome $o \in \Y$ and any other $x' \in \X$. Since $r$ is an $\eps'$-local randomizer, $r(x) - r(x_0) \leq r(x_0)(e^{\eps'}-1)$ and we get

\begin{align*}
	\tilde r(x) =&\; r(x_0) + \frac{1}{\gamma}(r(x) - r(x_0)) \\
	\leq&\; r(x_0) \left[1 + \frac{1}{\gamma}\left(e^{\eps'}-1\right)\right] \\
	=&\; r(x_0)\left[1 + \frac{1 - e^{-\eps}}{1 - e^{-\eps'}}\left(e^{\eps'}-1\right)\right] \\
	=&\; r(x_0)\left[1 + e^{\eps'} \cdot \left(1 - e^{-\eps}\right)\right] \\
	\leq&\; r(x_0)\left[1 + e^{\eps} \cdot \left(1 - e^{-\eps}\right)\right] = r(x_0)e^\eps.
\end{align*}

We already showed $\tilde r(x') \geq e^{-\eps}r(x_0)$, so $$\frac{\tilde{r}(x)(o)}{\tilde{r}(x')(o)]} \leq \frac{e^{\epsilon}r(x_0)(o)}{e^{-\epsilon}r(x_0)(o)} \leq e^{2\epsilon}.$$
\end{proof}

\subsection{Putting it All Together: The Complete Simulation}
Finally, we combine rejection sampling and decomposition to give our complete reduction, Algorithm~\ref{alg:red}. We use rejection sampling to convert from a fully interactive mechanism to a sequentially interactive one and use our data-independent decomposition of local randomizers to reduce the sample complexity of the converted mechanism.

\begin{algorithm}
\caption{$\red$}\label{alg:red}
\begin{algorithmic}[1]
\Procedure{$\red$}{Fully interactive $(\eps, 0)-$LDP Protocol $\A,  \cD, n$}
\State Initialize $s_1, \ldots, s_n \gets 0$. \Comment{indicator if user $i$ has been selected yet}
\For{$t = 1 \ldots $}
	\If{$\A(\trar_{<t}) = \perp$}
		\State Output transcript $\trar_{<t}$
	\Else
		\State $(i^{t}, R_t, \eps_t) \gets \A(\trar_{<t})$
		\If{$s_i^{t} = 1$}
			\State Let $\gamma \gets \frac{e^{-\epsilon_t}-1}{e^{-\eps}-1}$
			\State Let $R_t = \gamma \tilde{R}_t + (1-\gamma)R_t(x_0)$ \Comment{Data Decomposition}
			\State Draw $\rho \sim \mathsf{Unif}(0,1)$
			\If{$\rho \leq \gamma$}
				 \State Draw $Y_t \sim \Rej(i^{t}, \pi_{ < t}, \eps, 2\eps, \tilde{R}(\cdot), \cD)$
			\Else
				\State Draw $Y_t \sim R_t(x_0, \epsilon_t)$ \Comment{Data independent distribution}
			\EndIf	
		\Else
			\State Draw $x_{i^{t}} \sim Q_{i,t} = \cD$, then draw $Y_t \sim R_t(x_{i^{t}}, \epsilon_t)$ \Comment{$Q_{i,t} = \cD$ since $s_{i^{t}} = 0$}
			\State Let $s_{i^{t}} \gets 1$
		\EndIf
	\EndIf
\EndFor
\EndProcedure
\end{algorithmic}
\end{algorithm}

We now prove that $\red$ has the desired interactivity, privacy, transcript, and sample complexity guarantees. We again denote by $N$ the sample complexity of $\red$, i.e. the number of samples drawn from the prior $\cD$ over the run of the algorithm, either in Step $15$ (which is bounded by $n$), or over the runs of $\Rej$ in line $10$. We observe that sampling from the prior $\cD$ simply corresponds to using a new datapoint drawn from $\cD$. Fixing a protocol $\A$, let $\Pi^r$ denote the transcript random variable generated by $\red(\A)$, and let $\Pi^b$ denote the transcript random variable generated by $\bexpt(\A)$.
\begin{theorem}
\label{thm:main}
	Let $\A$ a fully-interactive $k$-compositional $(\eps, 0)$-locally private protocol. Then
\begin{enumerate}
	\item $\red(\A)$ is sequentially interactive,
	\item $\red(\A)$ is $(3\eps,0)$-locally private,
	\item $\Pi^r \stackrel{d}{=} \Pi^b$,
	\item $\mathbb{E}[N] \leq n(\frac{2e^{\eps}\cdot \eps}{1-e^{-\eps}}k + 1)$, and with probability $1-\delta$, $N = O(nk + \sqrt{nk\log \frac{1}{\delta}})$.
\end{enumerate}
\end{theorem}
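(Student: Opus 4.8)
The plan is to establish the four conclusions in order; the first three follow by reading off what Algorithm~\ref{alg:red} does and invoking Lemmas~\ref{lem:bayes},~\ref{lem:rs}, and~\ref{lem:decomp}, while the fourth needs a genuine probabilistic estimate. Every fresh draw from $\cD$ in $\red(\A)$ happens either in the branch for a newly-selected $i^t$ (which draws $x_{i^t}\sim\cD$, queries that user once with $R_t(\cdot,\eps_t)$, and permanently sets $s_{i^t}\gets1$) or inside $\Rej$ (where each fresh user does one composite operation: report $\mathsf{accept}\sim\Ber{p_x/2}$, and upon accepting also output $\tilde R_t(x)$); no user is ever revisited, so $\red(\A)$ is sequentially interactive. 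Since a sequentially interactive protocol whose per-user randomizers are all $(\eps',0)$-DP is itself $(\eps',0)$-LDP, it remains to bound each real user's randomizer by $3\eps$: a fresh-selection user sees $R_t(\cdot,\eps_t)$ with $\eps_t\le\eps$; the data-independent branch consumes no user; and in $\Rej$ the $\Ber{p_x/2}$ report is $(\eps,0)$-DP because $\pi_{<t,i^t}$ is a post-processing of a prefix of the $(\eps,0)$-DP transcript of $\A$, so $\max_{x^*}\P{x^*}{\pi_{<t,i^t}}/\P{x}{\pi_{<t,i^t}}\le e^\eps$ and $p_x/2\in[\tfrac1{2e^\eps},\tfrac12]$, while an accepting user additionally reports $\tilde R_t(x)$, which Lemma~\ref{lem:decomp} (applied with $\eps'=\eps_t\le\eps$) makes $2\eps$-DP; composing gives $3\eps$. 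Note this uses only $(\eps,0)$-local privacy of $\A$, not its compositionality --- $k$ enters through sample complexity alone.

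For the transcript equivalence I would induct on rounds, mirroring Lemma~\ref{lem:bayes}: both experiments feed the same prefix to $\A$ and hence agree on $(i^t,R_t,\eps_t)$, so it suffices to check that $Y_t$ conditioned on the prefix and these quantities has the same law under both --- namely that of $R_t(x')$ with $x'\sim Q_{i,t}$, the posterior on $x_{i^t}$ given $\Pi_{<t}$, which (as observed inside the proof of Lemma~\ref{lem:rs}) depends on $\Pi_{<t}$ only through $\pi_{<t,i^t}$. If $i^t$ is fresh then $Q_{i,t}=\cD$ and $\red$ literally draws $x_{i^t}\sim\cD$ and outputs $R_t(x_{i^t})$, matching $\bexpt(\A)$. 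If $s_{i^t}=1$, $\red$ first writes $R_t=\gamma\tilde R_t+(1-\gamma)\mu$ by Lemma~\ref{lem:decomp} (with $\mu=R_t(x_0)$ and $\gamma=\tfrac{e^{-\eps_t}-1}{e^{-\eps}-1}$), then with probability $\gamma$ outputs a draw from $\Rej$ --- distributed as $\tilde R_t(x')$, $x'\sim Q_{i,t}$, by Lemma~\ref{lem:rs} --- and with probability $1-\gamma$ a draw from $\mu$; integrating the pointwise mixture identity of Lemma~\ref{lem:decomp} against $Q_{i,t}$ shows this equals the law of $R_t(x')$ with $x'\sim Q_{i,t}$, closing the induction. (The thought-experiment sample count, as distinct from $\red$'s realized $N$, is at most $n$ since every posterior equals $\cD$ at first selection.)

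For the sample complexity, write $N=N_0+\sum_t N_t$ with $N_0\le n$ deterministically (one draw per distinct user $\A$ selects) and $N_t$ the users drawn by the round-$t$ call to $\Rej$, which is nonzero only when $s_{i^t}=1$ and the coin $\rho_t\le\gamma_t$ (probability $\gamma_t$ given the history). Lemma~\ref{lem:rs} gives $\E{}{N_t\mid\Rej\text{ invoked}}\le2e^\eps$ (each $\Ber{p_x/2}$ trial succeeds with probability $\ge\tfrac1{2e^\eps}$), and $\gamma_t=\tfrac{1-e^{-\eps_t}}{1-e^{-\eps}}\le\tfrac{\eps_t}{1-e^{-\eps}}$, so round by round $\E{}{N_t}\le\tfrac{2e^\eps}{1-e^{-\eps}}\eps_t$; summing and using $k$-compositionality in the form $\sum_t\eps_t\le k\eps n$ (which holds with probability $1$, hence in expectation) gives $\E{}{\sum_t N_t}\le\tfrac{2e^\eps\eps}{1-e^{-\eps}}kn$ and $\E{}{N}\le n\big(\tfrac{2e^\eps\eps}{1-e^{-\eps}}k+1\big)$. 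For the high-probability bound I would concentrate in two stages: the count $K=\sum_t\mathbf1[\rho_t\le\gamma_t]$ of $\Rej$ invocations is a sum of adaptively chosen Bernoullis with $\sum_t\gamma_t\le\tfrac{k\eps n}{1-e^{-\eps}}=O(kn)$, so a martingale Bernstein bound gives $K=O(kn+\sqrt{kn\log\tfrac1\delta})$ with probability $1-\delta/2$; conditioned on that, $\sum_t N_t$ is stochastically dominated by a sum of $K$ i.i.d.\ $\mathrm{Geom}(\tfrac1{2e^\eps})$ variables --- sub-exponential with $O(1)$ mean --- which a second Bernstein bound concentrates to $O(kn+\sqrt{kn\log\tfrac1\delta})$ with probability $1-\delta/2$; adding $N_0\le n$ finishes.

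Parts 1--3 are conceptually the heart of the theorem but, given the three lemmas, are largely bookkeeping; the one spot needing genuine (if brief) care is verifying that performing the data-independent decomposition \emph{before} passing the data-dependent component $\tilde R_t$ through rejection sampling reproduces the undecomposed posterior-predictive law of $Y_t$, and that the posterior over $x_{i^t}$ is a function of $\pi_{<t,i^t}$ alone. The technically fussiest step is the two-stage concentration in Part 4: the number of rejection-sampling restarts is random and adaptively determined, so one cannot apply a fixed-length tail bound directly and must instead control the number of $\Rej$ invocations first and then the (conditionally geometric, and independent across invocations) restart counts on that event --- taking care that the adaptivity of the $\gamma_t$'s does not spoil the independence the geometric tail bound needs.
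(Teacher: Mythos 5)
Your proposal is correct and follows essentially the same route as the paper: the same reading-off of sequential interactivity, the same $\eps + 2\eps$ composition for privacy via Lemmas~\ref{lem:rs} and~\ref{lem:decomp}, the same round-by-round induction mirroring Lemma~\ref{lem:bayes} for transcript equivalence, and the same per-round computation $\E{}{N_t}\le \gamma_t\cdot 2e^{\eps}$ summed against $k$-compositionality for the expected sample complexity. Your two-stage concentration plan (martingale bound on the number of $\Rej$ invocations, then a geometric tail bound conditioned on that event) is exactly the argument in Appendix~\ref{sec:app_highprob}.
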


\begin{proof}[Proof of Theorem~\ref{thm:main}]
	\textbf{1. Interactivity:} Since each user $i$'s data is only used once (before $s_i$ is set to $1$), $\red$ is sequentially interactive.
	
	\textbf{2. Privacy:} Consider a data point $x$ corresponding to an arbitrary user over the run of $\red(\A)$. Then either $x$ is drawn in line $18$, or $x$ is drawn during a rejection sampling step. In the first case, $x$ is only used once in step $18$, as input to an $\eps_t$-local randomizer, preserving $(\eps, 0)$-LDP, since $\eps_t \leq \eps$. If $x$ is drawn during the rejection sampling step, then it is used during the use of rejection sampling to simulate a draw from a $(2\eps, 0)$-local randomizer $\tilde{R}(\cdot)$, where the input transcript $\pi_{< t}$ has been generated $(\eps, 0)$-privately. The privacy of the input transcript is relevant because it bounds the privacy of the user's rejection sampling step. By Lemma~\ref{lem:rs}, this is $(3\eps, 0)$-private.

	\textbf{3. Transcripts:} We prove this claim by a similar argument as that of Lemma~\ref{lem:bayes}: we show by induction that the transcript distribution at each step $t$ is the same for $\red(\A)$ and $\bexpt(\A)$. This is trivially true at $t = 1$. Now suppose it is true up to time $t+1$, i.e. $\Pi^r_{< t+1} \stackrel{d}{=} \Pi^b_{< t+1}$. Then since the joint distributions $\Pi_{ < t +2}$ factor as $(i^{t+1}, R_{t+1}, \epsilon_{t+1}, Y_{t+1}|\Pi_{ < t +1}) \cdot \Pi_{ < t +1}$, it suffices to show that the conditional distributions on $i^{t+1}, R_{t+1}, \epsilon_{t+1}, Y_{t+1}|\Pi_{ < t +1}$ coincide.
	
Note that under both $\red(\A)$ and $\bexpt(\A)$, protocol $\A$ is used to select $i^{t+1}, R_{t+1}, \epsilon_{t+1}$ as a function of $\Pi_{ < t+1}$,  so we can condition on $i^{t+1}, R_{t+1}, \epsilon_{t+1}$ as well, and need only show that the distribution on $Y_{t+1}$ is the same. Under $\bexpt(\A)$, $Y_{t+1}$ is drawn from $R_{t+1}(u, \epsilon_{t+1}), u \sim Q_{i, t+1}$. There are two cases for $\red(\A)$:
\begin{itemize}
\item If $s_i^{t+1} = 0$, then under $\red(\A), Y_{t+1}$ is drawn in line $18$ from $R_{t+1}(u, \epsilon_{t+1}), u \sim Q_{i, t+1}$, as desired.
\item  If $s_i^{t+1} = 1$, then $\red(\A)$ uses Lemma~\ref{lem:decomp} to write $R_{t+1}(\cdot)$ as a mixture. Hence if we sample from the mixture with input $u \sim Q_{i, t+1}$, we sample from $R_{t+1}(u)$, which is the desired sampling distribution. To see that $\red(\A)$ does sample from the target, we need only show that $Y_{t+1}$ drawn in line $13$ is sampled from $\tilde{R}_t(u)$ where $u \sim Q_{i, t+1}$. This is true by Lemma~\ref{lem:rs}.
\end{itemize}

	\textbf{4. Sample Complexity:} Here we bound the expected sample complexity,  deferring the high probability bound to Section~\ref{sec:app_highprob} in the Appendix. Let $N_i$ be the number of fresh samples drawn over all rounds $t$ where $i^t = i$, i.e. the number of samples drawn when simulating follow-up queries to $i$. Let $N_i^t$ be the number of samples drawn during rejection sampling in round $t$; we imagine that regardless of the coin-flip in line $11$ of the pseudocode of \red, $N_i^t$ is always drawn. Then the total number of samples is $N_i = \sum_{t =1}^{T}\gamma_tN_i^t$. (Note that for simplicity, we are summing over all rounds $T$, since equivalently we may imagine that each user is given a local randomizer at each round, with privacy cost $0$ in any round in which $i_t \neq i$.) Then by Lemma~\ref{lem:rs} $$\mathbb{E}[N_i] = \sum_{t =1}^{T}\gamma_t 2e^{\eps} = \frac{2e^{\eps}}{1 - e^{-\eps}}\sum_{t =1}^{T} (1 - e^{-\eps_t}).$$ Since $1-x \leq e^{-x}$, we get that $1 - \eps_t \leq e^{-\eps_t}$ and so $1 - e^{-\eps_t} \leq \eps_t$. Hence $$\mathbb{E}[N_i] \leq \frac{2e^{\eps}}{1-e^{-\eps}}\sum_{t=1}^{T}\epsilon_t \leq \left(\frac{2e^{\eps}\cdot \eps}{1-e^{-\eps}}\right)k.$$ Summing over $i$, and including the at most $n$ samples drawn in line $18$ bounds the expected sample complexity by $((\frac{2e^{\eps}\cdot \eps}{1-e^{-\eps}})k + 1)n$, as desired.
\end{proof}

\section{Separating Full and Sequential Interactivity}
\label{sec:lb}
We now prove that our reduction in Section \ref{sec:simulation} is tight in the sense that any generic reduction from a fully interactive protocol to a sequentially interactive protocol must have a sample complexity blowup of $\tilde \Omega(k)$ when applied to a $k$-compositional protocol. Specifically, we define a family of problems such that for every $k$, there is a fully interactive $k$-compositional protocol that can solve the problem with sample complexity $n = n(k)$, but such that \emph{any} sequentially interactive protocol solving the problem must have sample complexity $\tilde \Omega(k \cdot n)$.

 Informally, the family of problems (Multi-Party Pointer Jumping, or $\mpj(d)$) we introduce is defined as follows. An \emph{instance} of $\mpj(d)$ is given by a complete tree of depth $d$. Every vertex of the tree is labelled by one of its children. By following the labels down the tree, starting at the root, an instance defines a unique root-to-leaf path. Given an instance of $\mpj(d)$, the data distribution is defined as follows: to sample a new user, first select a level of the tree uniformly $\ell \in [d]$ at random, and provide that user with the vertex-labels corresponding to level $\ell$ (note that fixing an instance of the problem, every user corresponding to the same level of the tree has the same data). The  problem we wish to solve privately is to identify the unique root to leaf path specified by the instance.

 We first show that there is a fully interactive protocol which can solve this problem with sample complexity $n = \tilde O(d^2/\epsilon^2)$. The protocol is $k$-compositional for $k = \Theta(d)$. Roughly speaking, the protocol works as follows: it identifies the path one vertex at a time, starting from the root, and proceeding to the leaf, in $d$ rounds. In each round, given the most recently identified vertex $v_i$ in level $\ell$, it attempts to identify the child that vertex $v_i$ is labelled with. It queries every user with the same local randomizer, which asks them to use randomized response to identify the labelled child of $v_i$ if their data corresponds to level $\ell$, and to respond with a uniformly random child otherwise (recall that the level that a user's data corresponds to is itself private, and hence is not known to the protocol). Since there are roughly $\tilde \Theta(\sqrt{n}/\epsilon^2)$ users with relevant data, out of $n$ users total, it is possible to identitify the child in question subject to local differential privacy. Although every user applies an $\epsilon$-local randomizer $d$ times in sequence, because each user's data corresponds to only a single level in the tree, the protocol is still $(\epsilon,0)$-locally private. Note that this privacy analysis mirrors the ``histogram'' structure of the non-compositional protocol in Example \ref{ex:comp}.

Informally, the reason that any sequentially interactive protocol must have sample complexity that is larger by a factor of $d$, is that even to identify the child of a single vertex in the local model, $\Omega(d^2/\epsilon^2)$ datapoints are required (this is exactly what our randomized response protocol achieves). But a sequentially interactive protocol cannot re-use these datapoints across levels of the tree, and so must expend $\Omega(d^2/\epsilon^2)$ samples for \emph{each} of the $d$ levels of the tree. This intuition is formalized in a delicate and technical induction on the depth of the tree, using information theoretic tools to bound the success probability of any protocol as a function of its sample complexity.  The precise definition of $\mpj(d)$ is somewhat more complicated, in which half of the weight on the underlying distribution is assigned to ``level 0'' dummy agents whose purpose is to break correlations between levels of the tree in the argument.

\def \D{\mathcal{D}}
\def \A{\mathcal{A}}
\def \1{\textbf{1}}

\subsection{The Multi-Party Pointer Jumping Problem}
We now formally define the \emph{Multi-party Pointer Jumping} ($\mpj$) problem.

\begin{definition}
Given integer parameter $d > 1$, an instance of Multi-party Pointer Jumping $\mpj(d)$ is defined by a vector $Z = Z_1 \circ \cdots \circ Z_d$, a concatenation of $d$ vectors of increasing length. Letting $s = d^4$, for each $i \in [d]$ $Z_i$ is a vector of $s^{i-1}$ integers in $\{0,1,\ldots,s-1\}$. For each $Z_i$, $Z_{i,j}$ is its $j^{th}$ coordinate.

Viewed as a tree, $Z$ is a complete $s$-ary tree of depth $d$ where each $Z_{i,j}$ marks a child of the $j$-th vertex at depth $i$. $P = P(Z)$ then denotes the vector of $d$ integers representing the unique root to leaf \emph{path} down this tree through the children marked by $Z$. Formally, $P$ is defined in a recursive way: $P_1 = Z_{1,1}$, ...,$P_i = Z_{i, P_1 \cdot s^{i-1} + P_2 \cdot s^{i-2} + \cdots + P_{i-1}+1}$,...,$P_d = Z_{d, P_1 \cdot s^{d-1} + P_2 \cdot s^{d-2} + \cdots + P_{d-1}+1}$.

Finally, an instance $\mpj(d)$ defines a data distribution $\cD$. For each $x \sim \cD$, with probability $1/2$, $x = (0,\emptyset)$ is a ``dummy datapoint'', and with the remaining probability $x = (\ell, Z_{\ell})$ where $\ell$ is a level drawn uniformly at random from $[d]$. A protocol solves $\mpj(d)$ if it recovers $P$ using samples from $\cD$.
\end{definition}

A graphical representation of $\mpj(d)$ where $s=2$ appears in Figure~\ref{fig:mpj} (we set $s=2$ in this figure for easier graphical representation).

\begin{figure}[H]
\begin{tikzpicture}[level/.style={sibling distance=50mm/#1}]
\node [circle,draw,fill=green!20!white,minimum size=1cm] (z){$\emptyset$}
  child{node [circle,draw,fill=green!20!white,minimum size=1cm] (a) {0}
    child {node [circle,draw,fill=green!20!white,minimum size=1cm] (b) {00}
      child {node {$\vdots$}
        child {node [circle,draw,fill=green!20!white,minimum size=1cm] (d) {}}
        child {node [circle,draw,fill=green!20!white,minimum size=1cm] (e) {}
    edge from parent node[xshift=0.2cm,yshift = 0.1cm]{$1$}}
    edge from parent node[xshift=-0.2cm,yshift = 0.1cm]{$0$}
        }
      child {node {$\vdots$}}
    }
    child {node [circle,draw,fill=green!20!white,minimum size=1cm] (g) {01}
      child {node {$\vdots$}}
      child {node (cc){$\vdots$}
      child {node [circle,draw,fill=green!20!white,minimum size=1cm] (aa) {}
      		 child [grow=down] {node (aaa) {$P = 011...0$} edge from parent [draw=none]}
	      edge from parent node[xshift=-0.2cm,yshift = 0.1cm]{$0$}}
      child {node [circle,draw,fill=green!20!white,minimum size=1cm] (bb) {}}
    edge from parent node[xshift=0.2cm,yshift = 0.1cm]{$1$}}
    edge from parent node[xshift=0.2cm,yshift = 0.1cm]{$1$}
    }
    edge from parent node[above]{$0$}
  }
  child {node [circle,draw,fill=green!20!white,minimum size=1cm] (j) {1}
    child {node [circle,draw,fill=green!20!white,minimum size=1cm] (k) {10}
      child {node {$\vdots$}}
      child {node {$\vdots$}
    edge from parent node[xshift=0.2cm,yshift = 0.1cm]{$1$}}
    edge from parent node[above]{$0$}
    }
  child {node [circle,draw,fill=green!20!white,minimum size=1cm] (l) {11}
    child {node {$\vdots$}
    edge from parent node[xshift=-0.2cm,yshift = 0.1cm]{$0$}}
    child {node (c){$\vdots$}
      child {node [circle,draw,fill=green!20!white,minimum size=1cm] (o) {}
    edge from parent node[xshift = -0.2cm,yshift = 0.1cm]{$0$}}
      child {node [circle,draw,fill=green!20!white,minimum size=1cm] (p) {}
        child [grow=right] {node (q) {} edge from parent[draw=none]
          child [grow=right] {node (q) {Depth $d+1$} edge from parent[draw=none]
            child [grow=up] {node (r) {$\vdots$} edge from parent[draw=none]
              child [grow=up] {node (s) {Depth 3, $Z_3 = 0110$} edge from parent[draw=none]
                child [grow=up] {node (t) {Depth 2, $Z_2 = 10$} edge from parent[draw=none]
                  child [grow=up] {node (u) {Depth 1, $Z_1 =0$} edge from parent [draw=none]}
                }
              }
            }
          }
        }
      }
    }
  }
};
\path[draw,color=red,line width=1mm] (z)--(a);
\path[draw,color=blue,line width=1mm] (a)--(g);
\path[draw,color=blue,line width=1mm] (g)--(cc);
\path[draw,color=red,line width=1mm] (cc)--(aa);
\end{tikzpicture}
\caption{Multi-party Pointer Jumping}
\label{fig:mpj}
\end{figure}

\begin{algorithm}
\caption{A fully interactive $(\varepsilon,0)$-locally private protocol for $\mpj(d)$}
\label{alg:mpj}
\begin{algorithmic}[1]
\State Divide users into $u = \lceil \log(s)/\log(2) \rceil$ groups each of $m = 512d^2\log(d) \cdot\frac{(e^{\varepsilon}+1)^2}{(e^{\varepsilon}-1)^2}$ users.
\State Initialize $Q \gets 0$
\For{$r=1, 2, \ldots, d$}
	\State $Q_r \gets 0$
	\For{each group $g = 1, 2, \ldots, u$}
		\For{each user $i = 1, 2, \ldots, m$}
			\State $\ell_i \gets$ level of user $x_i$
			\If{$\ell_i = r$}
				\State $b_{i,r} \gets g$-th bit of binary representation of $Z_{r,Q+1}$
				\State User $i$ publishes randomized response $y_i \sim \RR{b_{i,r},\eps}$
			\Else
				\State User $i$ publishes $y_i \sim \Ber{0.5}$
			\EndIf
		\EndFor
		\State $g$-th bit of $Q_r \gets$ majority bit of $\{y_i\}_{i=1}^m$
	\EndFor
	\State $Q \gets s \cdot Q + Q_r$
\EndFor
\State Output $Q_1 \circ \cdots \circ Q_d$
\end{algorithmic}
\end{algorithm}

\subsection{An Upper Bound for Fully Interactive Mechanisms}

\begin{theorem}
There exists a fully interactive $(\varepsilon,0)$-locally private protocol (Algorithm \ref{alg:mpj}) with sample complexity $n = O(d^2 \log^2(d)(e^{\varepsilon}+1)^2/(e^\varepsilon-1)^2)$ that, on any instance $Z$ of $\mpj(d)$, correctly identifies $P(Z)$ with probability at least $1 - 1/d$.
\end{theorem}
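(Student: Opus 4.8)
The plan is to establish correctness by induction on the rounds $r = 1, \dots, d$, maintaining the invariant that after round $r$ the algorithm's running index $Q$ encodes exactly the true path prefix, i.e. $Q_1 \circ \cdots \circ Q_r = (P_1, \dots, P_r)$, so that at the start of round $r+1$ the coordinate $Z_{r+1, Q+1}$ queried is $P_{r+1}$. The base case $r=1$ holds since $Z_{1,Q+1} = Z_{1,1} = P_1$ unconditionally. For the inductive step I condition on the event $E_{r-1}$ that rounds $1,\dots,r-1$ all succeeded; then round $r$ is faced with the true value $P_r \in \{0,\dots,s-1\}$, and it suffices to show that the $u$ majority votes of round $r$ recover all $u$ bits of $P_r$ except with small probability.

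Fix a round $r$ and a bit index $g \in [u]$, and let $b$ denote the $g$-th bit of $P_r$; conditioned on $E_{r-1}$ this $b$ is a fixed, instance-determined bit. Call a user in group $g$ \emph{relevant} if its level $\ell_i$ equals $r$; since a user is a dummy with probability $1/2$ and otherwise gets a uniform level in $[d]$, users are relevant independently with probability $1/(2d)$. A relevant user publishes $\RR{b,\eps}$, equal to $b$ with probability $\tfrac{e^\eps}{e^\eps+1}$ and to $1-b$ otherwise; every non-relevant user (including every dummy) publishes an independent $\Ber{0.5}$ bit. The structural point that makes the analysis clean is that each user's datum enters the transcript only once over the entire protocol --- through the single randomized-response call it makes in round $\ell_i$ --- so, conditioned on the levels $\{\ell_i\}$, the round-$r$ outputs of the $m$ users of group $g$ are mutually independent with the marginals just described, and conditioning further on $E_{r-1}$ changes nothing but the (already-fixed) target bit. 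A two-step tail bound now controls the majority vote: first, a Chernoff bound gives at least $m/(4d)$ relevant users in the group except with probability $\exp(-\Omega(m/d))$; second, on that event the signed count $\sum_i (2y_i - 1)$ is a sum of $m$ independent $\pm 1$ variables whose mean has sign $2b-1$ and magnitude at least $\tfrac{m}{4d}\cdot\tfrac{e^\eps-1}{e^\eps+1}$, so by Hoeffding's inequality the majority equals $b$ except with probability $\exp\!\big(-\Omega\big(\tfrac{m}{d^2}\cdot\tfrac{(e^\eps-1)^2}{(e^\eps+1)^2}\big)\big)$.

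Substituting $m = 512\,d^2\log(d)\,\tfrac{(e^\eps+1)^2}{(e^\eps-1)^2}$ makes both failure probabilities at most $d^{-c}$ for a large absolute constant $c$; since the run consists of $d$ rounds, each with $u = \lceil \log_2 s\rceil = O(\log d)$ bit-votes and $O(\log d)$ relevance-count events, a union bound over the $O(d\log d)$ bad events yields that the output equals $P(Z)$ except with probability at most $1/d$. The sample complexity is $n = u\cdot m = O(\log d)\cdot O\!\big(d^2\log d\,\tfrac{(e^\eps+1)^2}{(e^\eps-1)^2}\big) = O\!\big(d^2\log^2(d)\,(e^\eps+1)^2/(e^\eps-1)^2\big)$, as claimed. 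For privacy, note that although a user is queried in all $d$ rounds, it emits a data-dependent message (a single binary randomized response) only in round $\ell_i$ and data-independent fair coins otherwise; a short calculation on the product distribution shows the whole view of each user is $(\eps,0)$-differentially private in its datum, so the protocol is $(\eps,0)$-locally private (with per-round privacy costs summing to $d\eps$, i.e. it is $\Theta(d)$-compositional).

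I expect no deep obstacle: the only step requiring genuine care is the conditioning in the inductive step --- verifying that conditioning on the success of the earlier rounds, which is exactly what pins down which bit $b$ each relevant user should report, leaves the distribution and independence of the round-$r$ responses untouched. This follows from the single-use structure of user data noted above, after which the argument is a routine Chernoff/Hoeffding union bound; the constant $512$ and the extra $\log d$ factor in $m$ are calibrated precisely to absorb the $O(d\log d)$-term union bound while keeping the total error below $1/d$.
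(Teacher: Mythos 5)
Your proposal is correct and follows essentially the same route as the paper's proof: a Chernoff bound guaranteeing $\Omega(m/d)$ level-$r$ users per group, a Chernoff/Hoeffding bound on each majority vote conditioned on the earlier rounds having succeeded (which pins $Z_{r,Q+1}=P_r$), a union bound over the $O(d\log d)$ votes and count events, and the observation that each user's datum affects only the single bit it sends in round $\ell_i$ for privacy. The only cosmetic difference is that the paper conditions once on a global event $W$ (enough relevant users in every group and level) and multiplies conditional success probabilities, whereas you fold the count events into the round-by-round union bound; these are equivalent.
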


\begin{proof}
First, it is easy to check that the total sample complexity of Algorithm \ref{alg:mpj} is $n = u \cdot m = O\left(d^2\log^2(d) \cdot\frac{(e^{\varepsilon}+1)^2}{(e^{\varepsilon}-1)^2}\right).$ Privacy follows from the same line of logic used in Example~\ref{ex:comp}: each agent sends $d$ bits in total, and at most  one of these bits is not sampled uniformly at random. Therefore, the probability of an agent sending any binary string of $d$ bits is bounded between $\frac{1}{2^{d-1}} \cdot \frac{1}{1 + e^{\varepsilon}}$ and $\frac{1}{2^{d-1}} \cdot \frac{e^{\varepsilon}}{1 + e^{\varepsilon}}$, for any datapoint that they might hold. Algorithm~\ref{alg:mpj} is therefore $(\varepsilon,0)$-locally private.

It remains to prove correctness. We first show that each group contains enough users from each level. For each group $g \in [u]$, define $X_{i,g,r}$ to be 1 if the $i$-th user in group $g$ has level $r$ and 0 otherwise. By definition, for any $r \in [d]$, $\P{}{X_{i,g,r}=1} = 1/(2d)$. Therefore we have $\E{}{\sum_{i=1}^m X_{i,g,r}} = m / (2d)$, and by a Chernoff bound
\[
\P{}{\sum_{i=1}^m X_{i,g,r} < m/(4d)} \leq \exp\left(-\frac{m}{16d}\right) \leq 1/(d^4).
\]
Define $W$ to be the event that for every $r \in [d]$ and $g \in [u]$, there are at least $m/(4d)$ users in group $g$ with level $r$. By a union bound, we know $\P{}{W} \geq 1- (ud)/d^4\geq 1-1/d^2$, so with high probability we have enough users in each level in each group.

We now analyze the quantities $Q_r$. For each $r \in [d]$, we want to show $$\P{}{Q_r = P_r |Q_1 = P_1,...,Q_{r-1} = P_{r-1},W} \geq 1-1 /d^3,$$ i.e. that the output $Q$ actually matches $P$.  Conditioning on $Q_1 = P_1, \ldots, Q_{r-1} = P_{r-1}$ and $W$, $Z_{r,Q+1} = P_r$. Define $Y_{i,g,r}$ to be 1 if the bit sent by the $i$-th user of group $g$ is equal to the $j$-th bit of $P_r$ and 0 otherwise. If the $i$-th user has level $r$ then they send their bit using randomized response and $\P{}{Y_{i,g,r} =1} = \frac{e^{\varepsilon}}{e^{\varepsilon}+1}$. If the $i$-th user's level is not $r$ then they send a uniform random bit $\P{}{Y_{i,g,r}=1}=1/2$. Since we conditioned on $W$, there are at least $m/(4d)$ users in group $g$ with level $r$. Thus
\begin{align}
	\E{}{\sum_{i=1}^m Y_{i,g,r}} =&\; \frac{m}{4d} \cdot \frac{e^\eps}{e^\eps+1} + \left(m - \frac{m}{4d}\right) \cdot \frac{1}{2} \label{eq:y_sum}.
\end{align}

Then we have	
\begin{align*}
	&\P{}{Q_r, P_r \text{ have the same }g\text{-th bit} |Q_1 = P_1,...,Q_{r-1} = P_{r-1},W}\\
=& \P{}{ \sum_{i=1}^m Y_{i,g,r} > \frac{m}{2}} \\
\geq & \P{}{\sum_{i=1}^m Y_{i,g,r} > \E{}{\sum_{i=1}^m Y_{i,g,r}} + \frac{m}{2} - \frac{m}{4d} \cdot \frac{e^\eps}{e^\eps+1} - \left(m - \frac{m}{4d}\right) \cdot \frac{1}{2}} ~~~~~~\text{(Equation~\ref{eq:y_sum})}\\
\geq& \P{}{\sum_{i=1}^m Y_{i,g,r} > \E{}{\sum_{i=1}^m Y_{i,g,r}}  - \frac{m}{8d}\cdot \frac{e^{\varepsilon}-1}{e^{\varepsilon}+1}}\\
\geq & 1 - \exp\left( - \frac{1}{2m} \cdot \left(\frac{m}{8d}\cdot \frac{e^{\varepsilon}-1}{e^{\varepsilon}+1} \right)^2\right)  ~~~~~~\text{(Chernoff bound)}\\
= & 1- \exp\left(- m \cdot \frac{1}{128d^2} \cdot \frac{(e^{\varepsilon}-1)^2}{(e^{\varepsilon}+1)^2} \right) \\
\geq & 1- \exp(-4\log(d)) = 1 -1/d^4.
\end{align*}

Union bounding over all $u$ groups yields $$\P{}{Q_r = P_r |Q_1 = P_1,...,Q_{r-1} = P_{r-1},W} \geq 1- u/d^4 \geq 1- 1/d^3.$$
Putting this all together, Algorithm \ref{alg:mpj} outputs $P(Z)$ with probability at least
\begin{align*}
	\P{}{Q_1 = P_1,...,Q_d = P_d} \geq&\; \P{}{W} \cdot \P{}{Q_1 = P_1,...,Q_d = P_d|W} \\
	\geq&\; \P{}{W} \prod_{r=1}^d \P{}{Q_r = P_r |Q_1 = P_1,...,Q_{r-1} = P_{r-1},W} \\
	\geq&\; (1-1/d^2)(1-1/d^3)^d \\
	\geq&\; 1-1/d.
\end{align*}
\end{proof}

Note that Algorithm \ref{alg:mpj} is $k$-compositional only for $k \geq \Omega(d)$. The lower bound that we prove next (Theorem \ref{thm:lower}) shows that any sequentially interactive protocol for the same problem must have a larger sample complexity by a factor of $\tilde \Omega(d) = \tilde \Omega(k)$, showing that in general, the sample-complexity dependence that our reduction (Theorem \ref{thm:main}) has on $k$ cannot be improved.

\subsection{A Lower Bound for Sequentially Interactive Mechanisms}

We prove our lower bound for sequentially interactive $(\eps,0)$-locally private protocols. As previous work~\cite{BNS18,CSUZZ18} has established that $(\eps,0)$- and $(\eps,\delta)$-local privacy are approximately equivalent for reasonable parameter ranges, our lower bound also holds for sequentially interactive $(\eps,\delta)$-locally private protocols. For an extended discussion of this equivalence, see Section~\ref{subsubsec:lb}.

\begin{theorem}
\label{thm:lower}
	Let $\A$ be a sequentially interactive $(\eps,0)$-locally private protocol that, for every instance $Z$ of $\mpj(d)$, correctly identifies $P(Z)$ with probability $\geq 2/3$. Then $\A$ must have sample complexity $n \geq  d^3/(216(e^\varepsilon-1)^2\log(d))$.
\end{theorem}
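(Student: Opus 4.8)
The plan is to show that a sequentially interactive protocol must spend $\tilde\Omega\big(d^2/(e^\eps-1)^2\big)$ \emph{fresh} samples on each of the $d$ levels of the tree, for a total of $\tilde\Omega\big(d^3/(e^\eps-1)^2\big)$; the hypothesis of sequential (as opposed to full) interactivity is exactly what forbids reusing, for one level, the samples charged to another — a fully interactive protocol escapes with only $\tilde O(d^2/\eps^2)$ precisely because it can query every user once per level. First I would set up the per-level cost. Fix a level $r$ and pretend, generously, that $Z_{<r}$ (and hence the partial path $P_{<r}$) is revealed to the protocol for free, so that the only remaining unknown is the single entry $Z_{r,\, P_1 s^{r-1}+\cdots+P_{r-1}+1}$ of $Z_r$ that lies on the path. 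Since a given user sits at level $r$ — and hence carries any information at all about this entry — only with probability $\tfrac1{2d}$, changing this entry moves any single user's data distribution by total variation at most $\tfrac1{2d}$. Because the protocol is sequentially interactive and $(\eps,0)$-locally private, the Duchi--Jordan--Wainwright bound (in the form of \citet{DJW13,DJW18}, sharpened by the distributed data-processing framework of \citet{BGMNW16}) bounds the KL divergence between the transcript distributions under two values of this entry by $O\big((e^\eps-1)^2 \cdot n \cdot (1/(2d))^2\big)$, the key point being that this is \emph{additive} over the $n$ single-use channels. A multi-hypothesis Le Cam / Assouad argument then shows the transcript cannot pin down the correct entry with the accuracy the protocol's overall success demands unless $n = \Omega\big(d^2/((e^\eps-1)^2\log d)\big)$ (the polylogarithmic factor is slack coming from the choice $s = d^4$ and union bounds, and is irrelevant to the separation).

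The substantive work — and the step I expect to be the real obstacle — is lifting this single-level bound, which cheated by revealing the rest of the instance, to a global bound that respects the protocol's adaptivity: the protocol does not know $P_{<r}$ when it queries its users, and may interleave queries aimed at different levels arbitrarily. I would prove the global bound by induction on the depth $d$ over the (slightly generalized) family of $\mpj$ instances, with instances drawn uniformly at random. Running the protocol, one first argues that to output $P$ correctly it must effectively identify $P_1$, which by the per-level argument costs $\Omega\big(d^2/((e^\eps-1)^2\log d)\big)$ samples. The structural crux — and the reason $\mpj(d)$ places half of its mass on level-$0$ ``dummy'' users — is that conditioning on the messages of the already-queried users together with the identified coordinate $P_1$ leaves the subtree hanging off the root's $P_1$-th child distributed as a fresh instance of (a close variant of) $\mpj(d-1)$ whose data distribution still carries the full $\Theta(1/d)$ dilution at every remaining level and is essentially independent of the transcript prefix: the dummy mass is never ``consumed'' by conditioning, so the conditional distribution stays close to the prior and cross-level correlations are severed. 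Hence a protocol solving $\mpj(d)$ with $n$ samples yields, after this conditioning, a protocol solving $\mpj(d-1)$ with at most $n - \Omega\big(d^2/((e^\eps-1)^2\log d)\big)$ samples; unrolling $d$ times gives $n \ge \Omega\big(d^3/((e^\eps-1)^2\log d)\big)$.

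The delicate points in carrying this out are: (i) choosing the right notion of ``the protocol has (effectively) learned $P_1$'', strong enough to invoke the per-level KL argument yet stable under the subsequent conditioning; (ii) controlling the posterior on the subtree carefully enough — this is exactly the role the dummies play — that the recursive step genuinely produces a $\mpj(d-1)$ instance rather than some correlated residual; and (iii) making sure the sample accounting is truly additive down the $d$ recursive steps. Point (iii) is where sequential interactivity is indispensable: it is single-use-per-user that makes the Duchi--Jordan--Wainwright KL bound additive over users and that stops a protocol from amortizing one pool of $\tilde\Theta(d^2/\eps^2)$ users across all $d$ levels, which is exactly the amortization the fully interactive algorithm of the previous subsection exploits.
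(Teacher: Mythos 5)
Your high-level intuition matches the paper's own informal overview: a per-level information cost of roughly $d^2/(e^\eps-1)^2$, paid $d$ times, with the dummy users serving to decorrelate levels, and single-use-per-user making the Duchi--Jordan--Wainwright bounds additive. But the formal skeleton you propose --- ``the protocol must effectively identify $P_1$, which consumes $\Omega\big(d^2/((e^\eps-1)^2\log d)\big)$ samples, leaving $n - \Omega(d^2/\cdots)$ samples for a fresh $\mpj(d-1)$ instance'' --- has a genuine gap that the paper's argument is specifically engineered to avoid. First, ``effectively identify $P_1$'' is never defined, and a correct protocol need not identify $P_1$ (or any prefix) with any particular confidence at any intermediate point; it only has to be right at the end. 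Second, the sample-subtraction recursion presupposes a partition of the $n$ users into those charged to level $1$ and those left over for the subproblem, which an adaptive, arbitrarily interleaved protocol does not respect; you flag this as your delicate point (iii) but do not resolve it. The paper sidesteps both issues by inverting the accounting: it fixes $n = d^3/(216(e^\eps-1)^2\log d)$, reveals each user's level up front (assigning dummies to levels in a fixed, balanced way) so that exactly $n/d$ users are associated with each level, and then proves that each block of $n/d$ users learns essentially \emph{nothing}: Claim~\ref{clm:lb_maxp} shows, via a supermartingale/Azuma argument on log-likelihood ratios, that after the block the posterior still puts mass at most about $2/s$ on every candidate child, so the failure probability degrades by only $1/(3d)$ per level rather than samples being ``consumed.''

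The second missing ingredient is the step that would make your recursion sound. You assert that conditioning on the transcript prefix and on $P_1$ leaves the relevant subtree essentially independent of the prefix because the dummy mass is never consumed; that is not by itself sufficient, and it is not really what the dummies do. The danger is that the first block of users, while failing to learn $P_1$, might still leak information about the subtree $Z_{|p}$ hanging below the true path, corrupting the residual instance. The paper rules this out with Claim~\ref{clm:lb_info}: because the block's posterior on $P$ is nearly uniform over $s=d^4$ candidates (Claim~\ref{clm:lb_maxp}), the per-user information budget of $4(e^\eps-1)^2$ (Lemma~\ref{lem:djw_2}) cannot be concentrated on the one subtree that matters, yielding $\sum_p \P{}{P=p}\, I(\Pi;Z_{|p}\mid P=p) \leq 1/(18d^2)$; Pinsker's inequality and an explicit mixture construction (Claim~\ref{clm:lb_l1}) then show the conditional law of $Z$ is $1/(3d)$-close in $L_1$ to a mixture of legitimate residual instances, which is what licenses invoking the induction hypothesis. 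The dummies and the revealed levels enter only through the preliminary Claim~\ref{clm:lb_prod}, preserving the product structure across levels under conditioning. Without an analogue of this ``you cannot focus your privacy budget on a subtree you cannot locate'' argument, your recursive step does not go through.
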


\begin{proof}
	We will prove that any sequentially interactive $(\varepsilon,0)$-locally private protocol with $n=d^3/(216(e^\varepsilon-1)^2\log(d))$ samples fails to solve $\mpj(d)$ correctly with probability $> 1/3$ when $Z$ is sampled uniformly randomly. This is a distributional lower bound which is only stronger than the theorem statement (a worst case lower bound). For notational simplicity,we assume in this argument that all local randomizers have discrete message spaces. However, this assumption is without loss of generality and can be removed (e.g. using the rejection sampling technique from~\citet{BS15}).

We will prove our lower bound even for protocols to which we ``reveal'' some information about the hidden instance $Z$ and users' inputs to the protocol and users. This only makes our lower bound stronger, as the mechanism can ignore this information if desired. Before the protocol starts, each user $i$ publishes a quantity $R_i$. $R_i = \ell_i$, user $i$'s level, if $\ell_i \neq 0$ (i.e., user $i$ is not a ``dummy'' user). Otherwise $R_i$ is set to be $\lfloor \frac{i -1 }{n/d}\rfloor+1$. At a high level, we reveal these $\{R_i\}_{i=1}^n$ to break the dependence between $Z_i$'s during the execution of the protocol (see Claim~\ref{clm:lb_prod} for a formalization of this intuition). Throughout the proof and its claims, we fix realizations $R_1 = r_1, R_2 = r_2, \ldots, R_n = r_n$. We will show that even given such $r_1,...,r_n$, any sequentially interactive $(\varepsilon,0)$-locally private protocol with $n$ users fails with probability more than $1/3$.

For each $i \in [n]$, denote by $\tra_i$ the message --- i.e., portion of the transcript --- sent by user $i$ via their local randomizer. Note that there is at most one such message since the protocol is sequentially interactive. We begin with a result about how conditioning on messages and revealed values affects the distribution of $Z$.

\begin{claim}
\label{clm:lb_prod}
	Suppose $Z_1,...,Z_d$ are sampled from a product distribution. Conditioned on the messages $\Pi_1,...,\Pi_i$ of the first $i$ users and the revealed values $R_1,\ldots,R_n$, $Z_1,...,Z_d$ are still distributed according to a product distribution.
\end{claim}
\begin{proof}
	We proceed via induction on the number of messages $i$. The base case $i=0$ is immediate from the assumption.  Now suppose the statement of the claim is true for $i-1$. Use $\D_1\times \D_2 \times \cdots \D_d$ to denote the product distribution of $Z_1,...,Z_d$ conditioned on $\Pi_1,...,\Pi_{i-1}$ and $R_1,\ldots,R_n$ (all quantities that follow are conditioned on $R_1,\ldots,R_n$, and so for notational simplicity we elide the explicit conditioning).
	
Since the protocol is sequentially interactive, conditioned on $\Pi_1,...,\Pi_{i-1}$, $\Pi_i$ depends only on $Z_{r_i}$, user $i$'s internal randomness, and their level $\ell_i$ (recall that when $r_i = \lfloor \frac{i -1 }{n/d}\rfloor+1$, it may be that $\ell_i = 0$ or $\ell_i = r_i$). Therefore, conditioned on $\Pi_1,...,\Pi_i$, $Z_1,...,Z_d$ distribute as $$\D_1\times \D_2 \times \cdots \times (\D_{r_i} | \Pi_i) \times \cdots \times \D_d,$$ a product distribution.
\end{proof}

We also use induction to prove the overall theorem. It has $d$ steps. For step $\ell \in [d]$, we let $\Delta_\ell$ be the following set of distributions on $Z$.
\begin{definition}[$\Delta_\ell$]
	For each $\ell \in [d]$, the set $\Delta_\ell$ is composed of distributions $\cD$ on $Z$ such that
	\begin{enumerate}
		\item $\cD$ is a product distribution on $Z_1,...,Z_d$,
		\item for each $i = 1,...,d-\ell$, $Z_i$ is deterministically fixed to be $z_i$, and
		\item since $Z_1,...,Z_{d-\ell}$ are fixed, by the definition of $\mpj$, $P_1,...,P_{d-\ell}$ are also fixed to some $p_1,...,p_{d-\ell}$. The marginal distribution on $Z_{|p_1,...,p_{d-\ell}}$ is the uniform distribution.
	\end{enumerate}
\end{definition}

In the induction step, we consider locally private  sequentially interactive protocols with fewer users. The idea is that for any sequentially interactive $(\eps,0)$-locally private protocol of $n$ users, if we fix the messages of the first $i$ users, then what remains is a sequentially interactive $(\eps,0)$-locally private protocol on $n-i$ users. Accordingly, we want to lower bound the failure probability of this remaining protocol. More concretely:

\paragraph{Inductive Statement}Any sequentially interactive $(\varepsilon,0)$-locally private protocol with $n\cdot \tfrac{\ell}{d}$ users (the $\left( n\cdot \tfrac{d-\ell}{d}+1\right)$-th user to the $n$-th user) fails to solve $\mpj(d)$ correctly with probability at $> 2/3 - \ell/(3d)$ when $Z$ is sampled from a distribution in $\Delta_\ell$.
\\\\It will be slightly more convenient to establish the inductive case first and then establish the base case second.

\paragraph{Induction step ($\ell>1$):}  Assume the above statement is true for $\ell-1$.

In this induction, let $\A$ be a sequentially interactive $(\varepsilon,0)$-locally private protocol with $n\cdot \tfrac{\ell}{d}$ users and let $\cD$ be the distribution generating $Z$ before $\A$ starts. Let $\Pi$ be the messages sent by the first $n/d$ users of $\A$ (the $\left( n\cdot \frac{d-\ell}{d}+1\right)$-th user to the $\left( n\cdot \frac{d-\ell+1}{d}\right)$-th user) and let $\A_{\pi}$ be the sequentially interactive $(\varepsilon,0)$-locally private protocol with $n\cdot \frac{\ell-1}{d}$ users conditioned on $\tra =\trar$. For notational convenience, define $n_\ell =n\cdot \frac{d-\ell}{d}$, $\tra_{<i} = \tra_{n_\ell+1},...,\tra_{i-1}$ and $\tra_{\leq i} =\tra_{n_\ell+1},...,\tra_i$.

For each prefix of messages, $\trar$, let $\D'(\trar)$ be some mixture of distributions in $\Delta_{\ell-1}$ (to be specified later). By the induction hypothesis on $\ell-1$, $$\P{Z\sim \D'(\pi)}{\A_{\pi} \text{ outputs } P(Z)} < 1/3 + (\ell-1)/(3d).$$
Thus
\begin{align*}
\P{Z \sim \D}{\A \text{ outputs } P(Z)} =&\; \sum_{\trar} \P{}{\tra=\trar} \cdot \P{Z\sim (\D|\tra=\trar)}{\A_{\trar} \text{ outputs } P(Z)}\\
	\leq&\; \sum_{\trar} \P{}{\tra=\trar} \cdot \left( \P{Z\sim \D'(\trar)}{\A_{\trar} \text{ outputs } P(Z)} + \|(\D|\tra=\trar) - \D'(\trar)\|_1\right) \\
	<&\; 1/3 + (\ell-1)/(3d) + \sum_{\trar} \P{}{\tra=\trar}\cdot \|(\D|(\tra=\trar)) - \D'(\trar)\|_1.
\end{align*}
It therefore suffices to show that $$\sum_{\pi} \P{}{\Pi=\pi}\cdot \|(\D|(\Pi=\pi)) - \D'(\pi)\|_1 \leq 1/(3d).$$ We show this via a sequence of three claims (Claims \ref{clm:lb_maxp}, \ref{clm:lb_info}, and~\ref{clm:lb_l1}), where $\D'(\trar)$ is defined in Claim~\ref{clm:lb_l1}.

First we define some notation for the path we need to reason about. Since $\D \in \Delta_\ell$, by the definition of $\Delta_\ell$ we know that for $Z \sim \D$, the first $d-\ell$ levels of the tree $Z_1,...,Z_{d-\ell}$ deterministically take fixed values $z_1,...,z_{d-\ell}$. Thus, the first $d-\ell$ nodes in the path $P_1,...,P_{d-\ell}$ are also fixed to take particular values $p_1,...,p_{d-\ell}$. For the induction step, we write $P=P_1,...,P_{d-\ell+1}$ to denote the first $d-\ell+1$ vertices of the path. Since $P_{d-\ell+1}$ is the only value that is not fixed, and the path is through an $s$-ary tree, $P$ can take on at most $s$ different possible values and is determined by  $Z_{d-\ell+1}$.

In the first claim, we show that after observing the messages sent by $n/d$ agents, there remains substantial uncertainty about $P$.
\begin{claim}
\label{clm:lb_maxp}
	For $i \in \{ n_\ell+1,...,n_\ell + n/d\}$, $$\sum_{\pi_{\leq i}} \P{}{\Pi_{\leq i}=\pi_{\leq i}} \cdot \left( \max_p \P{}{P = p|\Pi_{\leq i} = \pi_{\leq i}} \right)  \leq 3/d^4.$$
\end{claim}

\begin{proof}
Denoting by $\1_E$ the indicator function for event $E$,
\begin{align}
&\sum_{\pi_{\leq i}} \P{}{\Pi_{\leq i}=\pi_{\leq i}} \cdot \left( \max_p \P{}{P = p|\Pi_{\leq i} = \pi_{\leq i}} \right) \nonumber\\
\leq &\sum_{\pi_{\leq i}} \P{}{\Pi_{\leq i}=\pi_{\leq i}} \cdot\left(\1_{\max_p \P{}{P=p|\Pi_{\leq i} = \pi_{\leq i}}>2/s} \cdot 1 + \1_{\max_p \P{}{P = p|\Pi_{\leq i} = \pi_{\leq i}}\leq 2/s} \cdot \frac{2}{s} \right) \nonumber \\
\leq &\frac{2}{s} + \sum_{\pi_{\leq i}} \P{}{\Pi_{\leq i}=\pi_{\leq i}} \cdot \left(\1_{\max_p \P{}{P=p|\Pi_{\leq i} = \pi_{\leq i}}>2/s}  \right) \nonumber \\
\leq & \frac{2}{s} + \sum_p\sum_{\pi_{\leq i}} \P{}{\Pi_{\leq i}=\pi_{\leq i}} \cdot \left(\1_{ \P{}{P=p|\Pi_{\leq i} = \pi_{\leq i}}>2/s}  \right). \label{eq:triangle}
\end{align}

Now consider some specific $p$. We know that

\begin{align*}
\P{}{P=p|\Pi_{\leq i} = \pi_{\leq i}} =&\; \frac{\P{}{P= p, \Pi_{\leq i}= \pi_{\leq i}}}{\P{}{\Pi_{\leq i}=\pi_{\leq i}}} \\
	=&\; \P{}{P=p} \cdot \frac{\P{}{\Pi_{\leq i} = \pi_{\leq i}|P=p}}{\P{}{\Pi_{\leq i}=\pi_{\leq i}}}~~~~~~\text{(Bayes' rule)} \\
 	=&\; \frac{1}{s}  \cdot \frac{\P{}{\Pi_{\leq i} = \pi_{\leq i}|P=p}}{\P{}{\Pi_{\leq i} = \pi_{\leq i}}}~~~~~~\text{(uniformity of $P$)}.
\end{align*}

For $j = n_\ell+1,...,i$, define random variable $$X_j = \log \left(\frac{\P{}{\Pi_j|\Pi_{<j} , P=p}}{\P{}{\Pi_j|\Pi_{<j}}}\right).$$ As we ultimately want to upper bound the quantity in Equation~\ref{eq:triangle}, we now focus on bounding these $X_j$.

Recall that $r_j$ is user $j$'s level if that level is non-zero (i.e. user $j$ is not a ``dummy'' user). Otherwise $r_j$ is $d-\ell+1$ for $j = n_\ell+1,...,n_\ell+ n/d$. If $r_j \neq d-\ell+1$, by Claim \ref{clm:lb_prod}, we know that conditioned on $\Pi_{<j}$, $\Pi_j$ is independent of $P$. Therefore when $r_j \neq d - \ell + 1$, $X_j = \log(1) = 0$.

If instead $r_j = d-\ell+1$, we know the level $\ell_j$ of the user $j$ is $0$ with probability $d/(d+1)$ and $d-\ell+1$ with probability $1/(d+1)$. If $\ell_j = 0$, then the user is a ``dummy'' and since the user has no private data about $P$, $\Pi_j$ is independent of $P$ conditioned on $\Pi_{<j}$. Call the input distribution of the $j$-th user $q_j$. Here, we recall Lemmas 3 and 4 from~\citet{DJW13} and restate a simplified version as Lemma~\ref{lem:djws}

\begin{lemma}
\label{lem:djws}
	Let $m_1$ and $m_2$ be the output distributions of an $(\eps,0)$-local randomizer in a sequentially interactive protocol given, respectively, input distributions $q_j \mid \Pi_{<j}, P = p$ and $q_j \mid \Pi_{<j}$. Then $$\left| \log\left(\frac{m_1(z)}{m_2(z)}\right)\right| \leq \min(2,e^\eps)(e^\eps-1) \cdot \tv{(q_j \mid \Pi_{<j}, P = p)}{(q_j \mid \Pi_{<j})}.$$
\end{lemma}

We know that $\tv{(q_j|\Pi_{<j} = \pi_{<j},P=p)}{(q_j|\Pi_{<j} = \pi_{<j})} \leq 1/(d+1)$, as the difference stems from the event that $\ell_j = d - \ell + 1$. Thus, by Lemma~\ref{lem:djws} $$|X_j| \leq 2(e^{\varepsilon}-1)/(d+1) < 2(e^{\varepsilon}-1)/d.$$ Next, we bound the conditional expectation of $X_j$:

\begin{align*}
\E{}{X_j |\Pi_{<j} = \pi_{<j}} =&\; \sum_{\pi_j} \P{}{\Pi_j = \pi_j|\Pi_{<j} = \pi_{<j}}\cdot \log \left(\frac{\P{}{\Pi_j=\pi_j|\Pi_{<j}=\pi_{<j} , P=p}}{\P{}{\Pi_j=\pi_j|\Pi_{<j}=\pi_{<j}}}\right)\\
	=&\; -\kl{(\Pi_j|\Pi_{<j} = \pi_{<j},P=p)}{(\Pi_j|\Pi_{<j} = \pi_{<j})} \leq 0.
\end{align*}

Therefore $X_{n_\ell+1}, X_{n_\ell+1}+X_{n_\ell+2},...,X_{n_\ell+1}+ \cdots +X_{i}$ form a supermartingale. Next, we use the above bounds on these $X_j$ to control their sum using the Azuma-Hoeffding inequality:

\begin{align*}
\P{}{ X_{n_\ell+1}+ \cdots + X_i >\log(2)} \leq&\; \exp\left(-\frac{\log^2(2)}{2(2(e^{\varepsilon}-1)/d)^2(i-n_\ell)}\right)\\
	\leq&\; \exp\left(-\frac{\log^2(2)}{2(2(e^{\varepsilon}-1)/d)^2(n/d)}\right)\\
	\leq&\; \frac{1}{d^8} = \frac{1}{sd^4}.
\end{align*}

By the chain rule and Bayes' rule, we know $$X_{n_\ell+1}+ \cdots + X_i = \log\left(\frac{\P{}{\Pi_{\leq i}|P=p}}{\P{}{\Pi_{\leq i}}}\right) = \log\left(s\cdot \P{}{P=p|\Pi_{\leq i}}\right).$$ Therefore

\begin{align*}
	\sum_{\pi_{\leq i}} \P{}{\Pi_{\leq i}=\pi_{\leq i}} \cdot \left(\1_{ \P{}{P=p|\Pi_{\leq i} = \pi_{\leq i}}>2/s}  \right) =&\; \sum_{\pi_{\leq i}} \P{}{\Pi_{\leq i}=\pi_{\leq i}} \cdot \left(\1_{s\cdot \P{}{P=p|\Pi_{\leq i}= \pi_{\leq i}}>2}  \right) \\
	=&\; \P{}{ X_{n_\ell+1} + \cdots + X_i >\log(2)} \\
	\leq&\; \frac{1}{sd^4}.
\end{align*}

Tracing the above inequality back through Equation~\ref{eq:triangle}, we have
\begin{align*}
	\sum_{\pi_{\leq i}} \P{}{\Pi_{\leq i}=\pi_{\leq i}} \cdot \left( \max_e \P{}{P = p|\Pi_{\leq i} = \pi_{\leq i}} \right) \leq&\; \frac{2}{s} + \sum_p\sum_{\pi_{\leq i}} \P{}{\Pi_{\leq i}=\pi_{\leq i}} \cdot \left(\1_{ \P{}{P=p|\Pi_{\leq i} = \pi_{\leq i}}>2/s}  \right)\\
	\leq&\; \frac{2}{s} + s \cdot \frac{1}{sd^4} = \frac{3}{d^4}.
\end{align*}
\end{proof}

In Claim~\ref{clm:lb_info}, we bound the information $\Pi$ contains about $Z_{|P}$ (for a primer on information theory, see Appendix~\ref{sec:info}). Intuitively, by Claim~\ref{clm:lb_maxp} users have little information about $P$, and as a result they cannot know which potential subtree $Z_{|p}$ to focus their privacy budget on.

\begin{claim}
\label{clm:lb_info}
	$$\sum_{p} \P{}{P=p} \cdot I(\Pi;Z_{|p}|P=p) \leq 1/(18d^2).$$
\end{claim}
\begin{proof}
	By the inductive hypothesis, $Z$ is sampled from $\D \in \Delta_\ell$. Define $Z_{|<p}$ to be \\$Z_{|p_1,...,p_{d-\ell},0},...,Z_{|p_1,...,p_{d-\ell}, p_{d-\ell+1}-1}$. By the definition of $\Delta_l$, we know $Z_{|<p}$ and $Z_{|p}$ are independent given $P$, so $I(Z_{|<p};Z_{|p}|P=p)  = 0$. Therefore by the chain rule for mutual information, we get
\begin{align*}
 I(\Pi;Z_{|p}|P=p)  \leq  &I(\Pi,Z_{|<p};Z_{|p}|P=p) \\
 =& I(Z_{|<p};Z_{|p}|P=p) + I(\Pi;Z_{|p}|P=p,Z_{|<p})\\
  =&0 +  I(\Pi;Z_{|p}|P=p,Z_{|<p}).
\end{align*}

The main step of the proof of this claim is to compare $I(\Pi_i;Z_{|p}|P=p, \Pi_{<i}=\pi_{<i}, Z_{|<p})$ and $ I(\Pi_i;Z_{|p}| \Pi_{<i}=\pi_{<i}, Z_{|<p})$. First, by Claim \ref{clm:lb_prod}, conditioning on $\Pi_{<i} = \pi_{<i}$ induces a product distribution for $Z_1,...,Z_d$. We also know that (as mentioned in the proof of Claim~\ref{clm:lb_prod}) conditioned on $\Pi_{<i} = \pi_{<i}$, $\Pi_i$ only depends on $Z_{r_i}$, the internal randomness of the user $i$, and their level $\ell_i$. By item 3 in the definition of $\Delta_\ell$, $P$ only depends on $Z_{d-\ell+1}$. We prove

\begin{equation}
	I(\Pi_i;Z_{|p}|P=p, \Pi_{<i}=\pi_{<i}, Z_{|<p}) = I(\Pi_i;Z_{|p}| \Pi_{<i}=\pi_{<i}, Z_{|<p}). \label{eq:square}.
\end{equation}

There are two cases depending on $r_i$.
\begin{itemize}
	\item When $r_i  \leq d-\ell+1$, user $i$ either has $\ell_i \leq d - \ell + 1$ or is a ``dummy'' user. Therefore, whether or not we condition on $P=p$, $\Pi_i$ is independent of $Z_{|p},Z_{|<p}$. Thus $$I(\Pi_i;Z_{|p}|P=p, \Pi_{<i}=\pi_{<i}, Z_{|<p}) =0 =  I(\Pi_i;Z_{|p}| \Pi_{<i}=\pi_{<i}, Z_{|<p}).$$
	\item When $r_i > d-\ell+1$, once we've conditioned on $\Pi_{<i} = \pi_{<i}$, additionally conditioning on $P=p$ does not change the joint distribution of $Z_{d-\ell+2},...,Z_d$.  This is because $P = P_1, \ldots, P_{d - \ell + 1}$ and by above conditioning on $\Pi_{<i} = \pi_{<i}$ induces a product distribution on $Z_1, \ldots, Z_d$ (and in particular on $Z_{d-\ell+2},...,Z_d$). It follows that conditioning on $P=p$ does not change the joint distribution of $Z_{|p},Z_{|<p},\Pi_i$. Thus $$I(\Pi_i;Z_{|p}|P=p, \Pi_{<i}=\pi_{<i}, Z_{|<p}) = I(\Pi_i;Z_{|p}| \Pi_{<i}=\pi_{<i}, Z_{|<p}).$$
\end{itemize}

Putting things together, we have
\begin{align}
&\sum_{p} \P{}{P=p} \cdot I(\Pi;Z_{|p}|P=p) \nonumber \\
\leq&\sum_p \P{}{P=p} \cdot I(\Pi;Z_{|p}|P=p,Z_{|<p}) \nonumber \\
=& \sum_p \sum_{i =n_\ell+1}^{n_\ell+n/d} \P{}{P=p} \cdot I(\Pi_i;Z_{|p}|P=p, \Pi_{<i}, Z_{|<p}) \nonumber \\
=&\sum_{i =n_\ell+1}^{n_\ell+n/d} \sum_{\pi_{<i}}\sum_p\P{}{P=p} \cdot\P{}{\Pi_{<i} = \pi_{<i}|P=p}\cdot I(\Pi_i;Z_{|p}|P=p, \Pi_{<i}=\pi_{<i}, Z_{|<p}) \nonumber \\
=&\sum_{i =n_\ell+1}^{n_\ell+n/d}\sum_{\pi_{<i}}\sum_p\P{}{\Pi_{<i} = \pi_{<i}} \cdot\P{}{P=p|\Pi_{<i} = \pi_{<i}}\cdot I(\Pi_i;Z_{|p}|P=p, \Pi_{<i}=\pi_{<i}, Z_{|<p})~~~~~~\text{(Bayes' rule)} \nonumber \\
=&\sum_{i =n_\ell+1}^{n_\ell+n/d}\sum_{\pi_{<i}}\sum_p\P{}{\Pi_{<i} = \pi_{<i}} \cdot\P{}{P=p|\Pi_{<i} = \pi_{<i}}\cdot I(\Pi_i;Z_{|p}| \Pi_{<i}=\pi_{<i}, Z_{|<p})~~~~~~\text{(Equation~\ref{eq:square})} \nonumber \\
\leq&\sum_{i =n_\ell+1}^{n_\ell+n/d}\sum_{\pi_{<i}}\left(\sum_p\P{}{\Pi_{<i} = \pi_{<i}} \cdot I(\Pi_i;Z_{|p}| \Pi_{<i}=\pi_{<i}, Z_{|<p})\right)\cdot\left(\max_p\P{}{P=p|\Pi_{<i} = \pi_{<i}}\right) \nonumber \\
\leq&\sum_{i =n_\ell+1}^{n_\ell+n/d} \sum_{\pi_{<i}}\P{}{\Pi_{<i} = \pi_{<i}} \cdot I(\Pi_i;Z| \Pi_{<i}=\pi_{<i})\cdot\left(\max_p\P{}{P=p|\Pi_{<i} = \pi_{<i}}\right) \label{eq:bookmark}.
\end{align}

We now bound $I(\Pi_i;Z| \Pi_{<i}=\pi_{<i})$ using Theorem 1 from~\citet{DJW13}, simplified here as Lemma~\ref{lem:djw_2}.

\begin{lemma}
\label{lem:djw_2}
	Let $\Pi$ be the distribution over randomizer outputs for an $\eps$-local randomizer with inputs drawn from a distribution family parametrized by $\cV$. Then $I(\Pi; \cV) \leq 4(e^\eps-1)^2$.
\end{lemma}

In particular, the proof of Lemma~\ref{lem:djw_2} implies that $I(\Pi_i;Z| \Pi_{<i}=\pi_{<i}) \leq 4(e^\eps-1)^2.$ We continue our chain of inequalities.

\begin{align*}
	(\ref{eq:bookmark}) \leq&\; \sum_{i =n_\ell+1}^{n_\ell+n/d} \sum_{\pi_{<i}}\P{}{\Pi_{<i} = \pi_{<i}} \cdot 4(e^\varepsilon-1)^2 \cdot\left(\max_p\P{}{P=p|\Pi_{<i} = \pi_{<i}}\right)\\
	\leq&\; \frac{n}{d} \cdot (e^\varepsilon-1)^2 \cdot \frac{12}{d^4} ~~~\text{(Claim \ref{clm:lb_maxp})}\\
	\leq&\; \frac{1}{18d^2}.
\end{align*}
\end{proof}

In our last claim, we convert the bound on mutual information from Claim \ref{clm:lb_info} into a bound on the $L_1$ distance between distributions.
\begin{claim}
\label{clm:lb_l1}
There exists a distribution $\D'(\pi)$ which is a mixture of distributions in $\Delta_{\ell-1}$ for each $\pi$ such that $$\sum_{\pi} \Pr[\Pi=\pi] \cdot \|( \D|(\Pi=\pi)) -\D'(\pi) \|_1 \leq 1/(3d).$$
\end{claim}

\begin{proof}
By the definition of mutual information in terms of KL-divergence, $$I(\Pi; Z_{|p}|P=p) = \kl{\P{}{\Pi,Z_{|p}|P=p}}{\P{}{\Pi|P=p} \times \P{}{Z_{|p}|P=p}}.$$ Next, by Pinsker's inequality,

\begin{align*}
	&\;\sum_{\pi,z_{|p}} \left|\P{}{\Pi=\pi,Z_{|p}=z_{|p}|P=p} - \P{}{\Pi=\pi|P=p} \times \P{}{Z_{|p}=z_{|p}|P=p}\right| \\
	\leq&\; \sqrt{2 \kl{\P{}{\Pi,Z_{|p}|P=p}}{\P{}{\Pi|P=p} \times \P{}{Z_{|p}|P=p}}}
\end{align*}

\noindent so we may upper bound

\begin{align*}
&\sum_p \P{}{P=p} \cdot \sum_{\pi,z_{|p}} \left|\P{}{\Pi=\pi,Z_{|p}=z_{|p}|P=p} - \P{}{\Pi=\pi|P=p} \times \P{}{Z_{|p}=z_{|p}|P=p}\right|\\
 \leq &\sum_p \P{}{P=p} \cdot \sqrt{2 \kl{\P{}{\Pi,Z_{|p}|P=p}}{\P{}{\Pi|P=p} \times \P{}{Z_{|p}|P=p}}}\\
 = &\sum_p \P{}{P=p} \cdot \sqrt{2I(\Pi; Z_{|p}|P=p)}~~~~~~\text{(definition of mutual information)}\\
 \leq &  \sqrt{2\sum_p \P{}{P=p} \cdot 2I(\Pi; Z_{|p}|P=p)} ~~~~~\text{(Jensen's inequality and concavity of $\sqrt{\cdot}$)}\\
 \leq & 1/(3d) ~~~~~\text{(Claim \ref{clm:lb_info})}.
\end{align*}

On the other hand, we can also lower bound

\begin{align}
	&\;\sum_p \P{}{P=p} \cdot \sum_{\pi,z_{|p}} \left|\P{}{\Pi=\pi,Z_{|p}=z_{|p}|P=p} - \P{}{\Pi=\pi|P=p} \times \P{}{Z_{|p}=z_{|p}|P=p}\right| \nonumber \\
	=&\sum_p \P{}{P=p} \cdot \sum_{\pi} \P{}{\Pi=\pi|P=p} \cdot \sum_{z_{|p}} \left| \P{}{Z_{|p}=z_{|p}|\Pi=\pi,P=p}-\P{}{Z_{|p}=z_{|p}|P=p}\right| \nonumber \\
	=&\sum_{\pi} \P{}{\Pi=\pi} \cdot \sum_p \P{}{P=p|\Pi=\pi} \cdot \sum_{z_{|p}} \left| \P{}{Z_{|p}=z_{|p}|\Pi=\pi,P=p}-\P{}{Z_{|p}=z_{|p}|P=p}\right| \nonumber \\
	=&\sum_{\pi} \P{}{\Pi=\pi} \cdot \sum_p \P{}{P=p|\Pi=\pi} \cdot \nonumber \\
	& \sum_{z} \left| \P{}{Z = z|\Pi=\pi,P=p}-\P{}{Z_{|p}=z_{|p}|P=p} \cdot \P{}{Z=z|\Pi=\pi,P=p,Z_{|p} = z_{|p}}\right| \nonumber \\
	\geq&\sum_{\pi} \P{}{\Pi=\pi} \cdot \sum_{z} \big| \P{}{Z=z|\Pi=\pi}- \nonumber\\
	&\sum_p \P{}{P=p|\Pi=\pi} \cdot \P{}{Z_{|p}=z_{|p}|P=p}\cdot \P{}{Z=z|\Pi=\pi,P=p,Z_{|p} = z_{|p}}\big| \label{eq:swirl}
\end{align}

\noindent where the last equality comes from multiplying by $$1 = \sum_z \P{}{Z = z \mid \tra = \trar, P = p, Z_{\mid p} = z_{\mid p}}$$ and the inequality uses $$\P{}{Z = z \mid \tra = \trar} = \sum_p \P{}{Z = z \mid \tra = \trar, P = p} \cdot \P{}{P = p \mid \tra = \trar}$$ and the triangle inequality. With the preceding upper bound, the quantity in Equation~\ref{eq:swirl} is $\leq 1/(3d)$.

Now, define $\D'(\pi)$ to be the distribution on $Z$ such that for all $z$,
$$\P{Z\sim \D'(\pi)}{Z=z}=\sum_p\P{}{P=p|\Pi=\pi} \cdot \P{}{Z_{|p}=z_{|p}|P=p}\cdot \P{}{Z=z|\Pi=\pi,P=p,Z_{|p} = z_{|p}}.$$
Equivalently, $Z\sim \D'(\pi)$ is sampled through the following procedure: (1) sample $P$ according to $P \mid \tra = \trar$, (2) sample $Z_{|p}$ according to $Z_{\mid p} \mid P = p$, and (3) sample $Z$ according to $Z \mid \tra = \trar, P = p, Z_{\mid p} = z_{\mid p}$.

Noting that $\P{Z\sim \D|(\Pi=\pi)}{Z=z} = \P{}{Z=z|\Pi=\pi}$ for all $z$, since the quantity in Equation~\ref{eq:swirl} is $\leq 1/(3d)$ we get $$\sum_{\pi} \Pr[\Pi=\pi] \cdot \|( \D|(\Pi=\pi)) -\D'(\pi) \|_1 \leq 1/(3d).$$

It remains to show that $\D'(\pi)$ is a mixture of distributions in $\Delta_{\ell-1}$; doing so will complete our proof of the original inductive step. We will show for any $z_1,...,z_{d-\ell+1}$ such that $\P{Z\sim\D'(\pi)}{Z_1,...,Z_{d-\ell+1}=z_1,...,z_{d-\ell+1} }\neq 0$, $\D'(\pi) \mid (Z_1,...,Z_{d-\ell+1}=z_1,...,z_{d-\ell+1})$ is a distribution in $\Delta_{\ell-1}$. Recalling that membership in $\Delta_{\ell-1}$ requires meeting three conditions, we verify these conditions below.

\begin{enumerate}
	\item By Claim \ref{clm:lb_prod}, we know $\D|(\Pi=\pi)$ is a product distribution on $Z_1,...,Z_d$. It is easy to check that as $\D'(\pi)$ is sampled according to $\D|(\Pi=\pi)$, $\D'(\pi)$ is also a product distribution on $Z_1,...,Z_d$, and after the conditioning, $\D'(\pi)|(Z_1,...,Z_{d-\ell+1}=z_1,...,z_{d-\ell+1})$ remains a product distribution on $Z_1,...,Z_d$.
	\item Since we draw the final $Z$ conditioned on $Z_{\mid p} = z_{\mid p}$, $Z_i$ is deterministically fixed for $i = 1, \ldots, d - \ell$.
	\item First, note that the marginal distribution of $\D|(P=p)$ on $Z_{|p}$ is uniform since $D \mid (\tra = \trar)$ induces a product distribution on $Z_1, \ldots, Z_d$, and conditioning on $P = p$ only fixes $Z_{\leq d- \ell + 1}$ and leaves $Z_{d - \ell + 2} \times \cdots Z_d$ as a product distribution. Thus $$\P{Z \sim \D'(\pi)|(Z_{\leq d - \ell + 1} = z_{\leq d - \ell + 1})}{Z_{|p}  = z_{|p}} = \P{}{Z_{|p}=z_{|p} \mid P=p}$$
so the marginal distribution of $\D'(\pi)|(Z_1,...,Z_{d-\ell+1}=z_1,...,z_{d-\ell+1})$ on $Z_{|p}$ is also the uniform distribution. Therefore $\D'(\pi)|(Z_1,...,Z_{d-\ell+1}=z_1,...,z_{d-\ell+1})$ is a distribution in $\Delta_{\ell-1}$ and $\D'(\pi)$ is a mixture of distributions in $\Delta_{\ell-1}$.
\end{enumerate}

\end{proof}

\paragraph{Base case ($\ell = 1$):} We finally discuss the base case of our induction. Define $\A$, $\Pi$ and $P$ as in the induction step. Since the output of $\A$ is a function of $\Pi$, $$\P{}{\A \text{ outputs } P(Z)} \leq \sum_{\pi} \P{}{\Pi = \pi} \cdot \max_p \P{}{P=p | \Pi = \pi}.$$
Since Claim \ref{clm:lb_maxp} also applies to the base  case, we get $$
\P{}{\A \text{ outputs } P(Z)} \leq 3/d^4  < 1/3 < 1/3 + 1/(3d).$$
\end{proof} 
\section{Hypothesis Testing}
\label{sec:hyp}
We now turn our attention to the role of interactivity in hypothesis testing. We first show that for the simple hypothesis testing problem, there exists a non-interactive $(\epsilon, 0)$-LDP protocol that achieves optimal sample complexity. This result extends to the compound hypothesis testing case, when we make the additional assumption that the sets of distributions are convex and compact.

\subsection{Simple Hypothesis Testing}
Let $P_0$ and $P_1$ be two known distributions such that $\tv{P_0}{P_1} \geq \alpha$, and suppose one of $P_0$ and $P_1$ generates $n$ i.i.d. samples $x_1, \ldots, x_n$. The goal in \emph{simple hypothesis testing} is to determine whether the samples are generated by $P_0$ or $P_1$. The Neyman-Pearson lemma~\cite{NP33} establishes that the likelihood ratio test is optimal for this problem absent privacy, and recent work~\cite{CKMSU18} extends this idea to give an optimal (up to constants) private simple hypothesis test in the centralized model of differential privacy. We recall a simple folklore non-interactive hypothesis test in the local model, and then prove that it is optimal even among the set of all fully interactive locally private tests.
\subsubsection{(Folklore) Upper Bound}
Consider the following simple variant $\A$ of the likelihood ratio test: each user $i$ with input $x_i$ outputs $\RR{\eps}{\arg \max_{j \in \{0,1\}} P_j(x_i)}$. For $j \in \{0,1\}$ let $\hat N_j$ denote the resulting count of responses and let $\hat N_j' = \tfrac{e^\eps+1}{e^\eps-1} \cdot \left(\hat N_j - \tfrac{n}{e^\eps-1} \right)$ be the corresponding de-biased count. The analyst computes both quantities $\hat N_j'$ and outputs $P_{\arg \max_j \hat N_j'}$.

\begin{algorithm}
\caption{Locally Private Simple Hypothesis Tester $\A$}\label{alg:lph}
	\begin{algorithmic}[1]
	\Procedure{Noninteractive Protocol}{$\{x_i\}_{i=1}^n$}
		\For{$i = 1 \ldots n$}
			\State User $i$ publishes $y_i \gets \RR{\arg \max_{j \in \{0,1\}} P_j(x_i), \eps}$
		\EndFor
		\For{$j = 0, 1$}
			\State Analyst computes $\hat N_j \gets |\{y_i \mid y_i = j\}|$
			\State Analyst computes $\hat N_j' \gets \tfrac{e^\eps+1}{e^\eps-1} \cdot \left(\hat N_j - \tfrac{n}{e^\eps-1} \right)$
		\EndFor
		\State Analyst outputs $P_{\arg \max_j \hat N_j'}$
	\EndProcedure
	\end{algorithmic}
\end{algorithm}

It is immediate that $\A$ is noninteractive and, since it relies on randomized response, satisfies $(\eps,0)$-local differential privacy. Since we can bound its sample complexity by simple concentration arguments, we defer the proof to Appendix~\ref{sec:app_hpub}.

\begin{theorem}
\label{thm:hp_ub}
	With probability at least $2/3$, $\A$ distinguishes between $P_0$ and $P_1$ given $n =  \Omega\left(\tfrac{1}{\eps^2\alpha^2}\right)$ samples.
\end{theorem}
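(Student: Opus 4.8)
The plan is to prove Theorem~\ref{thm:hp_ub} by a direct concentration argument; nothing about interactivity or privacy is at issue. That $\A$ is noninteractive and $(\eps,0)$-locally private is immediate, since each user runs $\eps$-randomized response exactly once and nothing further is exchanged. So I would fix an unknown $\beta\in\{0,1\}$, assume the $n$ samples are i.i.d.\ from $P_\beta$, and show that $\A$ recovers $P_\beta$ with probability at least $2/3$ whenever $n=\Omega(1/(\eps^2\alpha^2))$.

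First I would isolate the deterministic quantity driving the analysis. Let $A=\{x\in\X : P_0(x)\ge P_1(x)\}$, with ties broken arbitrarily, so that $b(x):=\arg\max_{j\in\{0,1\}}P_j(x)$ equals $0$ exactly on $A$. By the variational characterization of total variation distance, $\tv{P_0}{P_1}=P_0(A)-P_1(A)$, hence $P_0(A)-P_1(A)\ge\alpha$. Each user publishes $y_i$, which equals $b(x_i)$ with probability $\tfrac{e^\eps}{e^\eps+1}$ and the opposite bit otherwise, so $\E{}{\hat N_0}=n\cdot\tfrac{1+(e^\eps-1)P_\beta(A)}{e^\eps+1}$; the affine de-biasing in Algorithm~\ref{alg:lph} is chosen precisely so that, setting $D:=\hat N_0'-\hat N_1'=\tfrac{e^\eps+1}{e^\eps-1}\sum_{i=1}^n\bigl(2\,\mathbbm{1}[y_i=0]-1\bigr)$, we get $\E{}{D}=n(2P_\beta(A)-1)$. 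Thus the conditional mean of $D$ under $H_0$ exceeds that under $H_1$ by exactly $2n(P_0(A)-P_1(A))\ge 2n\alpha$. The point that needs care is the decision rule: the analyst should output $P_0$ exactly when $D$ exceeds the midpoint $n(P_0(A)+P_1(A)-1)$ of its two conditional means (a quantity computable from the known $P_0,P_1$); the $\arg\max$ comparison is the threshold-$0$ special case, correct precisely when $P_1(A)\le\tfrac12\le P_0(A)$.

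Second I would apply Hoeffding's inequality. The expression for $D$ above writes it as a sum of $n$ i.i.d.\ terms, each in an interval of length $R:=\tfrac{2(e^\eps+1)}{e^\eps-1}$, so $\P{}{\lvert D-\E{}{D}\rvert\ge n\alpha}\le 2\exp(-2n\alpha^2/R^2)$. Since the midpoint threshold is at distance at least $n\alpha$ from each conditional mean, a deviation below $n\alpha$ already produces the correct output, so $\A$ errs with probability at most $2\exp(-2n\alpha^2/R^2)\le 1/3$ once $n\ge R^2\ln(6)/(2\alpha^2)$. Using $\tfrac{e^\eps+1}{e^\eps-1}=O(1/\eps)$ (for $\eps\le 1$) gives $R^2=O(1/\eps^2)$ and hence $n=\Omega(1/(\eps^2\alpha^2))$, as claimed.

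The argument is routine, so the only ``hard'' parts are minor bookkeeping: checking that the de-biasing exactly inverts the randomized-response bias so that $D$ is genuinely centred at $n(2P_\beta(A)-1)$, and being precise that the test compares $D$ to the midpoint of its two conditional means rather than naively comparing $\hat N_0'$ to $\hat N_1'$ --- the latter only works when those means straddle $0$. The $1/\eps^2$ in the sample complexity comes entirely from the $\Theta(1/\eps)$ range of each de-biased summand.
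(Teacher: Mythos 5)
Your proof is correct and follows essentially the same route as the paper's: de-bias the randomized-response counts, apply a Chernoff/Hoeffding bound to get $O(\sqrt{n}/\eps)$ accuracy on the de-biased counts, and use the variational identity $P_0(A)-P_1(A)=\tv{P_0}{P_1}\geq\alpha$ for $A=\{x: P_0(x)\geq P_1(x)\}$ to conclude that $n=\Omega(1/(\eps^2\alpha^2))$ suffices. The one place you go beyond the paper is the decision rule, and your care there is warranted: Algorithm~\ref{alg:lph} literally outputs $P_{\arg\max_j \hat N_j'}$, and the paper's proof only asserts that accurate estimates of the counts suffice ``to distinguish'' without checking that this particular threshold does the job. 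As you note, the $\arg\max$ rule is the threshold-$0$ test and fails whenever $P_0(A)$ and $P_1(A)$ lie on the same side of $1/2$ (e.g., $P_0(A)=0.9$, $P_1(A)=0.8$, $\alpha=0.1$: both hypotheses then give $\hat N_0'>\hat N_1'$ with high probability and the test always outputs $P_0$). Your midpoint threshold $n(P_0(A)+P_1(A)-1)$, computable from the known hypotheses, is the standard fix and makes the argument airtight; so your write-up is the same approach, executed more carefully than the paper's.
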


\subsubsection{A Lower Bound for Arbitrarily Adaptive $(\eps,\delta)$-Locally Private Tests}
\label{subsubsec:lb}
We now show that the folklore $\epsilon$-private non-interactive test is optimal amongst all $(\epsilon,\delta)$-private fully interactive tests.   First, combining (slightly modified versions of) Theorem 6.1 from~\citet{BNS18} and Theorem A.1 in~\citet{CSUZZ18}, we get the following result\footnote{~\citet{BNS18} and~\citet{CSUZZ18} prove their results for noninteractive protocols. However, their constructions both rely on replacing a single $(\eps,\delta)$-local randomizer call for each user with an $(O(\eps),0)$-local randomizer call and proving that these randomizers induce similar output distributions. Since each user still makes a single randomizer call in sequential interactive protocols, essentially the same argument applies. For fully interactive protocols, a naive modification of the same result forces a stronger restriction on $\delta$, roughly $\delta = \tilde o\left(\frac{\eps \beta}{\max(n,T)}\right)$.}

\begin{lemma}
\label{lem:app_to_pure}
	Given $\eps > 0$, $\delta < \min\left(\tfrac{\epsilon\beta}{48n\ln(2n/\beta)}, \frac{\beta}{64n\ln(n/\beta)e^{7\eps}}\right)$ and sequentially interactive $(\eps,\delta)$-locally private protocol $\A$, there exists a sequentially interactive $(10\eps,0)$-locally private protocol $\A'$ such that for any dataset $U$, $\tv{\A(U)}{\A'(U)} \leq \beta$.
\end{lemma}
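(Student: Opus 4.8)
The plan is to prove Lemma~\ref{lem:app_to_pure} by ``purifying'' the protocol one user at a time and then stitching the purified pieces together with a hybrid argument, which is exactly the noninteractive strategy of \citet{BNS18} and \citet{CSUZZ18} adapted to use the single-touch property of sequential interactivity.

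\textbf{Step 1 (reduce to a single randomizer).} First I would record the structural fact that makes the reduction local: because $\A$ is sequentially interactive, each user $i\in[n]$ is queried exactly once, at some round $t_i$, and the randomizer $R^{(i)}$ assigned there is a function only of the transcript prefix $\trar_{<t_i}$, which is independent of $x_i$ since user $i$ has produced no message before round $t_i$. Hence, conditioned on any transcript prefix, $R^{(i)}$ is a fixed $(\eps,\delta)$-local randomizer, and it suffices to exhibit a black box that turns an arbitrary $(\eps,\delta)$-local randomizer $R\colon\X\to\Y$ into a $(10\eps,0)$-local randomizer $R'$ with $\tv{R(x)}{R'(x)}\le\beta/n$ for \emph{every} input $x\in\X$.

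\textbf{Step 2 (the purification black box).} Here I would invoke the slightly modified versions of Theorem~6.1 of \citet{BNS18} and Theorem~A.1 of \citet{CSUZZ18}. Both construct a pure-DP randomizer from $R$ by operating directly on $R$'s output distributions --- truncating the outcomes that a reference input's distribution puts below a threshold of order $\delta$, and mixing in a small amount of a data-independent distribution --- so that the multiplicative $e^{O(\eps)}$ bound holds on all outcomes at once while each output distribution moves by at most $O(\delta\eps^{-1}\ln(n/\beta))$ (respectively $O(\delta e^{7\eps}\ln(n/\beta))$) in total variation. Setting the target perturbation to $\beta/n$ and solving for the admissible $\delta$ reproduces exactly the two expressions inside the $\min$ in the hypothesis; tracking the privacy blowup of the two constructions and taking whichever applies in the given parameter regime gives the factor $10$ in $10\eps$. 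The one point that needs checking relative to the cited theorems --- the substance of the ``slight modification'' --- is that this guarantee is worst-case over the input $x$ rather than in expectation over a data distribution, which holds because the construction only manipulates the randomizer's output distributions and never looks at the data.

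\textbf{Step 3 (define $\A'$ and conclude).} Let $\A'$ be identical to $\A$ except that whenever $\A$ assigns the transcript-dependent randomizer $R^{(i)}$ to user $i$, $\A'$ assigns its purified version $R'^{(i)}$ from Step~2. Then $\A'$ has the same query schedule, so it is sequentially interactive and queries each user once with a single $(10\eps,0)$-local randomizer, hence is $(10\eps,0)$-locally private. For closeness, fix a dataset $U=(x_1,\dots,x_n)$ and interpolate through hybrids $H_0=\A(U),H_1,\dots,H_n=\A'(U)$, where $H_j$ uses the purified randomizer for the first $j$ queried users and the original one thereafter. Coupling $H_{j-1}$ and $H_j$ to share all randomness and messages through the $(j-1)$-st queried user, the two experiments then apply to the $j$-th queried user, on the same input, a pair of randomizers that differ by at most $\beta/n$ in total variation, and everything after that message is post-processing; so $\tv{H_{j-1}(U)}{H_j(U)}\le\beta/n$, and summing gives $\tv{\A(U)}{\A'(U)}\le\beta$. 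The main obstacle is Step~2: confirming that the BNS/CSUZZ purifications, built for a single noninteractive release, carry over with a per-input TV bound holding uniformly over all transcript prefixes and with constants that yield precisely the stated $10\eps$ and $\delta$-thresholds; the hybrid argument of Step~3 is then routine.
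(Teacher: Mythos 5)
Your proposal is correct and follows essentially the same route as the paper, which likewise obtains this lemma by citing the purification constructions of Theorem 6.1 of BNS18 and Theorem A.1 of CSUZZ18 and observing (in a footnote) that sequential interactivity lets one apply them per user since each user invokes a single transcript-dependent randomizer. Your Steps 1 and 3 simply make explicit the per-prefix reduction and the hybrid/coupling argument that the paper leaves implicit, with the same black-box reliance on the cited theorems for the $10\eps$ and $\delta$ constants.
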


Lemma~\ref{lem:app_to_pure} enables us to apply existing lower bound tools for $\eps$-locally private protocols to (sequentially interactive) $(\eps,\delta)$-locally private protocols. At a high level, our proof relies on controlling the Hellinger distance between transcript distributions induced by an $(\eps,\delta)$-locally private protocol when samples are generated by $P_0$ and $P_1$. We borrow a simulation technique used by~\citet{BGMNW16} for a similar (non-private) problem and find that we can control this Hellinger distance by bounding the KL divergence between a simpler, \emph{noninteractive} pair of transcript distributions. We accomplish this last step using existing tools from~\citet{DJW13}.

\begin{theorem}
\label{thm:hypothesis_lb}
	Let $\tv{P_0}{P_1} = \alpha$ and let $\Pi$ be an arbitrary (possibly fully interactive) $(\eps,\delta)$-locally private simple hypothesis testing protocol distinguishing between $P_0$ and $P_1$ with probability $\geq 2/3$ using $n$ samples where $\eps > 0$ and $\delta < \min\left(\tfrac{\eps^3\alpha^2}{48n\ln(2n/\beta)}, \frac{\eps^2\alpha^2}{64n\ln(n/\beta)e^{7\eps}}\right)$. Then $n =  \Omega\left(\tfrac{1}{\eps^2 \alpha^2}\right).$
\end{theorem}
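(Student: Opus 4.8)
The plan is to show that if $n$ is too small then the distribution $M_0$ of the transcript when the users' data is drawn i.i.d.\ from $P_0$ and the distribution $M_1$ of the transcript when the data is drawn i.i.d.\ from $P_1$ are too close for any post-processing of the transcript to tell them apart with probability $2/3$; since distinguishing with probability $2/3$ requires $\tv{M_0}{M_1} \geq 1/3$ and hence $\sqhel{M_0}{M_1} = \Omega(1)$, it suffices to prove $\sqhel{M_0}{M_1} = O(n\eps^2\alpha^2)$. We measure closeness in squared Hellinger distance rather than in total variation or KL precisely because Hellinger is (sub)additive across the $n$ independent users even in the presence of interaction --- total variation can be amplified by adaptivity, and KL degrades under the $(\eps,\delta)\to(O(\eps),0)$ reduction below. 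The first step is to remove $\delta$: by (the appropriate strengthening of) Lemma~\ref{lem:app_to_pure}, obtained from~\citet{BNS18} and~\citet{CSUZZ18}, we may replace $\A$ by an $(O(\eps),0)$-locally private protocol $\A'$ whose transcript distributions under $P_0$ and $P_1$ are within total variation $\beta$ of those of $\A$; the hypothesis on $\delta$ in the theorem is exactly what makes $\beta$ small enough to be absorbed into the additive error above, and the $O(\eps)$ blow-up in the privacy parameter does not change the asymptotics.

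The core of the argument is a simulation in the style of~\citet{BGMNW16}. For each user $i\in[n]$ define a single local randomizer $R_i^{\mathrm{sim}}$ as follows: on input $x$, run the entire (now pure) fully interactive protocol $\A'$ with user $i$'s datum set to $x$ and each other user's datum drawn from a fixed reference distribution $P_\star$ chosen so that $\tv{P_\star}{P_0},\tv{P_\star}{P_1}\le\alpha$ (for instance $P_\star=\tfrac12(P_0+P_1)$), and output the resulting transcript. The key observation is that $R_i^{\mathrm{sim}}$ is an $O(\eps)$-local randomizer: for any two inputs $x,\tilde x$ and any fixed setting of the other users' data the two resulting transcript distributions are $e^{O(\eps)}$-pointwise-close, since that is exactly the statement that $\A'$ is $(O(\eps),0)$-locally private with respect to user $i$ --- and crucially this holds even though user $i$ may be queried many times over the run of $\A'$, and even if $\A'$ is not compositional. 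Writing $N^i_b$ for the distribution of $R_i^{\mathrm{sim}}(x)$ when $x\sim P_b$, the structural fact we need (the ``cut-and-paste''/direct-sum property of the Hellinger affinity for interactive protocols, adapted from~\citet{BGMNW16}) is, roughly, $$\sqhel{M_0}{M_1}\;\le\;C\sum_{i=1}^n \sqhel{N^i_0}{N^i_1}$$ for an absolute constant $C$. The essential point is that there is no $\sqrt{\cdot}$ loss: a naive hybrid argument that switches users one at a time would bound $\sum_i \sqhel{\cdot}{\cdot}$ by its square times $n$, yielding only the weaker bound $n=\Omega(1/(\eps\alpha))$.

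It then remains to bound each term. Since $R_i^{\mathrm{sim}}$ is an $O(\eps)$-local randomizer and $\tv{P_0}{P_1}=\alpha$, the log-likelihood-ratio estimate for locally private channels --- Lemma~\ref{lem:djws} (Lemmas 3 and 4 of~\citet{DJW13}) --- gives $\bigl|\log\bigl(N^i_0(z)/N^i_1(z)\bigr)\bigr| = O(\eps\alpha)$ pointwise, and feeding this into the $\chi^2$-divergence yields $\sqhel{N^i_0}{N^i_1}\le \kl{N^i_0}{N^i_1}=O(\eps^2\alpha^2)$. Thus $N^i_0,N^i_1$ is the ``simpler, noninteractive pair of transcript distributions'' whose KL divergence we control. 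Substituting into the display gives $\sqhel{M_0}{M_1}=O(n\eps^2\alpha^2)+O(\beta)$, and combined with the opening reduction this forces $n=\Omega(1/(\eps^2\alpha^2))$.

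The main obstacle is establishing the Hellinger direct-sum inequality for fully interactive protocols: one has to show that the correlations introduced both by adaptivity and by querying a single user repeatedly cannot make the ``information'' the $n$ users jointly carry about the hidden hypothesis super-add. This is exactly where routing through Hellinger (rather than TV or KL) and through the~\citet{BGMNW16} simulation --- in which one party reconstructs the whole transcript by drawing fresh reference samples for all the other parties --- is needed: the small total variation distance $\alpha$ between $P_0$ and $P_1$ controls the error of that reconstruction, and obtaining a single term per user, rather than one per (user, round), is precisely what lets the bound survive non-compositionality.
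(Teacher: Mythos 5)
Your overall strategy is the same as the paper's: reduce to bounding $\sqhel{M_0}{M_1}$, invoke the Hellinger direct-sum inequality of \citet{BGMNW16} (Lemma 2 there, Lemma~\ref{lem:bgmnw} in the paper), observe that letting one user simulate the entire fully interactive protocol with reference samples for everyone else yields a \emph{single} local randomizer applied to that user's datum, and then bound the per-user term by $O(\eps^2\alpha^2)$ via the \citet{DJW13} divergence bound. The minor cosmetic differences (using a reference distribution $P_\star=\tfrac12(P_0+P_1)$ rather than the paper's hybrids $\Pi_{\vec 0}$ vs.\ $\Pi_{\vec e_i}$ with the others drawn from $P_0$; citing Lemmas 3--4 of \citet{DJW13} rather than Theorem 1) do not change the argument.

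There is, however, one step that as written does not go through under the theorem's hypotheses: you apply the $(\eps,\delta)\to(O(\eps),0)$ conversion (Lemma~\ref{lem:app_to_pure}) to the \emph{entire fully interactive protocol} up front. That lemma is stated (and provable with the stated $\delta$ bound) only for sequentially interactive protocols, where each user invokes a single $(\eps,\delta)$-randomizer; for a fully interactive protocol the error in the conversion accumulates over all $T$ randomizer invocations, forcing roughly $\delta = \tilde o\left(\frac{\eps\beta}{\max(n,T)}\right)$, and $T$ can greatly exceed $n$. The theorem's hypothesis bounds $\delta$ only in terms of $n$, so your up-front conversion is not justified. The fix is exactly the paper's ordering: first form the simulation randomizer $R_i^{\mathrm{sim}}$ from the original $(\eps,\delta)$-private protocol --- it is a single $(\eps,\delta)$-local randomizer applied to $x_i$, by the same observation you already make --- and only then apply Lemma~\ref{lem:app_to_pure} to that one randomizer per user, absorbing the resulting total variation error $O(\eps^2\alpha^2)$ per term into the Hellinger bound via the triangle-type inequalities of Fact~\ref{fact:hell}. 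With that reordering your argument matches the paper's.
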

\begin{proof}
	Let $\Pi_{\vec 0}, \Pi_{\vec 1}$, and $\Pi_{\vec e_i}$ respectively denote the distribution over transcripts induced by protocol $\Pi$ when samples are drawn from $P_0$, $P_1$, and $x_i \sim P_1$ but the remaining $x_{i'} \sim P_0$. Let $h^2$ denote the square of the Hellinger distance, $\sqhel{f}{g} = 1 - \int_{\X} \sqrt{f(x)g(x)}dx$. We begin with Lemma~\ref{lem:bgmnw}, originally proven as Lemma 2 in~\citet{BGMNW16}.
	
\begin{lemma}
\label{lem:bgmnw}
	$\sqhel{\Pi_{\bar 0}}{\Pi_{\bar 1}} = O\left(\sum_{i=1}^n \sqhel{\Pi_{\bar 0}}{ \Pi_{\vec e_i}}\right)$.
\end{lemma}

Since our goal is now to bound these squared Hellinger distances, we will use a few facts collected below.

\begin{fact}
\label{fact:hell}
	For any distributions $f$, $g$, and $h$,
	\begin{enumerate}
		\item $\sqhel{f}{g} \leq 2(\sqhel{f}{h} + \sqhel{h}{g})$.
		\item $\sqhel{f}{g} \leq d_{TV}(f,g) \leq \sqrt{2}\hel{f}{g}$.
		\item $\sqhel{f}{g} \leq \tfrac{1}{2}\kl{f}{g}$.
	\end{enumerate}
\end{fact}

Choose an arbitrary term $i$ of the sum in Lemma~\ref{lem:bgmnw}. Suppose we have user $i$ simulate $\A$ using draws from $P_0$ for the inputs of other users and their input $x_i$ for input $i$. Since $\Pi$ is $(\eps,\delta)$-locally private, this simulation can be viewed as a single $(\eps,\delta)$-local randomizer applied to $x_i$. We can therefore use Lemma~\ref{lem:app_to_pure} to get a $(10\eps,0)$-local randomizer $\Pi'$ inducing distributions $\Pi_{\vec 0}'$ and $\Pi_{\vec e_i}'$ such that $\tv{\Pi_{\vec 0}'}{\Pi_{\vec 0}} \leq \eps^2\alpha^2$ and $\tv{\Pi_{\vec e_i}'}{\Pi_{\vec e_i}} \leq \eps^2\alpha^2$. Then,
\begin{align*}
	\sqhel{\Pi_{\vec 0}}{\Pi_{\vec e_i}} \leq&\; 2(\sqhel{\Pi_{\vec 0}}{\Pi_{\vec e_i}'} + \sqhel{\Pi_{\vec e_i}'}{\Pi_{\vec e_i}}) \\
	\leq&\; 4(\sqhel{\Pi_{\vec 0}}{\Pi_{\vec 0}'} + \sqhel{\Pi_{\vec 0}'}{\Pi_{\vec e_i}'}) + 2\sqhel{\Pi_{\vec e_i}'}{\Pi_{\vec e_i}} \\
	\leq&\; 4(\tv{\Pi_{\vec 0}}{\Pi_{\vec 0}'} + \sqhel{\Pi_{\vec 0}'}{\Pi_{\vec e_i}'}) + 2\tv{\Pi_{\vec e_i}'}{\Pi_{\vec e_i}} \\
	\leq&\; 6\eps^2\alpha^2 + 4\sqhel{\Pi_{\vec 0}'}{\Pi_{\vec e_i}'}
\end{align*}
where the first two inequalities follow from item 1 in Fact~\ref{fact:hell}, the third inequality follows from item 2, and the last inequality follows from our use of Lemma~\ref{lem:app_to_pure} above.

It remains to bound $\sqhel{\Pi_{\vec 0}'}{\Pi_{\vec e_i}'}$. By item 3 in Fact~\ref{fact:hell}, $4\sqhel{\Pi_{\vec 0}'}{\Pi_{\vec e_i}'} \leq 2\kl{\Pi_{\vec 0}'} {\Pi_{\vec e_i}'}$. Since the transcript distributions $\Pi_{\vec 0}'$ and $\Pi_{\vec e_i}'$ can be simulated by noninteractive $(10\eps,0)$-local randomizers, we can apply Theorem 1 from~\citet{DJW13}, restated for our setting as Lemma~\ref{lem:djw}.

\begin{lemma}
\label{lem:djw}
	Let $Q$ be an $(\eps,0)$-local randomizer and let $P_0$ and $P_1$ be distributions defined on common space $\cX$. Let $x_0 \sim P_0$ and $x_1 \sim P_1$. Then $$\kl{Q(x_0)}{Q(x_1)} + \kl{Q(x_1)}{Q(x_0)} \leq \min\{4,e^{2\eps}\}(e^\eps - 1)^2\tv{P_0}{P_1}^2.$$
\end{lemma}

\noindent Thus $$\kl{\Pi_{\vec 0}'}{\Pi_{\vec e_i}'} + \kl{\Pi_{\vec e_i}'}{\Pi_{\vec 0}'} = O(\eps^2 \cdot \tv{P_0}{P_1}^2) = O\left(\eps^2\alpha^2\right).$$ It follows that $\sqhel{\Pi_{\vec 0}'}{\Pi_{\vec e_i}'} = O(\eps^2\alpha^2)$. Moreover, since our original choice of $i$ was arbitrary, tracing back to Lemma~\ref{lem:bgmnw} yields $\sqhel{\Pi_{\vec 0}}{\Pi_{\vec 1}} = O(n\eps^2\alpha^2)$. By Fact~\ref{fact:hell}, $\sqhel{\Pi_{\vec 0}}{ \Pi_{\vec 1}} \geq \tfrac{1}{2}\tv{\Pi_{\vec 0}}{\Pi_{\vec 1}}^2 = \Omega(1)$. Thus $n = \Omega\left(\tfrac{1}{\eps^2\alpha^2}\right)$.
\end{proof}

\subsection{Compound Hypothesis Testing}
We now extend the reasoning of Section~\ref{sec:hyp} to \emph{compound} hypothesis testing. Here $P_0$ and $P_1$ are replaced by (disjoint) collections of discrete hypotheses $H_0$ and $H_1$ such that $$\inf_{(P,Q) \in H_0 \times H_1} \tv{P}{Q} \geq \alpha.$$ The goal is to determine whether samples are generated by a distribution in $H_0$ or one in $H_1$.

\begin{theorem}
\label{thm:compound}
	Let $H_0$ and $H_1$ be convex and compact sets of distributions over ground set $X$ such that $\inf_{(P,Q) \in H_0 \times H_1} \tv{P}{Q} \geq \alpha$. Then there exists noninteractive $(\eps,0)$-locally private protocol $\A$ that with probability at least $2/3$ distinguishes between $H_0$ and $H_1$ given $n =  \Omega\left(\tfrac{1}{\eps^2\alpha^2}\right)$ samples.
\end{theorem}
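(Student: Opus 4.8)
The plan is to reduce Theorem~\ref{thm:compound} to a single run of Algorithm~\ref{alg:lph}, with the statistic $\arg\max_{j}P_j(x_i)$ replaced by a carefully chosen \emph{witness test} $\phi^*\colon\X\to[0,1]$. The key observation is that the analysis of Theorem~\ref{thm:hp_ub} goes through essentially verbatim as soon as $\phi^*$ has the property that its expectation under every distribution in $H_1$ exceeds its expectation under every distribution in $H_0$ by a uniform additive gap of $\Omega(\alpha)$. Given such a $\phi^*$, the protocol is: the analyst fixes the data-independent threshold $\tau=\tfrac12\bigl(\inf_{Q\in H_1}\E{x\sim Q}{\phi^*(x)}+\sup_{P\in H_0}\E{x\sim P}{\phi^*(x)}\bigr)$, which it can compute since it knows $H_0$, $H_1$, and $\phi^*$; each user $i$ draws $b_i\sim\Ber{\phi^*(x_i)}$ with private randomness and publishes $y_i\sim\RR{b_i,\eps}$; and the analyst debiases the count of ones into an estimator $\hat N'$ with $\E{}{\hat N'}=n\cdot\E{x\sim\cD}{\phi^*(x)}$, outputting an element of $H_1$ if $\hat N'/n\ge\tau$ and of $H_0$ otherwise. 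This is manifestly noninteractive and, being nothing more than randomized response, $(\eps,0)$-locally private. Since $\hat N'$ is an affine function of a sum of $n$ independent bounded random variables with $\mathrm{Var}(\hat N')=O(n/\eps^2)$, Chebyshev's inequality gives $\lvert\hat N'/n-\E{x\sim\cD}{\phi^*(x)}\rvert=O(1/(\eps\sqrt n))$ with probability at least $2/3$; as the true mean is at most $\tau-\alpha/2$ under any $H_0$ distribution and at least $\tau+\alpha/2$ under any $H_1$ distribution, $n=\Omega(1/(\eps^2\alpha^2))$ samples suffice. This concentration step is identical to the one for Theorem~\ref{thm:hp_ub} in Appendix~\ref{sec:app_hpub}.

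The crux is constructing $\phi^*$, and here I would use a minimax argument. Consider the zero-sum game in which the analyst chooses a randomized test $\phi\in[0,1]^{\X}$ and nature chooses a pair $(P,Q)\in H_0\times H_1$, with payoff $\int\phi\,d(Q-P)=\E{x\sim Q}{\phi(x)}-\E{x\sim P}{\phi(x)}$ to the analyst. This payoff is affine in $\phi$ and affine in $(P,Q)$ (it is linear in $Q$ and in $P$ separately); the strategy set $[0,1]^{\X}$ is convex and, in the product topology, compact by Tychonoff's theorem, with $\phi\mapsto\int\phi\,d\mu$ continuous for every fixed finite signed measure $\mu$ on $\X$ (dominated convergence, using $\sum_{x}\lvert\mu(x)\rvert<\infty$); and $H_0\times H_1$ is convex and compact. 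Sion's minimax theorem therefore applies, and the value of the game equals $\inf_{(P,Q)\in H_0\times H_1}\sup_{\phi\in[0,1]^{\X}}\int\phi\,d(Q-P)=\inf_{(P,Q)\in H_0\times H_1}\tv{P}{Q}\ge\alpha$, the penultimate equality because the best response to a fixed pair is the indicator of $\{q>p\}$. Since $\phi\mapsto\inf_{(P,Q)}\int\phi\,d(Q-P)$ is upper semicontinuous and $[0,1]^{\X}$ is compact, the outer supremum is attained: there is a test $\phi^*$ with $\inf_{Q\in H_1}\E{x\sim Q}{\phi^*(x)}-\sup_{P\in H_0}\E{x\sim P}{\phi^*(x)}\ge\alpha$, which is exactly the uniform-gap property required above. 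If one wants $\phi^*$ to be $\{0,1\}$-valued so as to match Algorithm~\ref{alg:lph} literally, this follows from an extreme-point argument on the polytope of tests with gap $\ge\alpha$; alternatively one keeps the randomized test and lets each user realize $b_i\sim\Ber{\phi^*(x_i)}$, which changes nothing.

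I expect the main obstacle to be the technical setup of this minimax step when $\X$ is (countably) infinite --- choosing the topology so that the test space is compact while the payoff remains continuous in it, as sketched above. A cleaner-to-write alternative is to first reduce to finitely supported hypotheses: compactness of $H_0$ and $H_1$ in total variation forces uniform tightness, so one can truncate all distributions to a common finite support at the cost of an arbitrarily small ($\ll\alpha$) loss in the separation, apply the elementary von Neumann minimax theorem to the resulting finite bilinear game, and recover a $\phi^*$ with gap at least $\alpha-o(\alpha)$ by a compactness/limiting argument (which only affects constants in the final sample complexity). Everything else --- privacy, noninteractivity, the debiasing identity, and the Chebyshev concentration bound --- is routine and parallels the simple hypothesis testing case already handled by Theorems~\ref{thm:hp_ub} and~\ref{thm:hypothesis_lb}.
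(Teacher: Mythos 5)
Your proposal is correct and follows essentially the same route as the paper: both apply Sion's minimax theorem to the bilinear game between a $[0,1]$-valued test and a pair $(P,Q)\in H_0\times H_1$ to extract a single fixed statistic with a uniform gap of $\alpha$ (the paper phrases the test as a distribution $S$ over events, with $\phi^*(x)=\E{E\sim S}{\1_{x\in E}}$, which is the same object), and then run a one-shot noninteractive local randomizer on that statistic followed by the Theorem~\ref{thm:hp_ub}-style concentration bound. The only differences are cosmetic --- you privatize via randomized response on a Bernoulli realization where the paper adds $\Lap{1/\eps}$ noise to $\phi^*(x_i)$ directly, and you are somewhat more careful than the paper about the topology on the test space and the attainment of the supremum.
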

\begin{proof}
	Let $X$ be the ground set for distributions in $H_0$ and $H_1$, and consider the two-player zero-sum game $$\sup_{S \in \Delta(2^X)} \inf_{(P,Q) \in H_0 \times  H_1} \E{E \sim S}{P(E) - Q(E)}.$$ Here, the sup player chooses a distribution over events, and the inf player chooses distributions in $H_0$ and $H_1$. We will use (a simplified version of) Sion's minimax theorem~\cite{S58}.
	
\begin{lemma}[Sion's minimax theorem]
\label{lem:sion}
	For $f \colon A \times B \to \mathbb{R}$, if
	\begin{enumerate}
		\item for all $a \in A$ $f(a, \cdot)$ is continuous and concave on $B$,
		\item for all $b \in B$ $f(\cdot,b)$ is continuous and convex on $A$, and
		\item $A$ and $B$ are convex and $A$ is compact,
	\end{enumerate}
	then $$\sup_{b \in B} \inf_{a \in A} f(a,b) = \inf_{a \in A} \sup_{b \in B} f(a,b).$$
\end{lemma}

We first verify that the three conditions of Lemma~\ref{lem:sion} hold. Let $$f(S,(P,Q)) = \E{E \sim S}{P(E) - Q(E)}.$$ Linearity of expectation implies that $f(\cdot, (P,Q))$ is linear in $\Delta(2^X)$ and $f(S, \cdot)$ is linear in $H_0 \times H_1$. Therefore conditions 1 and 2 hold. Moreover, since $\Delta(2^X)$ is convex and we assumed $H_0$ and $H_1$ to be convex and compact --- properties which are both closed under Cartesian product --- condition 3 holds as well. As a result, $$\sup_{S \in \Delta(2^X)} \inf_{(P,Q) \in H_0 \times H_1} \E{E \sim S}{P(E) - Q(E)} = \inf_{(P,Q) \in H_0 \times H_1} \sup_{S \in \Delta(2^X)} \E{E \sim S}{P(E) - Q(E)} \geq \alpha$$ and there exists fixed distribution $S$ over events such that for all $(P,Q) \in H_0 \times H_1$, $$\E{E \sim S}{P(E) - Q(E)} \geq \alpha.$$

This leads to the following hypothesis testing protocol $\A$: for each $i \in [n]$, user $i$ computes $y_i = \E{E \sim S}{\1_{x_i \in E}}$ and publishes $y_i + \Lap{\tfrac{1}{\eps}}$. This protocol is immediately noninteractive, and since $y_i \in [0,1]$, this protocol is $(\eps,0)$-locally private over $\{x_i\}_{i=1}^n$. Finally, by the same analysis used to prove Theorem~\ref{thm:hp_ub} (replacing concentration of randomized responses with concentration of $\Lap{1}$ noise~\cite{CSS11}) it distinguishes between $H_0$ and $H_1$ with probability at least $2/3$ using $n = \Omega\left(\tfrac{1}{\eps^2\alpha^2}\right)$ samples.
\end{proof}

Since Theorem~\ref{thm:hypothesis_lb} still applies, this establishes that the above non-interactive protocol is also optimal.

\appendix
\section{Appendix}

\subsection{Properties of Differential Privacy}
\label{subsec:dp_props}
Differentially private computations enjoy two nice properties:
\begin{theorem}[Post Processing \cite{DMNS06}]
Let $A:\cX^*\rightarrow \mathcal{O}$ be any $(\eps,\delta)$-differentially private algorithm, and let $f:\mathcal{O}\rightarrow \mathcal{O'}$ be any function. Then the algorithm $f \circ A: \X^n\rightarrow \mathcal{O}'$ is also $(\eps,\delta)$-differentially private.
\end{theorem}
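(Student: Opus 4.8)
The plan is to reduce the statement to the definition of differential privacy for $A$ by pulling events back through $f$. First I would treat the case where $f$ is deterministic. Fix any pair of neighboring datasets $S \sim S' \in \X^n$ and any (measurable) event $\Omega' \subseteq \mathcal{O}'$. The key observation is that $(f\circ A)(S) \in \Omega'$ exactly when $A(S) \in f^{-1}(\Omega')$, so that $\P{}{(f\circ A)(S) \in \Omega'} = \P{}{A(S) \in f^{-1}(\Omega')}$, and likewise for $S'$. Since $f^{-1}(\Omega')$ is itself an event in $\mathcal{O}$, applying the $(\eps,\delta)$-differential privacy of $A$ to this event gives
$$\P{}{A(S) \in f^{-1}(\Omega')} \leq \exp(\eps)\,\P{}{A(S') \in f^{-1}(\Omega')} + \delta,$$
and rewriting both sides back in terms of $f\circ A$ yields exactly the required bound.

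Next I would extend to randomized $f$. The idea is to view a randomized post-processing map $f$ as a mixture over deterministic maps: let $\rho$ denote the internal randomness of $f$, drawn independently of $A$'s randomness, and for each fixed value $r$ let $f_r$ be the resulting deterministic function. Conditioning on $\rho = r$, the previous paragraph shows that $f_r \circ A$ is $(\eps,\delta)$-differentially private, i.e. for every event $\Omega'$,
$$\P{}{f_r(A(S)) \in \Omega'} \leq \exp(\eps)\,\P{}{f_r(A(S')) \in \Omega'} + \delta.$$
Taking expectation over $\rho$ on both sides and using that the bound holds pointwise in $r$ (and that $\rho \ind A$), the $\exp(\eps)$ factor and the additive $\delta$ pass through the expectation unchanged, giving $\P{}{f(A(S)) \in \Omega'} \leq \exp(\eps)\,\P{}{f(A(S')) \in \Omega'} + \delta$. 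Since $S \sim S'$ and $\Omega'$ were arbitrary, $f \circ A$ is $(\eps,\delta)$-differentially private.

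There is essentially no hard mathematical content here; the only thing to be careful about is the measure-theoretic bookkeeping — checking that $f^{-1}(\Omega')$ is measurable so that the privacy guarantee of $A$ applies to it, and, in the randomized case, being explicit that $f$'s coins are independent of $A$'s so that conditioning on $\rho$ does not disturb the distribution of $A(S)$ or $A(S')$. These are the points I would state carefully but not belabor.
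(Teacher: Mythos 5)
Your proof is correct. The paper itself gives no proof of this statement --- it is quoted as a known result with a citation to \cite{DMNS06} --- so there is nothing to compare against, but your argument (pulling the event $\Omega'$ back to $f^{-1}(\Omega')$ and invoking the privacy of $A$ on that event, then handling randomized $f$ by conditioning on its independent coins and averaging the pointwise bound) is exactly the standard proof, and both the deterministic and randomized cases are handled soundly; the linearity of the $(\eps,\delta)$ bound in the probabilities is what lets the expectation over $\rho$ pass through, as you note.
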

Post-processing implies that, for example, every \emph{decision} process based on the output of a differentially private algorithm is also differentially private.

\begin{theorem}[Basic Composition \cite{DMNS06}]
\label{thm:composition}
Let $A_1:\cX^*\rightarrow \mathcal{O}$, $A_2:\mathcal{O}\times \cX^*\rightarrow \mathcal{O}'$ be such that $A_1$ is $(\eps_1,\delta_1)$-differentially private, and $A_2(o,\cdot)$ is $(\eps_2,\delta_2)$-differentially private for every $o \in \mathcal{O}$. Then the algorithm $A:\cX^*\rightarrow \mathcal{O'}$ defined as $A(x) = A_2(A_1(x),x)$ is $(\eps_1+\eps_2,\delta_1+\delta_2)$-differentially private.
\end{theorem}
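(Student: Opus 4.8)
The plan is to fix a pair of neighboring inputs $S \sim S' \in \X^n$ and an arbitrary event $\Omega \subseteq \mathcal{O}'$, and to bound $\P{}{A(S) \in \Omega}$ by conditioning on the random output $o$ of the inner mechanism $A_1$. Writing $\P{}{A(S) \in \Omega} = \E{o \sim A_1(S)}{\P{}{A_2(o,S) \in \Omega}}$, I would first peel off the guarantee of $A_2$: for every fixed $o \in \mathcal{O}$, since $A_2(o,\cdot)$ is $(\eps_2,\delta_2)$-differentially private on the neighbors $S,S'$, we have $\P{}{A_2(o,S) \in \Omega} \leq e^{\eps_2}\,\P{}{A_2(o,S') \in \Omega} + \delta_2$. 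The surviving quantity $g(o) := \P{}{A_2(o,S') \in \Omega}$ is a measurable function of $o$ taking values in $[0,1]$, so the problem reduces to relating $\E{o \sim A_1(S)}{g(o)}$ to $\E{o \sim A_1(S')}{g(o)}$ using only the privacy of $A_1$.

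The key sub-step is the standard extension of differential privacy from events to bounded functions: if $A_1$ is $(\eps_1,\delta_1)$-differentially private, then for every measurable $g:\mathcal{O}\to[0,1]$ we have $\E{o\sim A_1(S)}{g(o)} \leq e^{\eps_1}\,\E{o\sim A_1(S')}{g(o)} + \delta_1$. This follows from the layer-cake identity $\E{}{g} = \int_0^1 \P{}{g \geq t}\,dt$ by applying the $(\eps_1,\delta_1)$ guarantee to each level set $\{o : g(o)\geq t\}$ and integrating over $t\in[0,1]$. Chaining the two steps gives a bound of the form $\P{}{A(S)\in\Omega} \leq e^{\eps_1+\eps_2}\,\P{}{A(S')\in\Omega}$ plus an additive $\delta$-error; when $\delta_1=\delta_2=0$ this error vanishes, and since the whole argument is symmetric in $S$ and $S'$, this already establishes $(\eps_1+\eps_2,0)$-differential privacy in the pure case, which is the heart of the statement.

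The delicate part is getting exactly $\delta_1+\delta_2$: the naive chaining above produces $e^{\eps_2}\delta_1 + \delta_2$ (or $\delta_1 + e^{\eps_1}\delta_2$, depending on the order in which the two guarantees are applied). To recover the additive bound, my plan is to route through the equivalent ``bad-event'' description of approximate differential privacy --- the fact that an $(\eps,\delta)$-private mechanism, restricted to any neighboring pair, can be coupled to an $\eps$-private mechanism after discarding probability mass at most $\delta$. Applying this to $A_1$ at $(S,S')$ and to each $A_2(o,\cdot)$ at $(S,S')$, one performs the clean pure-DP multiplication on the good parts and absorbs the two discarded pieces additively by a union bound, giving $(\eps_1+\eps_2,\delta_1+\delta_2)$-differential privacy (again in both directions, by symmetry). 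I expect this $\delta$-bookkeeping --- making the bad-event decomposition precise and checking that the discarded mass really adds rather than multiplies --- to be the only real obstacle; everything else is the one-line conditioning argument together with the routine layer-cake extension.
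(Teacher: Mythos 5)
The paper does not actually prove this statement: Theorem~\ref{thm:composition} is stated in Appendix~\ref{subsec:dp_props} purely as imported background, with a citation to \citet{DMNS06}, so there is no in-paper argument to compare yours against. Judged on its own, your proposal is correct and is the standard proof. The conditioning step, the observation that $g(o)=\P{}{A_2(o,S')\in\Omega}$ is a $[0,1]$-valued function of $o$, and the layer-cake extension of the $(\eps_1,\delta_1)$ guarantee from events to bounded functions together give a complete proof in the pure case, and you are right that naive chaining yields $e^{\eps_2}\delta_1+\delta_2$ rather than $\delta_1+\delta_2$. Your fix --- routing through the mixture/coupling characterization of $(\eps,\delta)$-privacy (each of $A_1(S),A_1(S')$ and $A_2(o,S),A_2(o,S')$ decomposes as $(1-\delta)$ times an $\eps$-max-divergence-close pair plus $\delta$ times a remainder), multiplying the ratios on the good components and absorbing the discarded mass additively --- does yield exactly $(\eps_1+\eps_2,\delta_1+\delta_2)$. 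The only caveat is that this characterization is itself a nontrivial lemma (it is essentially Lemma~3.17 of Dwork--Roth) that you invoke rather than prove; a fully self-contained write-up would need to include it. Note also that everywhere this theorem is used in the present paper (e.g., in the privacy claim of Lemma~\ref{lem:rs}) one has $\delta_1=\delta_2=0$, so your first, simpler argument already covers all of the paper's actual uses.
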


\subsection{Proof of Theorem~\ref{thm:hp_ub}}
\label{sec:app_hpub}
\begin{proof}
	For $j \in \{0,1\}$, let $N_j$ denote the (unknown) true count of 0 and 1 responses, i.e. $N_j = |\{x_i \mid \arg \max_{j' \in \{0,1\}} P_{j'}(x_i) = j\}|$. Then for both $j$, $\E{}{\hat N_j} = \frac{N_j(e^\eps-1) + n}{e^\eps + 1}$. By a Chernoff bound, with high probability $|\hat N_j - \tfrac{N_j(e^\eps-1) + n}{e^\eps+1}| = O(\sqrt{n})$. Then since $N_j' = \tfrac{e^\eps+1}{e^\eps-1} \cdot \left(\hat N_j - \tfrac{n}{e^\eps-1} \right)$ we get $|\hat N_j' - N_j| = O\left(\frac{e^{\eps+1}}{e^\eps-1} \cdot \sqrt{n}\right) = O\left(\frac{\sqrt{n}}{\eps}\right)$. It is therefore sufficient that $\tfrac{1}{\eps\sqrt{n}} = O(\alpha)$ to distinguish between $P_0$ and $P_1$, which implies the claim.
\end{proof}

\subsection{High Probability Sample Complexity from Theorem~\ref{thm:main}}
\label{sec:app_highprob}

We first prove a multiplicative Azuma-Hoeffding Inequality which will drive the high probability bound.
\begin{lemma} [Multiplicative Azuma-Hoeffding Inequality]
\label{lem:ah}
Let $(\gamma_t)_{t=1}^{T}, \gamma_{t} \in [0,1]$ a collection of dependent random variables, and let $(\F_t)_{t=1}^{T}$ a filtration such that $\sigma(\gamma_1, \ldots \gamma_{t-1}) \subset \F_{t-1}$. Suppose $\forall t, \E{}{\gamma_t \mid \F_{t-1}} \leq \mu_t$. Then if w.p. $1, \; \sum_{t}\mu_t \leq \mu,$ we have for any $\delta \in [e^{-3/4\mu}, 1]$:
$$\P{}{\sum_{t=1}^{T}\gamma_t > \sqrt{3\mu \log(1/\delta)} + \mu} \leq \delta$$
\end{lemma}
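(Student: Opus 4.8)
The plan is to establish the multiplicative Azuma-Hoeffding bound via the standard exponential-moment (Chernoff-style) method applied to a supermartingale, following the usual template but taking care that the bounds on $\gamma_t$ are additive while the deviation we seek is stated relative to the mean $\mu$. First I would introduce the exponential supermartingale: fix a parameter $\lambda > 0$ to be chosen later, and consider the process $M_t = \exp\!\left(\lambda \sum_{s \le t} \gamma_s - \lambda \sum_{s \le t} c_s\right)$ for a suitable choice of ``inflated'' conditional means $c_s \ge \mu_s$ that absorb the second-order term. The key elementary estimate is that for $\gamma \in [0,1]$ and any $\lambda$, $\E{}{e^{\lambda \gamma} \mid \F_{t-1}} \le 1 + (e^\lambda - 1)\E{}{\gamma_t \mid \F_{t-1}} \le \exp\!\left((e^\lambda-1)\mu_t\right)$, using convexity of $x \mapsto e^{\lambda x}$ on $[0,1]$ to linearly interpolate, then $1+x \le e^x$. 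This shows that $M_t$ with $c_s = \frac{e^\lambda - 1}{\lambda}\mu_s$ is a supermartingale with $M_0 = 1$, hence $\E{}{M_T} \le 1$.

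Next I would apply Markov's inequality: for any threshold $a$, $\P{}{\sum_t \gamma_t > a} \le \P{}{M_T > e^{\lambda a - (e^\lambda-1)\mu}} \le e^{-\lambda a + (e^\lambda - 1)\mu}$, using $\sum_t \mu_t \le \mu$ almost surely (and noting $(e^\lambda-1)/\lambda > 0$ so larger $\mu$ only weakens the exponent in the right direction). It then remains to optimize $\lambda$ and choose $a = \sqrt{3\mu\log(1/\delta)} + \mu$. Writing $t = \log(1/\delta)$, one wants $e^{-\lambda a + (e^\lambda-1)\mu} \le \delta = e^{-t}$, i.e. $\lambda a - (e^\lambda - 1)\mu \ge t$. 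The natural choice is to take $\lambda$ small, use $e^\lambda - 1 \le \lambda + \lambda^2$ (valid for, say, $\lambda \le 1$), so the left side is at least $\lambda a - \lambda\mu - \lambda^2\mu = \lambda(a - \mu) - \lambda^2\mu = \lambda\sqrt{3\mu t} - \lambda^2 \mu$. Setting $\lambda = \sqrt{t/(3\mu)}$ (which is $\le 1$ exactly when $\delta \ge e^{-3\mu}$, comfortably implied by the hypothesis $\delta \ge e^{-3\mu/4}$ — I would double-check the constant here, as the stated range $[e^{-3\mu/4},1]$ is what controls whether $\lambda \le 1$ and whether the quadratic correction is dominated) gives $\lambda\sqrt{3\mu t} - \lambda^2\mu = t - t/3 = 2t/3$. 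This is not quite $\ge t$, so I would instead be slightly more careful: use $e^\lambda - 1 \le \lambda + \lambda^2/2 \cdot e^\lambda \le \lambda(1 + \lambda)$ only in the regime where $\lambda$ is genuinely small, or alternatively absorb the slack by choosing the constant $3$ in $\sqrt{3\mu\log(1/\delta)}$ generously (the statement has a $3$, not a $2$, precisely to leave room). Concretely, with $a - \mu = \sqrt{3\mu t}$ and $\lambda = \sqrt{t/(3\mu)} \le 1$, I get $\lambda(a-\mu) = t$ exactly from the first-order term, and the correction $-\lambda^2 \mu = -t/3$ must be beaten; so the honest route is to pick $\lambda$ a bit smaller, e.g. $\lambda = c\sqrt{t/(3\mu)}$ with $c<1$ tuned so that $c - c^2/3 \ge$ whatever is needed, or — cleanest — to observe that the condition $\delta \ge e^{-3\mu/4}$ forces $t \le 3\mu/4$, hence $\lambda \le 1/2$, hence $e^\lambda - 1 \le \lambda(1 + \lambda/2 + \dots) \le \lambda \cdot \frac{e^{1/2}-1}{1/2} < 1.3\lambda$, and then redo the optimization with this sharper constant to close the gap.

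The main obstacle I anticipate is precisely this constant-chasing at the end: making the quadratic correction term in $e^\lambda - 1$ provably smaller than the linear gain, which is exactly why the hypothesis restricts $\delta$ to $[e^{-3\mu/4}, 1]$ — this upper-bounds $\log(1/\delta)$ by $\tfrac{3\mu}{4}$ and thereby upper-bounds the optimal $\lambda$ by a constant, keeping the Taylor expansion of $e^\lambda$ under control. Everything else (the supermartingale construction, Markov's inequality, the i.i.d.-free handling of dependence via the filtration, and the almost-sure bound $\sum \mu_t \le \mu$) is routine. I would present the argument in the order: (i) exponential supermartingale lemma, (ii) Markov step, (iii) choice of $\lambda$ and verification of the constants using $\log(1/\delta) \le 3\mu/4$.
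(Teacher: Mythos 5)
Your overall strategy is exactly the paper's: bound $\E{}{e^{\lambda\gamma_t}\mid\F_{t-1}}\le 1+(e^\lambda-1)\mu_t\le e^{(e^\lambda-1)\mu_t}$ by convexity, induct to get $\E{}{e^{\lambda S_T}}\le e^{(e^\lambda-1)\mu}$, and finish with Markov. The problem is that you never actually close the final optimization, and you say so yourself: with $a=\mu+\sqrt{3\mu t}$, $t=\log(1/\delta)$, and $\lambda=\sqrt{t/(3\mu)}$, your bound $e^\lambda-1\le\lambda+\lambda^2$ yields exponent $t-t/3=2t/3$, not $t$. This is not fixable by tuning the constant in $\lambda=c\sqrt{t/(3\mu)}$ while keeping $e^\lambda-1\le\lambda+\lambda^2$: the exponent becomes $t(c-c^2/3)$, whose maximum over $c$ is $3t/4<t$. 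Your alternative fix, $e^\lambda-1\le 1.3\lambda$ for $\lambda\le 1/2$, is worse: it replaces the first-order term $\lambda\mu$ by $1.3\lambda\mu$, costing $0.3\lambda\mu$, which exceeds the available slack whenever $\mu\gg t$ (and already fails at $\mu=4t/3$). So as written the proposal proves a weaker statement (e.g.\ with $\sqrt{3\mu t}$ replaced by a larger deviation), not the lemma.

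The gap is closable within your framework, but it requires a sharper second-order estimate: keep the linear term exact and bound only the remainder, e.g.\ $e^\lambda-1-\lambda\le\tfrac{2}{3}\lambda^2$ for $\lambda\le 3/4$, and take $\lambda=\tfrac{3}{2}\sqrt{t/(3\mu)}$ (which is $\le 3/4$ precisely because $t\le\tfrac{3}{4}\mu$); the exponent is then $t(\tfrac32-\tfrac{2}{9}\cdot\tfrac94)=t$. The paper sidesteps this constant-chasing entirely by using the classical parametrization $\lambda=\log(1+\eps)$, $a=(1+\eps)\mu$, which gives $\P{}{S_T\ge(1+\eps)\mu}\le e^{\mu\phi(\eps)}$ with $\phi(z)=z-(1+z)\log(1+z)$, and then invokes $\phi(z)\le -z^2/3$ for $z\in[0,3/2]$; setting $\eps=\sqrt{3\log(1/\delta)/\mu}$ gives the threshold and the failure probability exactly, and $\eps\le 3/2$ is exactly the hypothesis $\delta\ge e^{-3\mu/4}$. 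You should either adopt that route or carry out the sharper Taylor bound explicitly; the proof is not complete without one of them.
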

\begin{proof}
By convexity of $e^{l\gamma_t}, \gamma_t \in [0, 1],  \E{}{\gamma_t \mid \F_{t-1}} \leq \mu_t, \; \forall t, l$:
$$ \E{}{e^{l \gamma_t}\mid \F_{t-1}} \leq 1 + (e^{l}-1)\E{}{\gamma_t|\F_{t-1}} \leq 1 + (e^{l}-1)\mu \leq  e^{(e^{l}-1)\mu_t}$$
If we define $S_j = \sum_{t=1}^{j}\gamma_t$, then:
$$\E{}{e^{lS_j}} = \E{\F_{j-1}}{\E{}{e^{lS_j} \mid \F_{j-1}}} = \E{\F_{j-1}}{e^{lS_{j-1}}\mid F_{j-1}}\E{}{e^{l\gamma_j} \mid \F_{j-1}} \leq \E{}{e^{lS_{j-1}}}e^{(e^{l}-1)\mu_t} $$
Inducting on $j$, we have:
$$ \E{}{e^{lS_T}} \leq e^{(e^l-1)\sum_t \mu_t} \leq e^{(e^l-1)\mu} $$
For $\eps > 0$,  taking $l = \log(1+\eps), a = (1+\eps)\mu$ and using Markov's inequality:
$$\P{}{S_{T} \geq a} \leq e^{-(1+\eps)\mu l }\E{}{e^{lS}} \leq e^{-(1+\eps)\mu l + \mu(e^{l}-1)} = e^{-\mu \phi(\eps)},$$
where $\phi(z) = z - (1+z)\log(1+z)$. Since $\phi(z) \leq -z^2/3$ for $z \in [0, 3/2]$, we get: 
$$\P{}{S_{T} \geq (1+\eps)\mu}  \leq e^{-\mu \eps^2/3}$$
Setting $\eps = \sqrt{\frac{3\log(1/\delta)}{\mu}}$ gives the desired bound. Note that the condition $\eps \in [0, 3/2]$ forces $\delta \geq e^{-3/4\mu}$.

\end{proof}

\begin{proof}
There are at most $n$ users drawn in line $18$ of $\mathsf{Reduction}$, hence it suffices to bound with high probability the number of users drawn during rejection sampling steps in line $13$. For a given user $i$ drawn during a rejection sampling step, the sample complexity on rounds where $i$ is selected can be written as $\sum_{t: i_t = i}^{T}\gamma_tN_t$, where $\gamma_t \sim \mathsf{Ber}(\frac{e^{-\epsilon_{t}}-1}{e^{-\eps}-1})$, $N_t \stackrel{ind}{\sim} \mathsf{Geom}(p_{t})$ where $p_{t} \geq \frac{e^{-2\epsilon}}{2}, \eps_t \leq \eps$ are random variables that depend on $\Pi_{< t}$. Hence the total sample complexity over the rejection sampling rounds can be written as:
$$S = \sum_{t=1}^{T}\gamma_tN_t $$
First consider $\sum_{t = 1}^{T}\gamma_t$. Let $\F_{t}$ be the $\sigma$-algebra generated as $\F_t = \sigma(\Pi_{< t}, \eps_t, (\gamma_l)_{l=1}^{t-1})$. Then $\E{}{\gamma_t \mid \F_{t-1}} = \frac{e^{-\eps_t}-1}{e^{-\eps}-1} = \mu_t$.  Define $\mu = \sum_t \mu_t \leq \frac{n k \epsilon}{1-e^{-\eps}}$. Hence by Lemma~\ref{lem:ah}, with probability $1-\delta/2$, for $\delta \geq 2e^{-\frac{3}{4}\mu}$:

$$\P{}{\sum_{t=1}^{T}\gamma_t > \sqrt{3\mu \log(2/\delta)} + \mu} \leq \frac{\delta}{2}$$

Let $E_{\gamma}$ be the above event $\big\{\sum_{t}\gamma_t  \leq \sqrt{3\mu \log(2/\delta)} + \mu\big \}$. Then for any $t$, $$\mathbb{P}[S \geq t| E_{\gamma}] \leq \mathbb{P}[Z \geq t],$$
 where $Z = \sum_{t = 1}^{K}N_t', K = \sqrt{3\mu \log(2/\delta)} + \mu$, and $N_t' \stackrel{iid}{\sim} \mathsf{Geom}(\frac{e^{-\epsilon}}{2})$. Let $\mu' = \mathbb{E}[Z] = 2e^{\epsilon}(\sqrt{3\mu \log(2/\delta)} + \mu)$. By Theorem $2.1$ in \cite{boundexp} for any $t \geq 1$:

$$ \mathbb{P}[Z \geq t\mu'] \leq e^{\frac{-e^{-\epsilon}}{2}\mu' (t-1-\log t)}$$
Setting $t = 2(\frac{\log(2/\delta)2e^{\epsilon}}{\mu'} + 1)$ gives $Z \leq 2(\log(2/\delta)2e^{\epsilon} + \mu')$ with probability at least $1-\delta/2$. Hence $\P{}{S \geq 2(\log(2/\delta)2e^{\epsilon} + \mu')|E_{\gamma}} \leq \frac{\delta}{2}$.
Finally,
$$\P{}{S \geq 2(\log(2/\delta)2e^{\epsilon}+ \mu'} \leq \P{}{S \geq 2(\log(2/\delta)2e^{\epsilon}+ \mu'|E_{\gamma}}\P{}{E_{\gamma}} + (1-\P{}{E_{\gamma}}) \leq \frac{\delta}{2} + \frac{\delta}{2} = \delta$$
Substituting in the expression for $\mu'$ gives $S  = O(nk + \sqrt{nk\log \frac{1}{\delta}})$ with probability $1-\delta$,
 as desired.

 \end{proof}


\section{Information Theory}
\label{sec:info}

We briefly review some standard facts and definitions from information theory,
starting with entropy. Throughout, our $\log$ is base $e$.

\begin{definition}
The \emph{entropy} of a random variable $X$, denoted by $H(X)$, is defined as $H(X) = \sum_x \Pr[X = x] \log(1 / \Pr[X = x])$, and the \emph{conditional entropy} of random variable $X$ conditioned on random variable $Y$ is defined as $H(X|Y) = \mathbb{E}_y[H(X|Y = y)]$. 
\end{definition}

Next, we can use entropy to define the mutual information between two random
variables.

\begin{definition}
\label{def:muinfo}
The \emph{mutual information} between two random variables $X$ and $Y$ is defined as $I(X;Y) = H(X) - H(X|Y) = H(Y) - H(Y|X)$, and the \emph{conditional mutual information} between $X$ and $Y$ given $Z$ is defined as $I(X;Y|Z) = H(X|Z) - H(X|YZ) = H(Y|Z) - H(Y|XZ)$. 
\end{definition}

\begin{fact}\label{fact:cr}
Let $X_1,X_2,Y,Z$ be random variables, we have $I(X_1X_2;Y|Z) = I(X_1;Y|Z) + I(X_2;Y|X_1Z)$.
\end{fact}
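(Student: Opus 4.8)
The plan is to reduce the identity to the chain rule for conditional entropy, which itself follows immediately from the definitions in Section~\ref{sec:info}. First I would rewrite the left-hand side using the entropy characterization of conditional mutual information from Definition~\ref{def:muinfo}, treating $(X_1,X_2)$ as a single random variable: $I(X_1X_2;Y|Z) = H(X_1X_2|Z) - H(X_1X_2|YZ)$.

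Next I would apply the chain rule for conditional entropy, $H(X_1X_2|W) = H(X_1|W) + H(X_2|X_1W)$, once with $W = Z$ and once with $W = (Y,Z)$. This chain rule is a one-line consequence of the definition of conditional entropy: expanding $H(X_1X_2|W=w)$ and using the factorization $\Pr[X_1=x_1, X_2=x_2 \mid W=w] = \Pr[X_1=x_1 \mid W=w]\cdot\Pr[X_2=x_2 \mid X_1=x_1, W=w]$, the logarithm of the joint probability splits into a sum, and taking expectations over $w$ (and then over $x_1$) gives $H(X_1|W) + H(X_2|X_1W)$.

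Substituting both expansions, $I(X_1X_2;Y|Z) = (H(X_1|Z) + H(X_2|X_1Z)) - (H(X_1|YZ) + H(X_2|X_1YZ))$. Regrouping as $(H(X_1|Z) - H(X_1|YZ)) + (H(X_2|X_1Z) - H(X_2|X_1YZ))$ and applying Definition~\ref{def:muinfo} once more --- the first bracket is $I(X_1;Y|Z)$, while the second is $I(X_2;Y|X_1Z)$, obtained by reading the definition of $I(\cdot\,;\cdot\mid\cdot)$ with conditioning variable $(X_1,Z)$ --- yields the claim. There is essentially no obstacle here; the only point requiring care is the bookkeeping of conditioning sets, so that in the final step the definition of conditional mutual information is invoked with $(X_1,Z)$, rather than $Z$ alone, playing the role of the context variable in the $X_2$ term.
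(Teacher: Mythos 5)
Your argument is correct, and it is the standard textbook derivation: expand $I(X_1X_2;Y|Z)$ via the entropy characterization, apply the chain rule for conditional entropy with conditioning sets $Z$ and $(Y,Z)$, and regroup. The paper states this fact without proof (it is cited as a standard information-theoretic identity), so there is no proof to compare against; your derivation fills that gap correctly, and your closing caution about reading the $X_2$ term with $(X_1,Z)$ as the conditioning variable is exactly the right bookkeeping point.
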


\begin{definition}
The \emph{Kullback-Leibler divergence} between two random variables $X$ and $Y$ is defined as $\kl{X}{Y} = \sum_x \Pr[X = x] \log(\Pr[X = x] / \Pr[Y = x])$. 
\end{definition}

\begin{fact}
\label{fact:div}
Let $X,Y,Z$ be random variables, we have $$I(X;Y|Z) = \mathbb{E}_{x,z}[\kl{(Y| X = x, Z = z)}{(Y| Z = z)}].$$
\end{fact}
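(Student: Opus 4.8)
The plan is to prove Fact~\ref{fact:div} by unfolding the definitions of conditional entropy and conditional mutual information and regrouping terms over a common index set. Starting from Definition~\ref{def:muinfo}, I would write $I(X;Y|Z) = H(Y|Z) - H(Y|XZ)$, and then expand each conditional entropy explicitly: $H(Y|Z) = \sum_z \Pr[Z=z] \sum_y \Pr[Y=y \mid Z=z] \log \frac{1}{\Pr[Y=y\mid Z=z]}$ and $H(Y|XZ) = \sum_{x,z} \Pr[X=x,Z=z] \sum_y \Pr[Y=y\mid X=x,Z=z] \log \frac{1}{\Pr[Y=y \mid X=x, Z=z]}$.

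The next step is to rewrite $H(Y|Z)$ over the same $(x,y,z)$ index set using the law of total probability, $\Pr[Z=z]\Pr[Y=y\mid Z=z] = \sum_x \Pr[X=x,Z=z]\Pr[Y=y\mid X=x,Z=z]$, so that both entropy sums have the common weight $\Pr[X=x,Z=z]\Pr[Y=y\mid X=x,Z=z]$. Subtracting, the two logarithms combine into $\log\frac{\Pr[Y=y\mid X=x,Z=z]}{\Pr[Y=y\mid Z=z]}$, yielding $I(X;Y|Z) = \sum_{x,z}\Pr[X=x,Z=z]\sum_y \Pr[Y=y\mid X=x,Z=z]\log\frac{\Pr[Y=y\mid X=x,Z=z]}{\Pr[Y=y\mid Z=z]}$. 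Finally I would recognize the inner sum over $y$ as exactly $\kl{(Y\mid X=x,Z=z)}{(Y\mid Z=z)}$ by the definition of KL divergence, and the outer $\Pr[X=x,Z=z]$-weighted sum as $\mathbb{E}_{x,z}[\,\cdot\,]$ with respect to the joint law of $(X,Z)$, which is the claimed identity.

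Since this is a routine manipulation, there is no deep obstacle; the only point that needs care — and the closest thing to one — is the handling of degenerate terms. When $\Pr[X=x,Z=z]=0$ the corresponding terms never appear in the sum, and when $\Pr[Y=y\mid Z=z]=0$ the conditional $\Pr[Y=y\mid X=x,Z=z]$ is forced to $0$ as well, so under the usual conventions ($0\log 0 = 0$ and $0\log\frac{0}{0}=0$) those terms vanish and the reordering of summations remains valid. I would remark on this explicitly rather than grinding through it.
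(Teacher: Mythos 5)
Your derivation is correct: expanding $I(X;Y|Z) = H(Y|Z) - H(Y|XZ)$, rewriting $H(Y|Z)$ over the joint $(x,y,z)$ index set, and combining the logarithms into the KL integrand is exactly the standard argument, and your handling of the zero-probability terms is the right caveat. The paper states Fact~\ref{fact:div} without proof (it is a textbook identity for discrete random variables), so there is no alternative route to compare against; your write-up simply supplies the routine verification the paper omits.
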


\begin{lemma}[Pinsker's inequality]
Let $X,Y$ be random variables, 
$$\sqrt{2\kl{X}{ Y}} \geq \sum_x |\Pr[X= x] - \Pr[Y=x]|$$.
\end{lemma}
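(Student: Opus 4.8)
The plan is to establish the equivalent form $\kl{X}{Y} \ge \tfrac12\big(\sum_x |\Pr[X=x]-\Pr[Y=x]|\big)^2$, after which the stated inequality follows by taking (nonnegative) square roots. Write $p(x)=\Pr[X=x]$ and $q(x)=\Pr[Y=x]$; if $q(x)=0$ while $p(x)>0$ for some $x$, then $\kl{X}{Y}=+\infty$ and the claim is trivial, so I may assume $q$ is supported wherever $p$ is.

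The first step is to reduce to a two-point distribution. Let $A=\{x: p(x)\ge q(x)\}$ and put $a=p(A)$, $b=q(A)$, so $a\ge b$. Since $p$ and $q$ both sum to $1$, the positive and negative parts of $p-q$ carry equal total mass, so $\sum_x|p(x)-q(x)|=2(a-b)$. Applying the log-sum inequality --- itself a one-line consequence of the convexity of $t\mapsto t\log t$ via Jensen's inequality, which I would include --- separately to the index sets $A$ and $A^c$ gives $\sum_{x\in A}p(x)\log\tfrac{p(x)}{q(x)}\ge a\log\tfrac ab$ and $\sum_{x\in A^c}p(x)\log\tfrac{p(x)}{q(x)}\ge(1-a)\log\tfrac{1-a}{1-b}$; summing these yields $\kl{X}{Y}\ge a\log\tfrac ab+(1-a)\log\tfrac{1-a}{1-b}$. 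Hence it suffices to prove the scalar inequality $a\log\tfrac ab+(1-a)\log\tfrac{1-a}{1-b}\ge 2(a-b)^2$ for all $0<b\le a\le 1$, with the degenerate cases $b=0$ (infinite left side) and $a=b$ (which forces $p=q$) being immediate.

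For the scalar inequality I would fix $a$ and set $g(b)=a\log\tfrac ab+(1-a)\log\tfrac{1-a}{1-b}-2(a-b)^2$ on $b\in(0,a]$. Then $g(a)=0$, and differentiating in $b$ gives $g'(b)=-\tfrac ab+\tfrac{1-a}{1-b}+4(a-b)=(a-b)\big(4-\tfrac{1}{b(1-b)}\big)$. Since $b(1-b)\le\tfrac14$, the factor $4-\tfrac{1}{b(1-b)}$ is nonpositive while $a-b\ge 0$, so $g'(b)\le 0$ throughout $(0,a]$; hence $g$ is nonincreasing there and $g(b)\ge g(a)=0$. This monotonicity computation is the crux of the argument and its only real obstacle --- the reduction to two points and the log-sum step are routine. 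Substituting $2(a-b)=\sum_x|p(x)-q(x)|$ into the resulting bound $\kl{X}{Y}\ge 2(a-b)^2$ then gives $\kl{X}{Y}\ge\tfrac12\big(\sum_x|p(x)-q(x)|\big)^2$, and taking square roots completes the proof.
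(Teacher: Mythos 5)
Your proof is correct: the reduction to the two-point case via the set $A=\{x:p(x)\ge q(x)\}$ and the log-sum inequality is valid, and the calculus verification of the binary inequality (the factorization $g'(b)=(a-b)\bigl(4-\tfrac{1}{b(1-b)}\bigr)$ together with $b(1-b)\le\tfrac14$) checks out, with the degenerate cases handled appropriately. The paper itself states Pinsker's inequality as a standard fact without proof, so there is no argument to compare against; your write-up is a correct, self-contained version of the classical textbook derivation.
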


\bibliographystyle{plainnat}
\bibliography{si_fi}

\end{document}